\def\titlerunning#1{\gdef\titrun{#1}}
\def\author#1{\gdef\autrun{\def\and{\unskip, }#1}\gdef\@author{#1}}
\def\address#1{{\def\and{\\\hspace*{18pt}}\renewcommand{\thefootnote}{}%
\footnote {#1}}%
\markboth{\autrun}{\titrun}}
\def\email#1{e-mail: #1}
\def\subjclass#1{{\renewcommand{\thefootnote}{}%
\footnote{\emph{Mathematics Subject Classification (2010):} #1}}}
\def\keywords#1{\par\medskip
\noindent\textbf{Keywords.} #1}
\numberwithin{equation}{section}
\newtheorem{theorem}{Theorem}[section]
\newtheorem{lemma}[theorem]{Lemma}
\newtheorem{proposition}[theorem]{Proposition}
\newtheorem{remark}[theorem]{Remark}
\newtheorem{corollary}[theorem]{Corollary}
\newtheorem{definition}[theorem]{Definition}
\newtheorem{example}[theorem]{Example}
\DeclareMathOperator*{\diam}{diam}
\DeclareMathOperator*{\diag}{diag}
\DeclareMathOperator*{\id}{id}
\newcommand{\tr}{\text{Trace}}
\newcommand{\R}{\mathbb{R}}
\newcommand{\N}{\mathbb{N}}
\newcommand{\Xcal}{\mathcal{X}}
\newcommand{\Ycal}{\mathcal{Y}}
\newcommand{\Fcal}{\mathcal{F}}
\newcommand{\Dcal}{\mathcal{D}}
\newcommand{\Gcal}{\mathcal{G}}
\newcommand{\Rcal}{\mathcal{R}}
\newcommand{\Hcal}{\mathcal{H}}
\newcommand{\Acal}{\mathcal{A}}
\newcommand{\lip}{\mathrm{Lip}}
\newcommand{\relu}{\mathrm{ReLU}}
\newcommand{\sigmoid}{\mathrm{Sig}}
\newcommand{\TV}{\mathrm{TV}}
\newcommand{\Hcalode}{\mathcal{H}_{\text{ode}}}
\newcommand{\ch}{\mathrm{CH}}
\newcommand{\chbar}{\overline{\mathrm{CH}}}
\newcommand{\F}{\mathcal F}
\renewcommand{\i}{\mathbf{i}}
\begin{document}


\baselineskip=17pt


\titlerunning{Deep Learning via Dynamical Systems: An Approximation Perspective}

\title{Deep Learning via Dynamical Systems: An Approximation Perspective}

\author{
    Qianxiao Li \and Ting Lin \and Zuowei Shen
}

\date{}

\maketitle

\address{
    Q. Li: Department of Mathematics, National University of Singapore; \email{qianxiao@nus.edu.sg}
    \and
    T. Lin: School of Mathematical Sciences, Peking University; \email{lintingsms@pku.edu.cn}
    \and
    Z. Shen:  Department of Mathematics, National University of Singapore; \email{matzuows@nus.edu.sg}
}

\subjclass{Primary 41A99; Secondary 93B05}


\begin{abstract}
    We build on the dynamical systems approach to deep learning, where deep residual networks are idealized as continuous-time dynamical systems, from the approximation perspective. In particular, we establish general sufficient conditions for universal approximation using continuous-time deep residual networks, which can also be understood as approximation theories in $L^p$ using flow maps of dynamical systems. In specific cases, rates of approximation in terms of the time horizon are also established.
    Overall, these results reveal that composition function approximation through flow maps present a new paradigm in approximation theory and contributes to building a useful mathematical framework to investigate deep learning.
\keywords{Deep learning, Approximation theory, Controllability}
\end{abstract}

\section{Introduction and Problem Formulation}
\label{sec:intro}

Despite the empirical success of deep learning, one outstanding challenge is to develop a useful theoretical framework to understand its effectiveness by capturing the effect of sequential function composition in deep neural networks. In some sense, this is a distinguishing feature of deep learning that separates it from traditional machine learning methodologies.

One candidate for such a framework is the \emph{dynamical systems approach}~\cite{weinan2017proposal,Haber2017,li2017maximum}, which regards deep neural networks as a discretization of an ordinary differential equation. Consequently, the latter can be regarded as the object of analysis in place of the former. An advantage of this idealization is that a host of mathematical tools from dynamical systems, optimal control and differential equations can then be brought to bear on various issues faced in deep learning, and more importantly, shed light on the role of composition on function approximation and learning.

Since its introduction, the dynamical systems approach led to much progress in terms of novel algorithms~\cite{Parpas2019,Zhang2019}, architectures~\cite{Haber2017,Chang2018,Ruthotto2018,pmlr-v80-lu18d,Wang2018,Tao2018,Sun2018} and emerging applications~\cite{Zhang2018a,Zhang2018b,Chen2018,Lu2019,effland2020variational}. On the contrary, the present work is focused on the theoretical underpinnings of this approach. From the optimization perspective, it has been established that learning in this framework can be recast as an mean-field optimal control problem~\cite{li2017maximum,weinan2019mean}, and local and global characterizations can be derived based on generalizations of the classical Pontryagin's maximum principle and the Hamilton-Jacobi-Bellman equation.
Other theoretical developments include continuum limits and connections to optimal transport~\cite{Sonoda2017,Thorpe2018,Sonoda2019}.
Nevertheless, the other fundamental questions in this approach remain largely unexplored, especially when it comes to the function approximation properties of these continuous-time idealizations of deep neural networks. In this paper, we establish some basic results in this direction.

\subsection{The Supervised Learning Setting}
We first describe the setting of the standard supervised learning problem we study in this paper. We consider a set of inputs $\Xcal \subset \R^n$ and outputs $\Ycal \subset \R^m$ that are subsets of Euclidean spaces. In supervised learning, we seek to approximate some ground truth or target (oracle) function, which is a mapping $F : \Xcal \rightarrow \Ycal$. For example, in a typical classification problem, each $x$ specifies the pixel values of a $d\times d$ image ($n=d^2$), and $y$ is its corresponding class label, which is a one-hot encoding corresponding to $m$ different classes of images, e.g. for $m=3$, $y=(0, 1, 0)$ corresponds to a label belong to the second class. The ground truth function $F$ defines the label $y=F(x)$ associated with each image $x$, and it is the goal of supervised learning to approximate $F$ from data. Concretely, one proceeds in two steps.

First, we specify a hypothesis space
\begin{align}
    \Hcal =
    \left\{
        F_{\theta} : \Xcal \rightarrow \Ycal
        ~|~
        \theta \in \Theta
    \right\},
\end{align}
which is a family of approximating functions parametrized by $\theta \in \Theta$. The parameter set $\Theta$ is usually again some subset of a Euclidean space. For example, in classical linear basis regression models ($\Ycal=\R$), we may consider a set of orthonormal basis functions $\{\phi_i \in L^2(\Xcal), i=1,2,\dots\}$ forming a hypothesis space by linear combinations, i.e.
$
    \Hcal =
    \left\{
        \sum_i a_i \phi_i:
        a_i \in \R, \sum_i a_i^2 < \infty
    \right\}
$. Of course, there are other hypothesis spaces one can consider, such as deep neural networks that we will discuss later.

Next, we find an approximant $\hat{F} \in \Hcal$ of $F$ by solving an optimization problem typically of the form
\begin{align}\label{eq:risk_minimization}
    \inf_{G\in\Hcal} \int_{\Xcal}
    \ell (F(x), G(x)) d\mu(x),
\end{align}
Here, $\mu$ is a probability measure on $\Xcal$ modelling the input distribution and $\ell :\Ycal \times \Ycal \rightarrow \R$ is a loss function that is minimized when its arguments are equal. A common choice for regression problems is the square loss, $\ell(y,y') = \| y-y' \|_2^2$, in which case the solution of~\eqref{eq:risk_minimization} is a projection of $F$ onto $\Hcal$ in $L^2(\Xcal,\mu)$.
For classification problems, typically one uses a surrogate loss function in place of the classification accuracy, e.g. cross entropy loss. Due to non-square loss functions and complex model architectures, in practice problem~\eqref{eq:risk_minimization} is only solved approximately to give a $\hat{F}$ as an approximation to $F$. Moreover, one typically do not have an explicit form for $\mu$, but we have data samples from it: $x_i$, $y_i = F(x_i)$ for $i=1,2,\dots,N$. In this case, we can set $\mu$ to be the empirical measure $\mu = \frac{1}{N}\sum_{i=1}^{N} \delta_{x_i}$, yielding the so-called empirical risk minimization problem
\begin{align}
    \inf_{G\in\Hcal} \frac{1}{N}
    \sum_{i=1}^{N}
    \ell (F(x_i), G(x_i)).
\end{align}
This objective function is also called the training loss, since it is the loss function evaluated on the model predictions versus the true labels, averaged over the training samples.

\subsection{Deep Residual Neural Networks}

In deep learning, the hypothesis space $\Hcal$ consists of functions in the forms of neural networks of varying architectures. In this paper, we will focus on a very successful class of deep network architectures known as residual networks~\cite{He2016}. These neural networks build the hypothesis space by iterating the following difference equation
\begin{equation}\label{eq:difference_eqn_resnet}
    z_{s+1} = z_{s} + f_{\theta_s} (z_{s}),
    \qquad
    z_{0} = x,
    \qquad
    s=0,\dots,S-1.
\end{equation}
The number $S$ is the total number of layers of the network, and $s$ indexes the layers. The function $f_{\theta_s}$ specifies the architecture of the network, which depends on the trainable parameters $\theta_s$ at each layer $s$. For example, in the simplest case of a fully connected network, we have
\begin{align}\label{eq:fully_connected_architecture}
    \begin{split}
        &f_{\theta_s} (z) = V_s \sigma(W_s z + b_s), \\
        &\theta_s = (W_s, V_s, b_s)
        \quad
        W_s \in \R^{q \times n},
        \quad
        V_s \in \R^{n \times q},
        \quad
        b_s \in \R^{q}.
    \end{split}
\end{align}
Here, $\sigma : \R \rightarrow \R$ is called the activation function, and is applied element-wise to a vector in $\R^n$. Popular examples include the rectified linear unit (ReLU) $\relu(z) = \max(0, z)$, the sigmoid $\sigmoid(z) = 1/(1 + e^{-z})$ and $\tanh(z)$. Depending on the application, more complex $f_\theta$ are employed, such as those involving blocks of fully connected layers or convolution layers. In this paper, we do not make explicit assumption on the form of $f_{\theta_s}$ and consider the general difference equation~\eqref{eq:difference_eqn_resnet} defining the class of residual network architecture. We remark that it is possible to have different parameter dimensions for each layer as is often the case in practice, but for simplicity of analysis we shall take them to have the same dimension and belong to a common parameter set $\Theta$, by possibly embedding in higher dimensional Euclidean spaces.

Now, let us denote by $\varphi_S(x;\theta)$ the mapping $x \mapsto z_S$ via~\eqref{eq:difference_eqn_resnet}. This is the flow map of the difference equation, which depends on the parameters $\theta = (\theta_0,\dots,\theta_{S-1})$.
To match output dimension, we typically introduce another mapping $g$ taken from a family $\Gcal$ of functions from $\R^n$ to $\Ycal \subset \R^m$ at the end of the network (classification or regression layer, as is typically called). We will hereafter call this the \emph{terminal family}, and it is usually simple, e.g. some collection of affine functions. Together, they form the $S$-layer residual network hypothesis space
\begin{align}\label{eq:hyp_resnet}
    \mathcal{H}_{\mathrm{resnet}}(S)
    =
    \{
        g\circ \varphi_S(\cdot; \theta)
        ~|~
        g\in \Gcal, \theta \in \Theta^S
    \}
\end{align}
One can see that this hypothesis space is essentially compositional in nature. First, the functions in $\Hcal$ involves a composition of a flow map $\varphi_S$ and the last layer $g$. Moreover, the flow map $\varphi_S$ itself is a composition of maps, each of which is a step in~\eqref{eq:difference_eqn_resnet}. One challenge in the development of a mathematical theory of deep learning is the understanding the effect of compositions on approximation and learning, due to the lack of mathematical tools to handle function compositions.

\subsection{The Dynamical Systems Viewpoint}

The results in this paper concerns a recent approach introduced in part to simplify the complexity arising from the compositional aspects of the residual network hypothesis space. This is the dynamical systems approach, where deep residual networks are idealized as continuous-time dynamical systems~\cite{weinan2017proposal,li2017maximum,Ruthotto2018,weinan2019mean}. Instead of~\eqref{eq:difference_eqn_resnet}, we consider its continuous-time idealization
\begin{equation}\label{eq:dyn}
    \dot{z}(t) = f_{\theta(t)}(z(t)),
    \qquad
    \theta(t) \in \Theta,
    \qquad
    t\in[0,T],
    \qquad
    z(0) = x.
\end{equation}
That is, we replace the discrete layer numbers $s$ by a continuous variable $t$, which results in a new continuous-time dynamics described by an ordinary differential equation (ODE).
Note that for this approximation to be precise, one would need a slight modification of the right hand side~\eqref{eq:difference_eqn_resnet} into $z_{s} + \delta \cdot f_{\theta_s}(z_s)$ for some small $\delta>0$. The limit $\delta \rightarrow 0$ with $T = S \delta$ held constant gives~\eqref{eq:dyn} with the identification $t \approx \delta s$. Empirical work shows that this modification is justified since for trained deep residual networks, $z_{s+1} - z_{s}$ tends to be small~\cite{Veit2016,Jastrzebski2017}. Consequently, the trainable variables $\theta$ is a now a indexed by a continuous variable $t$. We will assume that each $f_{\theta}$ is a Lipschitz continuous function on $\R^n$, so that~\eqref{eq:dyn} admits unique solutions (see Proposition~\ref{prop:existence-uniqueness}).

As in discrete time, for a terminal time $T>0$, $z(T)$ can be seen as a function of its initial condition $x$, and we denote it by $\varphi_T(\cdot, \theta): \R^n \to \R^n$. The map $\varphi_T$ is known as the Poincar\'{e} map or the flow map of the dynamical system~\eqref{eq:dyn}.
It depends on the parameters $\theta = \{\theta(t)\in \Theta : t\in [0, T]\}$, which is now a function of time. We impose a weak regularity condition of $\theta$ with respect to $t$ by restricting $\theta$ to be essentially bounded, i.e. $\theta \in L^\infty([0,T], \Theta)$.
As a result, we can replace the hypothesis space~\eqref{eq:hyp_resnet} by
\begin{align}\label{eq:hyp_dyn_T}
    \Hcalode(T) =
    \left\{
        g \circ \varphi_T(\cdot, \theta)
        ~|~
        g \in \Gcal,
        \theta \in L^\infty([0,T], \Theta)
    \right\}
\end{align}
with the terminal time $T$ playing the role of depth. In words, this hypothesis space contains functions which are regression/classification layers $g$ composed with flow maps of a dynamical system in the form of an ODE. It is also convenient to consider the hypothesis space of arbitrarily deep continuous-time networks as the union
\begin{align}\label{eq:hyp_dyn}
    \Hcalode = \bigcup_{T>0} \Hcalode(T).
    = \bigcup_{T>0}
    \left\{
        g \circ \varphi_T(\cdot, \theta)
        ~|~
        g \in \Gcal,
        \theta \in L^\infty([0,T], \Theta)
    \right\}
\end{align}
The key advantage of this viewpoint is that a variety of tools from continuous time analysis can be used to analyze various issues in deep learning. This was pursued for example, in~\cite{li2017maximum,pmlr-v80-li18b} for learning algorithms and~\cite{Ruthotto2018,Chang2017,Chang2018} on network stability. In this paper, we are concerned with the problem of approximation, which is one of the most basic mathematical questions we can ask given a hypothesis space. Let us outline the problem below.

\subsection{The Problem of Approximation}

The problem of approximation essentially asks how big $\Hcalode$ is. In other words, what kind of functions can we approximate using functions in~$\Hcalode$? Before we present our results, let us first distinguish the concept of approximation and that of representation.

\begin{itemize}
\item We say that a function $F$ can be represented by $\Hcalode$ if $F \in \Hcalode$.
\item In contrast, we say that $F$ can be approximated by $\Hcalode$ if for any $\varepsilon > 0$, there exists a $\widehat{F} \in \Hcalode$ such that it is close to $F$ up to error $\varepsilon$. Equivalently, $F$ lies in the closure of $\Hcalode$ under some topology.
\end{itemize}
Therefore, representation and approximation are mathematically distinct notions. The fact that some class of mappings cannot be represented by $\Hcalode$ does not prevent it from being approximated by $\Hcalode$ to arbitrary accuracy. For example, it is well-known that flow maps must be orientation-preserving (OP), which are a very small set of functions in the Baire Category sense~\cite{palis1974vector}. At the same time, it is also known that OP diffeomorphisms are dense in $L^p$ in dimensions larger than one~\cite{brenier2003p}. However, what we need here is more than density: the approximation set should have good structure for computation. In this paper, we investigate the density of flow maps with structural constraints.

We will work mostly in continuous time. Nevertheless, it makes sense to ask what the results in continuous time imply for discrete dynamics. After all, the latter is what we can actually implement in practice as machine learning models. Observe that in the reverse direction, $z_{s+1} = z_{s} + \delta f_{\theta_s}(z_s)$ can be seen as a forward Euler discretization of~\eqref{eq:dyn}.
It is well-known that for finite time horizon $T$ and fixed compact domain, Euler discretization has global truncation error in supremum norm of $\mathcal{O}(\delta) = \mathcal{O} (T / S)$ (See e.g.~\cite{Leveque2007Finite}, Ch 5). In other words, any function in $\Hcalode$ can be uniformly approximated by a discrete residual network provided the number of layers $S$ is large enough. Consequently, if a function can be approximated by $\Hcalode$, then it can be approximated by a sufficiently deep residual neural network corresponding to an Euler discretization. In this sense, we can see that approximation results in continuous time have immediate consequences for its discrete counterpart.

\section{Main Results and Implications}

In this section, we summarize our main results on the approximation properties of $\Hcalode$ and discuss their significance with respect to related results in the literature in the direction of approximation theory through the viewpoint of function composition, approximation properties of deep neural networks, as well as controllability problems in dynamical systems. We begin with fixing some notation.

\subsection{Notation}
\label{sec:notation}

Throughout this paper, we adopt the following notation:
\begin{enumerate}
    \item Let $K$ be a measurable subset of $\R^n$. We denote by $C(K)$ the space of real-valued continuous functions on $K$, with norm $\|f\|_{C(K)} = \sup_{x\in K} |f(x)|$. Similarly, for $p\in[1,\infty)$, $L^p(K)$ denotes the space of $p$-integrable measurable functions on $K$, with norm $\|f\|_{L^p(K)} = ( \int_{K} |f(x)|^p dx )^{1/p}$. Vector-valued functions are denoted similarly.
    \item A function $f$ on $K$ is called Lipschitz if $|f(x) - f(x')| \le L|x-x'|$ holds for all $x,x' \in K$. The smallest constant $L$ for which this is true is denoted as $\lip(f)$.
    \item Given a uniformly continuous function $f$, we denote by $\omega_f$ its modulus of continuity, i.e. $\omega_f(r) := \sup_{|x-x'| \le r} |f(x)-f(x')|$.
    \item For any collection $\Fcal$ of functions on $\R^n$, we denote by $\overline{\Fcal}$ its closure under the topology of compact convergence. In other words, $f \in \overline{\Fcal}$ if for any compact $K\subset \R^n$ and any $\varepsilon>0$, there exists $\widehat{f}\in \Fcal$ such that $\| f - \widehat{f} \|_{C(K)} \leq \varepsilon$. As a short form, we will refer to this as approximation closure.
    \item For any collection $\Fcal$ of functions on $\R^n$, we denote by $\ch{(\Fcal)}$ its convex hull
    and $\chbar{(\Fcal)}$ the approximation closure of its convex hull, i.e. closure under the topology of compact convergence.
\end{enumerate}

\subsection{Approximation Results}

Let us begin by slightly simplifying the form of the continuous-time hypothesis space. Let us denote by $\Fcal$ the set of functions that constitute the right hand side of Equation~\eqref{eq:dyn}:
\begin{align}
    \Fcal = \{ f_\theta : \R^n \rightarrow \R^n ~|~ \theta \in \Theta \}.
\end{align}
Consequently, we can denote the family of flow maps generated by $\Fcal$ as
\begin{align}\label{eq:varPhi_defn}
    \varPhi(\Fcal, T) := \{
        x \mapsto z(T)
        ~|~
        \dot{z}(t) = f_{t}(z(t)),\,
        f_{t} \in \Fcal, z(0) = x, t\in[0, T]
    \}
\end{align}
This allows us to write $\Hcalode$ compactly without explicit reference to the parameterization
\begin{align}\label{eq:hyp_dyn_noparam}
    \Hcalode =
    \Hcalode(\Fcal, \Gcal)
    =
    \bigcup_{T>0}
    \{
        g\circ \varphi
        ~|~
        g\in\Gcal, \varphi \in \varPhi(\Fcal, T)
    \}.
\end{align}
We will hereafter call $\Fcal$ a control family, since they control the dynamics induced by the differential equation~\eqref{eq:dyn}. Unless specified otherwise, we assume $\Fcal$ contains only Lipschitz functions, which ensures existence and uniqueness of solutions to the corresponding ODEs (See. Proposition~\ref{prop:existence-uniqueness}). As before, $\Gcal$ is called the terminal family.

The central results in this paper establishes conditions on $\Fcal$ and $\Gcal$ that induce an universal approximation property for $\Hcalode$. To state the results we will need some definitions concerning properties of the control family. The first is the concept of well functions, which plays a fundamental role in constructing approximation dynamics.

\begin{definition}[Well Function]\label{def:well_function}
    We say a Lipschitz function $h: \R^n \rightarrow \R$ is a \emph{well function} if there exists a bounded open convex set $\Omega \subset \R^n$ such that
    \begin{equation}
        \Omega \subset \{ x\in \R^n ~|~ h(x) = 0 \} \subset \overline{\Omega}.
    \end{equation}
    Here the $\overline{\Omega}$ is the closure of $\Omega$ in the usual topology on $\R^n$.

    Moreover, we say that a vector valued function $h:\R^{n} \rightarrow \R^{n'}$ is a well function if each of its component $h_i:\R^{n} \rightarrow \R$ is a well function in the sense above.
\end{definition}
The name ``well function'' highlights the rough shape of this type of functions: the zero set of a well function is like the bottom of a well. Of course, the ``walls'' of this well need not always point upwards and we only require that they are never zero outside of $\overline{\Omega}$.

We also define the notion of restricted affine invariance, which is weaker than the usual form of affine invariance.
\begin{definition}[Restricted Affine Invariance]\label{def:res_affine_inv}
    Let $\Fcal$ be a set of functions from $\R^n$ to $\R^n$. We say that $\Fcal$ is restricted affine invariant if $f\in\Fcal$ implies $D f(A \cdot + b) \in \Fcal$, where $b \in \R^n$ is any vector, and $D$, $A$ are any $n\times n$ diagonal matrices, such that the entries of $D$ are $\pm 1$ or 0, and entries of $A$ are smaller than or equal to 1.
\end{definition}

Now, let us state our main result on universal approximation of functions by flow maps of dynamical systems in dimension $n\ge 2$.

\begin{theorem}[Sufficient Condition for Universal Approximation]\label{thm:main}
    Let $n\geq 2$ and $F:\R^n \rightarrow \R^m$ be continuous.
    Suppose that the control family $\Fcal$ and the terminal family $\Gcal$ satisfies the following conditions
    \begin{enumerate}
        \item For any compact $K\subset \R^n$, there exists a Lipschitz $g \in \Gcal$ such that $F(K) \subset g(\R^n)$.
        \item $\chbar({\mathcal F})$ contains a well function (Definition~\ref{def:well_function}).
        \item $\Fcal$ is restricted affine invariant (Definition~\ref{def:res_affine_inv}).
    \end{enumerate}
    Then, for any $p \in [1,\infty)$, compact $K\subset \R^n$ and $\varepsilon > 0$, there exists $\widehat{F} \in \Hcalode$ such that
    \begin{align}
        \| F - \widehat{F} \|_{L^p(K)} \le \varepsilon.
    \end{align}
\end{theorem}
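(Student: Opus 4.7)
The overall plan is to reduce the $L^p$ approximation of $F$ to the construction of a flow map that transports a fine partition of $K$ to specified target points. First, I would use condition 1 to pick a Lipschitz $g \in \Gcal$ with $F(K) \subset g(\R^n)$. By uniform continuity of $F$ on $K$, I would partition $K$ into small axis-aligned cubes $Q_1, \ldots, Q_N$ on each of which $F$ is nearly constant, pick $x_i \in Q_i$, and choose a preimage $y_i \in \R^n$ with $g(y_i) = F(x_i)$. Since $g$ is Lipschitz on a fixed compact neighborhood of $\{y_i\}$, it then suffices to construct $\varphi \in \varPhi(\Fcal, T)$ such that $\varphi(x) \approx y_i$ on the bulk of each $Q_i$; the composition $g \circ \varphi$ will then be $\varepsilon$-close to $F$ in $L^p(K)$.

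The core construction is a sequential transport flow driven by well-function vector fields. First, a Filippov-type relaxation argument would show that flow maps generated by vector fields in $\Fcal$ approximate (in compact convergence) those generated by vector fields in $\chbar(\Fcal)$, so that a well function $h \in \chbar(\Fcal)$ may be treated as an admissible vector field up to arbitrary approximation error. Second, restricted affine invariance (condition 3), which passes through convex hulls and closures, produces from $h$ a rich family of well-type vector fields: for any bounded convex zero set obtained by diagonal shift and rescaling, and any coordinate-axis output direction with arbitrary sign or selection, a vector field is available whose zero set is that convex set and whose direction outside is the prescribed axis. Third, I would process the cubes $Q_1, \ldots, Q_N$ sequentially over disjoint time intervals $[T_{i-1}, T_i]$; on the $i$-th interval I activate, possibly in coordinate-by-coordinate sub-stages, a sequence of such vector fields whose zero sets contain every region that should remain fixed and which together displace $Q_i$ toward $y_i$.

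The main obstacle is the convexity constraint on the zero set of a well function, since the collection of regions we wish to keep fixed at a given stage is in general not convex. I plan to resolve this via a two-phase parking strategy. In phase one, I move each $Q_i$ out of $K$ to a parking location $p_i$ placed along a single line (for instance, along the $x_1$-axis at distinct spots far from $K$), processing one cube at a time; at each stage the set of already-parked cubes together with the yet-to-be-processed cubes can be enclosed in a bounded convex strip suitable as the zero set of a well-type vector field. In phase two, I analogously move each parked cube from $p_i$ to its target $y_i$, arranging the $y_j$ so that already-placed targets and still-parked cubes can again be jointly enclosed in a convex region. The hypothesis $n \geq 2$ enters crucially here: the parking maneuver requires an extra spatial dimension to route one cube around the others without collision, whereas in $n = 1$ the monotonicity of flow maps would preclude any such rearrangement.

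Finally, the total error is controlled by three parameters balanced against $\varepsilon$: the partition mesh size, which governs the oscillation of $F$ within each cube; a Gronwall-type perturbation estimate for the ODE flow under the Filippov replacement of ideal well-type vector fields by their $\Fcal$-based approximants; and the Lebesgue measure of the transition regions near cube boundaries, where the continuous flow cannot instantaneously equal the piecewise-constant target. Refining the partition, tightening the Filippov approximation, and shrinking the transition regions in turn yields the desired $L^p$ bound.
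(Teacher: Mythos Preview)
Your high-level framework is sound: reduce to $L^p$-approximation of a continuous map $\R^n\to\R^n$ by flow maps, invoke a relaxation argument (the paper's Proposition~\ref{prop:convexhull}) to treat a well function $h\in\chbar(\Fcal)$ as an available driving field, and partition $K$ into small cubes to be routed near prescribed targets. The gap is in your parking strategy.

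Your claim that ``at each stage the set of already-parked cubes together with the yet-to-be-processed cubes can be enclosed in a bounded convex strip suitable as the zero set'' fails whenever the cube $Q_i$ being processed lies in the interior of the grid. The yet-to-be-processed neighbours then surround $Q_i$, and their convex hull already contains $Q_i$; no convex zero set can hold them while excluding $Q_i$. This already occurs for a $2\times 2$ partition of the unit square: whichever corner square you attempt to extract first, the convex hull of the remaining three meets it (e.g.\ the midpoint of the segment from $(1,1/2)$ to $(1/2,1)$ lies in the top-right square). No processing order or placement of parking spots circumvents this, because the obstruction is local---adjacent cubes simply cannot be separated by a convex set. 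A related issue is that your scheme never explains how a flow that merely translates $Q_i$ toward $y_i$ achieves $\varphi(x)\approx y_i$ for the \emph{bulk} of $Q_i$; some contraction is implicitly required but not supplied.

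The paper resolves this with two ingredients you are missing. First, a \emph{contraction} precedes any transport: applying the one-dimensional theorem coordinate-wise yields a flow $\widetilde H\in\Acal_{\Fcal}$ approximating the tensor map $H^\alpha(x)=(h^\alpha(x_1),\dots,h^\alpha(x_n))$, where each $h^\alpha$ is a continuous increasing scalar map collapsing $[\tfrac{i}{N},\tfrac{i+\alpha}{N}]$ to $\tfrac{i}{N}$. This simultaneously shrinks every cube toward its corner, so the subsequent step transports \emph{finitely many points}, not extended regions. Second, the point-transport map does not attempt to freeze all other points at once. After a small perturbation making all coordinates pairwise distinct, one chooses the matrix $A$ in Definition~\ref{def:res_affine_inv} with some diagonal entries equal to zero, so the resulting field depends on a single coordinate (say $z_2$) and its effective zero set is an unbounded slab $\{z_2\in I\}$ rather than a bounded convex body. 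Sorting the points by $z_2$ and processing them in that order, the slab at stage $k$ freezes only the $k-1$ previously adjusted points; the current point and all later ones are allowed to move, since the later ones will be corrected at their own stages. This ``freeze only the past'' device, together with the degenerate $A$, bypasses the convexity obstruction entirely. Your proposal should replace the parking idea by these two steps.
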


In the language of approximation theory for neural networks, Theorem~\ref{thm:main} is known as an universal approximation theorem, and $\Hcalode$ satisfying the conditions laid out is said to have the universal approximation property.

Here, the covering condition $F(K) \subset g(\R^n)$ is in some sense necessary. If the range of $g$ does not cover $F(K)$, say it misses an open subset $U\subset F(K)$, then no flow maps composed with it can approximate $F$. Fortunately, this condition is very easy to satisfy. For example, suppose $m=1$ (regression problems), then any linear function $g(x) = w^\top x$ for which $w \neq 0$ suffices, since it is surjective.

The requirement $n\geq2$ is also necessary. In one dimension, the result is actually false, due to the topological constraint induced by flow maps of dynamical systems. More precisely, for $n=1$ one can show that each $\widehat{F} \in \Hcalode$ must be continuous and increasing, and furthermore that its closure also contains only increasing functions. Hence, there is no hope in approximating any function that is strictly decreasing on an open interval. However, we can prove the next best thing in one dimension: any continuous and increasing function can be approximated by a dynamical system driven by the control family $\Fcal$.

\begin{theorem}[Sufficient Condition for Universal Approximation in 1D]\label{thm:1D}
    Let $n=1$. Then, Theorem~\ref{thm:main} holds under the additional assumption that $F$ is increasing.
\end{theorem}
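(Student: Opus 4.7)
The plan is to mirror the strategy of Theorem~\ref{thm:main} while using the monotonicity of $F$ to circumvent the topological constraint that 1D flow maps of~\eqref{eq:dyn} are intrinsically (weakly) increasing by uniqueness of solutions. First, using the covering condition I would produce a Lipschitz $g \in \Gcal$ with $F(K) \subset g(\R)$; together with the monotonicity of $F$ and the connectedness of the curve $g(\R)$, I can lift $F$ to a continuous increasing $\psi : K \to \R$ satisfying $g \circ \psi = F$ on $K$ (or $L^p$-close to it when $g$ is not globally injective, using a piecewise construction along monotone branches). Since $g$ is Lipschitz, $L^p$-approximating $\psi$ by a 1D flow map $\varphi_T$ immediately yields $L^p$-approximation of $F$ by $g \circ \varphi_T$, reducing the theorem to the scalar problem of approximating a continuous increasing $\psi : K \to \R$ by an element of $\varPhi(\Fcal, T)$.

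Second, I would discretize $\psi$ as a staircase. For $\varepsilon' > 0$, pick a partition $x_0 < \dots < x_N$ of $K$ on which $\psi$ varies by at most $\varepsilon'$, and set $y_i = \psi\bigl((x_{i-1}+x_i)/2\bigr)$; monotonicity gives $y_0 < \dots < y_N$. The target is then $\varphi_T \in \varPhi(\Fcal, T)$ with $\varphi_T(x)$ within $\varepsilon'$ of $y_i$ on $(x_{i-1}, x_i)$, constructed by concatenating $N$ flow stages. In stage $i$, I would employ a vector field from $\chbar(\Fcal)$ which, via restricted affine invariance, is a shifted and narrow-scaled well function with attracting well located at $y_i$; running this stage long enough contracts the pre-image of $(x_{i-1}, x_i)$ into an arbitrarily small neighborhood of $y_i$. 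The $\chbar(\Fcal)$-valued vector field is realized by $\Fcal$-valued controls through the standard chattering/relaxation construction, with the resulting flow maps controlled in the uniform norm on $K$ by Gr\"onwall's inequality.

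The main obstacle is sequencing the $N$ stages so that each stage correctly contracts its interval without undoing placements from earlier stages. Here the 1D monotonicity of flow maps becomes the key tool rather than an impediment: stages can be applied in the order of increasing $y_i$, and since trajectories in 1D cannot cross, pieces already gathered near $y_j$ with $j < i$ remain separated from the piece currently being pulled toward $y_i$, so stage $i$'s well-shaped vector field can be designed to barely perturb the earlier pieces while strongly contracting the current one. A secondary technical obstacle is the lift $\psi$ itself when $g$ is not monotonic on a single branch, resolved by a piecewise construction along monotone segments of $g$; this is the unique point at which the hypothesis that $F$ is increasing is essential, since only then can the lift be made to respect the monotonicity forced on any 1D flow map.
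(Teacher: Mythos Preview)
Your reduction to approximating an increasing scalar $\psi$ by flow maps is sound, and flagging the need to choose the lift $\psi$ increasing is a point the paper actually elides (its Corollary~\ref{cor:transformation} literally requires approximating \emph{every} continuous $\varphi$, which 1D flows cannot do). The gap is in your flow construction. First, restricted affine invariance does not give you a ``narrow-scaled'' well function: with $|A|\le 1$ the zero interval of $h(A\cdot+b)$ can only widen, and the paper's own remark after Theorem~\ref{thm:1D} shows that in 1D nothing beyond translation and sign change is needed or used. Second, the sequencing argument is incomplete. A 1D well function is nonzero on \emph{both} sides of its fixed-width zero interval, so during stage $i$ the already-placed pieces near $y_1,\dots,y_{i-1}$ will drift unless they all happen to lie inside that zero interval, which you cannot arrange once $y_i-y_1$ exceeds the well's width. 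You also never track where the not-yet-processed intervals $i+1,\dots,N$ sit after earlier stages, nor how to stop stage $i$ before it drags interval $i+1$ past its own target.

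The paper's construction sidesteps all of this by proving exact node interpolation (Lemma~\ref{lemma:one-dim}): given $x_1<\dots<x_M$ and $y_1<\dots<y_M$, it builds $\psi\in\Acal_{\Fcal}$ with $\psi(x_i)=y_i$ exactly, then uses monotonicity to sandwich $\varphi$ and obtain even $C(K)$ approximation. The inductive step is a conjugation trick with \emph{two} translated well functions $h_{Q'},h_Q$ whose zero intervals $Q'=[q_0,q_1]$ and $Q=[q_1,q_2]$ are adjacent and placed entirely to the left of all $x_i,y_i$: the flow of $h_{Q'}$ for time $t'$ pushes $x_1,\dots,x_k$ into $Q$ (where they are frozen under $h_Q$) while leaving $x_{k+1}$ outside; the flow of $\pm h_Q$ then moves $x_{k+1}$ to the correct $h_{Q'}$-pre-image of $y_{k+1}$; finally the reverse flow $-h_{Q'}$ for the same time $t'$ returns $x_1,\dots,x_k$ exactly to their previous positions and carries $x_{k+1}$ to $y_{k+1}$. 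This reversibility in the park--move--unpark scheme is the missing idea in your proposal.
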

\begin{remark}
    In one dimension, Theorem~\ref{thm:1D} still holds if one replaces the $L^p(K)$ norm by $C(K)$, and furthermore one can relax the restricted affine invariance property to invariance with respect to only $D=\pm 1$ and $A = 1$ in Definition~\ref{def:res_affine_inv}, i.e. we only require symmetry and translation invariance.
\end{remark}

Let us now give some examples of control families satisfying the requirements of Theorems~\ref{thm:main} and~\ref{thm:1D} in order to highlight the general applicability of these results.
\begin{example}[ReLU Networks]
    \label{exa:relu}
    Recall that the fully connected architecture~\ref{eq:fully_connected_architecture} with ReLU activation corresponds to the control family
    \begin{align}\label{eq:relu_ctrl_family}
        \Fcal_{\relu}
        =
        \left\{
            z\mapsto V \relu(W z + b)
            ~|~
            W \in \R^{q\times n},
            V \in \R^{n\times q},
            b \in \R^{q}
        \right\}
    \end{align}
    where $\relu(z)_i = \max(z_i, 0)$ and $q\geq n$.
    It is clear that restricted affine invariance holds.
    Moreover, one can easily construct a well function:
    Let $\Omega = (-1, 1)^{n} \subset \R^n$ be the open cube. Consider the function $h:\R^n\rightarrow\R^n$ whose components are all equal and are given by
    \begin{align}\label{eq:h_omega_example}
        [h(z)]_i
        =
        \frac{1}{2n}
        \sum_{j=1}^{n}
        [
            \relu(-1 - z_j)
            +
            \relu(z_j - 1)
        ],
        \qquad
        i = 1,\dots,n.
    \end{align}
    Clearly, $h \in \ch(\Fcal) \subset \chbar(\Fcal)$ and $h$ is a well function with respect to $\Omega$. Therefore, fully connected residual networks with ReLU activations possesses the universal approximation property as a consequence of our results. Note that this result can be proved using other methods that takes an explicit architectural assumption e.g.~\cite{lin2018resnet,Shen2019Deep,shen2019nonlinear}.
\end{example}

We now discuss examples of some architectural variations that can be handled with our approach. As far as we are aware, such results have not been established in the literature using other means.

\begin{example}[Other Activations]
    \label{exa:sigmoid}
    Let us now discuss how our results can apply just as easily to other network architectures, e.g. with different choice of the activation function. As a demonstration, we consider another commonly used activation function known as the sigmoid activation
    \begin{align}
        \sigmoid(z) = \frac{1}{1+e^{-z}}.
    \end{align}
    in place of the ReLU activation in~\eqref{eq:relu_ctrl_family}. We call this family $\Fcal_{\sigmoid}$. In this case, restricted affine invariance is again obvious. To build a well-function, let us define the scalar soft-threshold function $s:\R\rightarrow\R$
    \begin{align}
        s(z) = \frac{1}{2}
        \min(\max(|z| - 1, 0),1)
    \end{align}
    and for $M,N$ positive integers we define the scalar function
    \begin{align}
        s_{M,N}(z) = \frac{1}{2N}
        \sum_{k=1}^{N}
        \Big[
            \sigmoid(M(-q_k-z)) +
            \sigmoid(M(z-q_k))
        \Big].
    \end{align}
    where $q_k = 1 + (k/N)$. We first show that $s_{M,N}$ can approximate $s$ on any compact subset of $\R$ if $M,N$ are large enough, and it is sufficient to consider the subset to be intervals of the form $[-K, K]$.
    We now estimate $|s(z) - s_{M,N}(z)|$ directly, and by symmetry we only need to check $0\leq z\leq K$. There are three cases:

    Case 1: $0 \leq z < 1$. Here, $s(z)=0$. Then each $z-q_k, -z-q_k\ge -1/N$ and hence $|s_{M,N}(z)| \le \frac{1}{1+\exp(M/N)}$.

    Case 2: $q_{l}\le z<q_{l+1}$ for some $l \geq 0$. Then 1) we have $\sigmoid(M(-q_k-z)) < \frac{1}{1+\exp(M/N)}$ for all $k$; 2) $|1 - \sigmoid(M(z-q_k))| < \frac{1}{1+\exp(M/N)}$ for $k<l$, since $z-q_k>1/N$; 3) $|\sigmoid(M(z-q_k))| < \frac{1}{1+\exp(M/N)}$ for $k>l+1$, since $z-q_k<-1/N$.
    Combining these estimates we have
    \begin{align}
        \left|\frac{l}{2N} - s_{M,N}(z)\right|
        < \frac{1}{2N} + \frac{1}{1+\exp(M/N)},
    \end{align}
    and since $|s(z) - \frac{l}{2N}| < \frac{1}{2N}$, we have $|s(z) - s_{M,N}(z)| < \frac{1}{N} + \frac{1}{1+\exp(M/N)}$.

    Case 3: $z\geq 2$. Here, $s(z) = 1$, and the estimates for case 2 above also hold true. Thus, we have $|1 - s_{M,N}(z)| < \frac{1}{N} + \frac{1}{1+\exp(M/N)}$.

    Combining the cases above, we have for any $K>0$ and $|z|\leq K$,
    \begin{align}\label{eq:st_overall_estimate}
        |s(z) - s_{M,N}(z)| < \frac{1}{N} + \frac{1}{1+\exp(M/N)}.
    \end{align}
    By sending $M = N^2 \rightarrow \infty$, we can make the right hand side arbitrarily small, as required.

    Now, let us define the function $h:\R^n \rightarrow \R^n$ where
    \begin{align}
        [h(z)]_i
        = \frac{1}{n}
        \sum_{j=1}^{n}
        s(z_j),
        \qquad
        i=1,\dots,n.
    \end{align}
    It is clear that $h$ is a well function with respect to the cube $(-1,1)^{n}$. Moreover, by estimate~\eqref{eq:st_overall_estimate} $h$ can be uniformly approximated on any compact subset by $[h_{M,N}]_i = \frac{1}{n} \sum_{j=1}^{n} s_{M,N}(z_j)$, which belongs to $\ch(\Fcal_\sigmoid)$. Thus, $h \in \chbar(\Fcal_\sigmoid)$ (recall the definition of closure with respect to the topology of compact convergence in Sec.~\ref{sec:notation}) and we conclude using our results that continuous fully connected residual networks with sigmoid activations also possess the universal approximation property. Other activations such as $\tanh$ can be handled similarly. Importantly, we can see that in our framework, relatively little effort is required to handle such variations in architecture, but for existing approaches whose proofs rely on explicit architectural choices such as ReLU, this may be much more involved.
\end{example}

\begin{example}[Residual Blocks]
    \label{exa:block}
    As a further demonstration of the flexibility of our results, we can consider another type of variation of the basic residual network, which considers a ``residual block'' with more than one fully connected layer. For example, each block can be of the form
    \begin{align}
        \begin{split}
            &u_{s}
            =
            \sigma (W^{(1)}_s z_s + b^{(1)}_s), \\
            &z_{s+1}
            = z_{s} + V_s \sigma
            (W^{(2)}_s u_{s} + b^{(2)}_s),
        \end{split}
    \end{align}
    where $\sigma$ is some nonlinear activation function applied element-wise. In fact, the original formulation of residual networks has such a block structure, albeit with convolutional layers and ReLU activations~\cite{He2016}.
    Now, the corresponding control family for the idealized continuous-time dynamics is
    \begin{align}\label{eq:block_ctrl_family}
        \begin{split}
            \Fcal_{\mathrm{block}}
            =
            \{
                &z\mapsto V \sigma(W^{(2)}
                \sigma(W^{(1)} z + b^{(1)}) + b^{(2)})
                \\
                &~|~
                V \in \R^{n\times q_2},
                W^{(1)} \in \R^{q_1\times n},
                b^{(1)} \in \R^{q_1},
                W^{(2)} \in \R^{q_2\times q_1},
                b^{(2)} \in \R^{q_2}
            \},
        \end{split}
    \end{align}
    where $q_2,q_1 \geq n$. We now show that we can deduce universal approximation of this family from previous results. As before, restricted affine invariance holds trivially. Thus, it only remains to show that $\chbar(\Fcal_{\mathrm{block}})$ contains a well function.

    To proceed, we may set $q_1=q_2=n$ without loss of generality, since otherwise we can just pad the corresponding matrices/vectors with zeros.
    Now, let us assume that the ``one-layer'' control family
    \begin{align}
        \Fcal_{\sigma}
        =
        \{
            x\mapsto V \sigma(W z + b)
            ~|~
            W \in \R^{n\times n}, b \in \R^n
        \}
    \end{align}
    is such that $\chbar({\Fcal_{\sigma}})$ contains a well function that is non-negative. From the previous examples, we know that this is true for $\sigma=\relu$ or $\sigma=\sigmoid$.
    In addition, we assume that the activation $\sigma$ satisfies a non-degeneracy condition: there exists a closed interval $I \subset \R$ such that its pre-image $\sigma^{-1}(I)$ is also a closed interval. Note that most activations we use in practice satisfy this condition.

    Let us define a control family $\Fcal$, which is a subset of $\Fcal_{\mathrm{block}}$, in which we set $W^{(1)} = I$ and $b^{(1)}= 0$. We also reparameterize the remaining variables as $W^{(2)} \rightarrow \widetilde{W}^{(2)} \widetilde{W}^{(1)}$, $b^{(2)} \rightarrow \widetilde{W}^{(2)} \widetilde{b}^{(1)} + \widetilde{b}^{(2)}$ to obtain the smaller control family
    \begin{equation}
    \begin{split}
        \Fcal
        =
        \{
            &z\mapsto V \sigma(\widetilde{W}^{(2)}[\widetilde{W}^{(1)}
            \sigma(z) + \widetilde{b}^{(1)}] + \widetilde{b}^{(2)})
            \\
            &~|~
            V \in \R^{n\times n},
            \widetilde{W}^{(1)} \in \R^{n\times n},
            \widetilde{b}^{(1)} \in \R^{n},
            \widetilde{W}^{(2)} \in \R^{n\times n},
            \widetilde{b}^{(2)} \in \R^{n}
        \},
    \end{split}
    \end{equation}
    We now show that $\chbar(\Fcal)$ contains a well function.
    Since the activation function is applied element-wise, we may first consider the 1D case, as we have done is the first two examples. Suppose that $s$ is a scalar well function such that $z\mapsto (s(z_1), \dots, s(z_n)) \in \chbar{(\Fcal_{\sigma})}$ and that $s$ is non-negative (see Examples~\ref{exa:relu} and~\ref{exa:sigmoid} for construction). Then we know that $z \mapsto (s(a\sigma(z_1)+b),\dots, s(a\sigma(z_n)+b)) \in \chbar(\Fcal) \subset \chbar(\Fcal_{\mathrm{block}})$ for all $a,b\in\R$. It suffices to verify that $s(a\sigma(\cdot)+b)$ is a scalar well function, for some $a$ and $b$ satisfying suitable conditions. We now show this is the case.
    Take a closed interval $I\subset \R$ such that $\sigma^{-1}(I)$ is also a closed interval. By rescaling and translating, we can take the zero set of $s$ to be the interval $[-1, 1]$. Then, we choose $a$, $b$ such that $z \mapsto a z + b$ maps $I$ to $[-1, 1]$, from which we can deduce that $s(a\sigma(\cdot)+b)$ is also a well function.
    We may now construct a well function in $n$ dimensions analogously to the previous examples
    \begin{align}
        [h(z)]_i
        = \frac{1}{n}
        \sum_{j=1}^{n}
        s(z_j),
        \qquad
        i=1,\dots,n,
    \end{align}
    and by construction $h$ is a well function in $\chbar(\Fcal) \subset \chbar(\Fcal_{\mathrm{block}})$.
    By our results, this again induces the universal approximation property of its corresponding $\Hcalode$.
\end{example}

The above examples serves to illustrate the flexibility of a sufficient condition in deriving universal approximation results for many different architectures. It is possible that some, or even all of these results can be derived using other means (such as reproducing other universal function classes), but such arguments are likely to be involved and more importantly, they have to be handled on a case-by-case basis, lacking a systematic approach such as the one introduced in this paper. We end this section with a remark on a negative example.

\begin{remark}
    Observe that using linear activations $\sigma(z) = z$ constitute a control family which does not contain a well function in $\chbar (\Fcal)$. We also can immediately see that it cannot produce universal approximating flow maps, since the resulting flow maps are always linear functions.
\end{remark}

Let us now discuss the implication of Theorems~\ref{thm:main} and~\ref{thm:1D} in three broad directions: 1) approximation of functions by compositions; 2) approximation theory of deep neural networks and 3) control theory and dynamical systems.

\subsection{Approximation of Functions by Composition}

Let us first discuss our results in the context of classical approximation theory, but through the lens of compositional function approximation. In other words, we will recast classical approximation methods in the form of a compositional hypothesis space (c.f.~\eqref{eq:hyp_dyn_T})
\begin{align}
    \Hcal
    =
    \{
        g\circ\varphi
        ~|~
        g\in\Gcal,
        \varphi\in\varPhi
    \},
\end{align}
which then allows us to compare and contrast with the setting considered in this paper. As before, we call $\Gcal$ the terminal family, and for convenience we will refer to $\varPhi$ as a \emph{transformation family}, to highlight the fact that it contains functions whose purpose is to transform the domain of a function $g$ in order to resemble a target function $F$.

We start with the simplest setting in classical approximation theory, namely linear $N$-term approximation. Here, we consider a fixed, countable collection $\Dcal$ of functions from $\R^n$ to $\R$, which we call a dictionary.
The dictionary is assumed to have some structure so that they are simple to represent or compute. Common examples include polynomials, simple periodic functions such as sines and cosines, as well as other types of commonly seen basis functions. Linear $N$-term approximation takes the first $N$ elements $\phi_1,\phi_2,\dots,\phi_N$ of $\Dcal$ and forms an approximant $\widehat{F}$ of $F$ via their linear combinations
\begin{align}
    \widehat{F}(x)
    = \sum_{i=1}^{N} w_i \phi_i(x),
    \qquad
    w_i \in \R^m,
    \quad
    i=1,2,\dots,N.
\end{align}
From the viewpoint of compositional function approximation, we may express the above hypothesis space by considering the linear terminal family
\begin{align}
    \Gcal(N) =
    \left\{
        x \mapsto \sum_{i=1}^{N} w_i x_i
        ~|~
        w_i \in \R^m,
        \,
        i=1,\dots,N
    \right\}
\end{align}
Then, we obtain the compositional representation of the hypothesis space
\begin{align}\label{eq:comp_linear_nterm}
    \Hcal(N)
    =
    \{
        g\circ \varphi
        ~|~
        g \in \Gcal(N),
        \varphi = (\phi_1,\dots,\phi_N)
    \}.
\end{align}
In other words, the basic $N$-term linear approximation can be recast as a compositional hypothesis space consisting of a linear terminal family and a transformation family containing of just one $N$-dimensional vector-valued function, whose coordinates are the first $N$ elements of the dictionary $\Dcal$. This is called linear approximation, because for two target functions $F_1$ and $F_2$, whose best approximation (in terms of lowest approximation error, see Sec.~\ref{sec:intro}) are $\widehat{F}_1$ and $\widehat{F}_2$ respectively, then the best approximation of $\lambda_1 F_1 + \lambda_2 F_2$ is $\lambda_1 \widehat{F}_1 + \lambda_2 \widehat{F}_2$ for any $\lambda_1,\lambda_2 \in \R$.

Nonlinear $N$-term approximation~\cite{devore1998nonlinear} takes this approach a step further, by lifting the restriction that we only use the first $N$ elements in $\Dcal$. Instead, we are allowed to choose, given a target $F$, which $N$ functions to pick from $\Dcal$. For this reason, the dictionaries $\Dcal$ used in nonlinear approximation can be an uncountable family of functions. In this case, the compositional family is
\begin{align}\label{eq:comp_nonlinear_nterm}
    \Hcal(N)
    =
    \{
        g\circ \varphi
        ~|~
        g \in \Gcal(N),
        \varphi \in \Dcal^N
    \}.
\end{align}
The term ``nonlinear'' highlights the fact that the best approximations for functions do not remain invariant under linear combinations.

In these classical scenarios, the compositional formulations~\eqref{eq:comp_linear_nterm} and~\eqref{eq:comp_nonlinear_nterm} share certain similarities. Most importantly, their transformation families have simplistic structures, and the terminal family is linear, hence also simple. Consequently, one must rely on having a large $N$ in order to form a good approximation. A function can be efficiently approximated via linear approximation (i.e. requiring a small $N$) if its expansion coefficients in $\Dcal$ decays rapidly. On the other hand, efficient approximation through nonlinear approximation relies on the sparsity of this expansion. Nevertheless, in both cases the complexity of their respective hypothesis spaces arise from a large number of linear combination of simple functions.

Let us now contrast our results on $\Hcalode$, which is also in this compositional form (See~\eqref{eq:hyp_dyn_T}). First, our results holds for more general terminal families that are not restricted to linear ones. Second, by looking at the form of $\Hcalode$ and comparing with~\eqref{eq:comp_linear_nterm} and~\eqref{eq:comp_nonlinear_nterm}, we observe that the complexity of $\Hcalode$ arises not due to linear combination of functions, since universal approximation holds despite $\varphi$ having fixed output dimension. Instead, the complexity of $\Hcalode$ arises from compositions, a point which we shall now expand on.

Observe that besides the overall compositional structure of a terminal family and a transformation family, a second aspect of composition is also involved in $\Hcalode$: the transformation family $\varPhi(\Fcal,T)$ is itself generated by compositions of simple functions. To see this, observe that for any flow map of an ODE $\varphi_T$ up to time $T$, we can write it as
\begin{align}\label{eq:flowmap_decomp}
    \varphi_T =
    \varphi_{\tau_M} \circ \varphi_{\tau_{M-1}} \circ \dots \circ \varphi_{\tau_2} \circ \varphi_{\tau_1},
\end{align}
where $\tau_1+\dots+\tau_M = T$, and each $\varphi_{\tau_i}$ represents the portion of the flow map from $t=\sum_{s \leq i-1} \tau_s $ to $t=\sum_{s\leq i} \tau_{s}$ (we set $\tau_0=0$)~\cite{Arnold1973Ordinary}.
By increasing the number of such partitions, each $\varphi_{\tau_i}$ becomes closer to the identity mapping.
More generally, the family of flow maps forms a continuous semi-group under the binary operation of composition, with the identity element recovered when the time horizon of a flow map goes to 0.
Therefore, each member of the transformation family $\varPhi(\Fcal, T)$ can be decomposed into a sequence of compositional mappings, each of which can be made arbitrarily close to the identity, as long as at the same time one increases the number of such compositions. While this decomposition holds true for any flow map of an ODE, the main results in this paper go a step further and show that the universal approximation property holds even when the flow map is restricted to one that is generated by some control family $\Fcal$ verifying the assumptions in Theorem~\ref{thm:main}.

We remark that in classical nonlinear approximation, the dictionary $\Dcal$ could also involve compositions. A prime examples of this is wavelets~\cite{mallat1999wavelet,daubechies1992ten,ron1997affine} where one starts with some template function $\psi$ (mother wavelet) and generates a dictionary by composing it with translations and dilations. For example, in one dimension the wavelet dictionary has the following compositional representation
\begin{align}
    \Dcal
    =
    \{
        \psi \circ T
        ~|~
        T(x) = (x-x_0)/\lambda,
        x_0 \in \R,
        \lambda > 0
    \}.
\end{align}
The main contrasting aspect in $\Hcalode$ is that the compositional transformations are much more complex. Instead of simple translations and dilations, the transformation family in $\Hcalode$ involves complex rearrangement dynamics in the form of a ODE flow that may be adapted to the specific target function $F$ at hand.

In summary, contrary to classical approximation schemes, $\Hcalode$ is built from compositions of functions from a simple terminal family $\Gcal$ and a complex transformation family $\varPhi(\Fcal, T)$, whose members can further be decomposed into a sequence of compositions of functions that are simple in two aspects: they are close to the identity map and they are generated by a potentially simple control family $\Fcal$. Consequently, the complexity of $\Hcalode$ arises almost purely from the process of composition of these simple mappings. In other words, we trade complexity in $T$ (compositions) for $N$ (linear combinations), and can achieve universal approximation even when the transformation family has fixed output dimensions. From this viewpoint, the results in this paper highlights the power of composition for approximating functions.

\subsection{Approximation Theory of Deep Neural Networks}

As discussed previously, the transformation family in $\Hcalode$ consisting of flow maps is highly complex due to repeated compositions. At the same time, however, just like dictionaries in linear and nonlinear approximation, it possesses structure that allows us to carry out approximation in practice. Concretely, recall that each flow map can be decomposed as in~\eqref{eq:flowmap_decomp}, where each component $\varphi_{\tau_i}$ is not only close to the identity, but is close in such a way that the perturbation from identity is constrained by the control family $\Fcal$. Thus, one just need to parameterize each $\varphi_{\tau_i}$ by selecting appropriate functions from $\Fcal$ and then compose them together to form an approximating flow map.
From this viewpoint, the family of deep residual network architectures is a realization of this procedure, by using a one-step forward Euler discretization is approximate each $\varphi_{\tau_i}$.
Concretely,
\begin{align}
    \varphi_{\tau_i} (z)
    =
    z
    +
    \int_{0}^{\tau_i}
    f(z(t)) dt
    \approx
    z + {\tau_i} f(z),
\end{align}
for $\tau_i$ small, which corresponds to the family of deep residual architectures motivated in Sec.~\ref{sec:intro}.
The standard convergence result for Euler discretization~\cite{Leveque2007Finite} allows one to carry approximation results in continuous time to the discrete case.
In view of this, we now discuss our results in the context of approximation results in deep learning.

We start with the continuous-time case. Most existing theoretical work on the continuous-time dynamical systems approach to deep learning focus on optimization aspects in the form of mean-field optimal control~\cite{weinan2019mean,liu2019selection}, or the connections between the continuous-time idealization to discrete time~\cite{thorpe2018deep,Sonoda2017,Sonoda2019}.
The present paper focuses on the approximation aspects of continuous-time deep learning, which is less studied. One exception is the recent work of Zhang et al.~\cite{zhang2019approximation}, who derived some results in the direction of approximation. However, an important assumption there was that the driving force on the right hand side of ODEs (here the control family $\Fcal$) are themselves universal approximators. Consequently, such results do not elucidate the power of composition and flows, since each ``layer'' is already so complex to approximate an arbitrary function, and there is no need for the flow to perform any additional approximation.

In contrast, the approximation results here do not require $\Fcal$, or even $\chbar{(\Fcal)}$, to be universal approximators. In fact, $\Fcal$ can be a very small set of functions, and the approximation power of these dynamical systems are by construction attributed to the dynamics of the flow.
For example, the assumption that $\chbar{(\Fcal)}$ contains a well function does not imply $\Fcal$ that drives the dynamical system is complex, since the former can be much larger than the latter. In the 1D ReLU control family that induces the fully connected network with ReLU activations (See~\eqref{eq:difference_eqn_resnet}), one can easily construct a well function with respect to the interval $\Omega = (q_1,q_2)$ by averaging two ReLU functions: $\frac{1}{2} [\relu(q_1-x) + \relu(x-q_2)]$, but the control family $\Fcal = \{ v \relu (w \cdot + b) \}$ is not complex enough to approximate arbitrary functions without further linear combinations. We have already illustrated this in Examples~\ref{exa:relu},~\ref{exa:sigmoid} and~\ref{exa:block}.


We also note that unlike results in~\cite{zhang2019approximation}, the results here for $n\geq 2$ do not require embedding the dynamical system in higher dimensions to achieve universal approximation. The negative results given in~\cite{zhang2019approximation} (and also~\cite{dupont2019augmented}), which motivated embedding in higher dimensions, are basically on limitation of representation: flow maps of ODEs are orientation preserving (OP) homeomorphisms (See Def.~\ref{def:OP}) and thus can only represent such mappings. However, these are not counter-examples for approximation. For instance, it is known that OP diffeomorphism (and hence OP homeomorphisms) can approximate any $L^\infty$ functions on open bounded domains in dimensions greater than or equal to two~\cite{brenier2003p}.

Although the present paper focuses on the continuous-time idealization, we should also discuss the results here in relation to the relevant work on the approximation theory of discrete deep neural networks. In this case, one line of work to establish universal approximation is to show that deep networks can approximate some other family of functions known to be universal approximators themselves, such as wavelets~\cite{mallat2016understanding} and shearlets~\cite{guhring2019error}. Another approach is to focus on certain specific architectures, such as in~\cite{lu2017expressive, lin2018resnet, zhou2018deep,bao2019approx,daubechies2019nonlinear, e_priori_2019}, which sometimes allows for explicit asymptotic approximation rates to be derived for appropriate target function classes. Furthermore, non-asymptotic approximation rates for deep ReLU networks are obtained in~\cite{shen2019nonlinear, Shen2019Deep}. They are based on explicit constructions using composition, and hence is similar in flavor to the results here if we take an explicit control family and discretize in time.

With respect to all these works, the main difference of the results presented here is that we study sufficient conditions for approximation. In other words, we do not start with an {\it a priori} specific architecture (e.g. the form of the function $f_\theta$, or the type of activation $\sigma$ in~\eqref{eq:difference_eqn_resnet}).
In particular, none of the approximation results we present here depend on reproducing some other basis functions that are known to have the universal approximation property. Instead, we derive conditions on the respective control and terminal families $\Fcal,\Gcal$ that induces the universal approximation property in $\Hcalode$. One advantage of this viewpoint is that we can isolate the approximation power that arises from the act of composition, from that which arises from the specific architectural choices themselves.
As an example, the approximation results in~\cite{lin2018resnet} relies on approximating piecewise constant functions with finitely many discontinuities, hence its proof depends heavily on the ReLU activation. Furthermore, the high dimensional results there requires constructing the proximal grid indicator function, which is not straightforward with activations other than ReLU. We note that for the deep non-residual case, more precise approximation results including non-asymptotic rates for the ReLU architecture can be derived~\cite{shen2019nonlinear, Shen2019Deep}. In contrast, the main results in this paper proceeds in a more general way without assuming certain precise architectures.
Therefore, these results have greater applicability to diverse architectures (See Examples~\ref{exa:relu},~\ref{exa:sigmoid} and~\ref{exa:block}), and perhaps even novel ones that may arise in future deep learning applications.

\subsection{Control Theory and Dynamical Systems}

Lastly, the results here are also of relevance to mathematical control theory and the theory of dynamical systems. In fact, the problem of approximating functions by flow maps is closely related to the problem of controllability in the control theory~\cite{sussmann2017nonlinear}. However, there is one key difference: in the usual controllability problem on Euclidean spaces, our task is to steer one particular input $x_0$ to a desired output value $\varphi(x_0)$. However, here we want to steer the entire set of input values in $K$ to $\varphi(K)$ by the \emph{same} control $\theta(t)$. This can be thought of as an infinite-dimensional function space version of controllability, which is a much less explored area and present controllability results in infinite dimensions mostly focus on the control of partial differential equations~\cite{Chukwu1991,Balachandran2002}.

In the theory of dynamical systems, it is well known that functions represented by flow maps possess restrictions. For example,~\cite{palis1974vector} gives a negative result that the diffeomorphisms generated by $C^1$ vector fields are few in the Baire category sense. Some works also give explicit criteria for mappings that can be represented by flows, such as~\cite{fort1955embedding} in $\mathbb R^2$,~\cite{utz1981embedding} in $\mathbb R^n$, and more recently,~\cite{zhang2009embedding} generalizes some results to the Banach space setting.
However, these results are on exact representation, not approximation, and hence do not contradict the positive results presented in this paper. The results on approximation properties are fewer. A relevant one is~\cite{brenier2003p}, who showed that every $L^p$ mapping can be approximated by orientation-preserving diffeomorphisms constructed using polar factorization and measure-preserving flows. The results of the current paper gives an alternative construction of a dynamical system whose flow also have such an approximation property. Moreover, Theorem~\ref{thm:main} gives some weak sufficient conditions for any controlled dynamical system to have this property. In this sense, the results here further contribute to the understanding of the density of flow maps in $L^p$.

\section{Preliminaries}

In this section, we state and prove where necessary some preliminary results that are used to deduce our main results in the next section.

\subsection{Background Results on Ordinary Differential Equations}
\label{sec:ode}

Throughout this paper, we use some elementary properties and techniques in classical analysis of ODEs. For completeness, we compile these results in this section. The proofs of well-known results that are slightly involved are omitted and unfamiliar readers are referred to~\cite{Arnold1973Ordinary} for a comprehensive introduction to the theory of ordinary differential equations.

Note that the differential equation that generates the transformation family in $\Hcalode$ is of the form
\begin{align}
    \dot{z} = f_t(z), \qquad f_t\in\Fcal, \qquad 0<t\leq T,  \qquad z(0) = z_0,
\end{align}
where $z_0,z(t)\in\R^n$.
Such equations are called time inhomogeneous, since the forcing function $f_t$ changes in time. However, in subsequent proofs of approximation results, we usually consider $t\mapsto f_t$ that are piece-wise constant, i.e. $f_t = f_i$ for all $t\in [T_i, T_{i+1}]$. In this case, for each interval on which $f_t$ does not change, it is enough to consider the time-homogeneous equation
\begin{equation}
\label{eq:ode}
    \dot{z} = f(z), \qquad 0<t\leq T, \qquad z(0) = z_0,
\end{equation}
where $f:\R^n\rightarrow\R^n$ is fixed in time. An equivalent form of the ODE is the following integral form
\begin{align}
    z(t) = z_0 + \int_{0}^{t} f(z(s)) ds.
\end{align}
The following classical result can be proved using fixed point arguments (see e.g.~\cite{Arnold1973Ordinary}, Ch. 4).
\begin{proposition}[Existence, Uniqueness and Dependence on Initial Condition]\label{prop:existence-uniqueness}
    Let $f$ be Lipschitz. Then, the solution to \eqref{eq:ode} exists and is unique. Moreover, for each $t$, $z(t)$ is a continuous function of $z_0$. If in addition, $f$ is $r$-times continuously differentiable for $r\geq 1$, then for each $t$, $z(t)$ is a $(r-1)$-times continuously differentiable function of $z_0$.
\end{proposition}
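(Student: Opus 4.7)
The plan is to follow the classical Picard--Lindelöf approach, combined with Grönwall's inequality for continuous dependence and the variational equation for smoothness.

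First, I would rewrite the ODE in integral form $z(t) = z_0 + \int_0^t f(z(s))\,ds$ and define the operator $(\mathcal{T}z)(t) = z_0 + \int_0^t f(z(s))\,ds$ on $C([0,\tau];\R^n)$ equipped with sup norm. Using $|f(x)-f(y)| \le L|x-y|$, one gets $\|\mathcal{T}z - \mathcal{T}w\|_\infty \le L\tau \|z-w\|_\infty$, so for $\tau < 1/L$ this is a contraction on a closed ball of suitable radius, and Banach's fixed point theorem yields a unique local solution. Global existence on any $[0,T]$ follows because $f$ being globally Lipschitz implies $|f(z)| \le L|z| + |f(0)|$, so Grönwall applied to $|z(t)| \le |z_0| + \int_0^t (L|z(s)|+|f(0)|)\,ds$ gives an a priori bound preventing blow-up; a standard continuation argument then extends the local solution to $[0,T]$.

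For continuous dependence on $z_0$, I would subtract the integral equations for two solutions starting at $z_0$ and $z_0'$ to obtain
\begin{equation*}
    |z(t;z_0) - z(t;z_0')| \le |z_0 - z_0'| + L\int_0^t |z(s;z_0) - z(s;z_0')|\,ds,
\end{equation*}
and apply Grönwall to conclude $|z(t;z_0) - z(t;z_0')| \le e^{Lt}|z_0 - z_0'|$, which is Lipschitz (hence continuous) dependence on the initial condition, uniformly in $t \in [0,T]$.

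For the smoothness claim, suppose $f \in C^r$ with $r \ge 1$. The candidate for $\partial z(t)/\partial z_0$ is the matrix $\Phi(t) \in \R^{n\times n}$ solving the variational equation $\dot{\Phi}(t) = Df(z(t))\Phi(t)$ with $\Phi(0) = I$; since $t \mapsto Df(z(t))$ is continuous and the equation is linear, $\Phi(t)$ exists globally. To verify $\Phi(t)$ really is the Jacobian, I would set $\Delta_h(t) := z(t;z_0 + h) - z(t;z_0) - \Phi(t)h$ and use $f(y) - f(x) = Df(x)(y-x) + R(x,y)$ with $|R(x,y)| = o(|y-x|)$ uniformly on compacts. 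Subtracting the integral equations and using the prior Lipschitz estimate in $z_0$ to bound $|z(s;z_0+h) - z(s;z_0)|$ by $Ce^{Lt}|h|$, one obtains $|\Delta_h(t)| \le L\int_0^t |\Delta_h(s)|\,ds + o(|h|)$ uniformly in $t \in [0,T]$, and Grönwall gives $|\Delta_h(t)|/|h| \to 0$, proving $C^1$-dependence. The $C^{r-1}$ claim for $r \ge 2$ follows by induction: $Df \in C^{r-1}$ and $z(\cdot;z_0) \in C^{r-1}$ in $z_0$ inductively, so the variational equation has $C^{r-1}$ coefficient and its solution inherits $C^{r-1}$ dependence.

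The steps are all textbook-standard; the only mildly delicate point is ensuring the remainder $R(x,y)$ is $o(|y-x|)$ \emph{uniformly} in $s \in [0,T]$ when proving differentiability, which is handled by noting that all trajectories $\{z(s;z_0+h) : s \in [0,T], |h|\le 1\}$ lie in a common compact set (from the Lipschitz estimate above) on which $Df$ is uniformly continuous.
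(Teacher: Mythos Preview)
Your argument is correct and is precisely the standard Picard--Lindel\"of/Gr\"onwall/variational-equation route; the paper itself does not give a proof but simply remarks that the result ``can be proved using fixed point arguments'' and cites Arnold's ODE textbook. In other words, you have supplied the details of exactly the approach the paper defers to.
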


Recall that the flow map $\varphi_T:\R^n \rightarrow \R^n$ of~\eqref{eq:ode} is the mapping from $z_0 \mapsto z(T)$ where $\{z(t)\}$ satisfies~\eqref{eq:ode}. This is well-defined owing to Prop~\ref{prop:existence-uniqueness}.
Thus, we hereafter assume $f$ is Lipschitz. Let us now discuss an important constraint that such flow maps satisfy.

\begin{definition}[Orientation Preserving for Diffeomorphism Case]
    \label{def:OP}
    We call a diffeomorphism $\varphi :\R^n \to \R^n$ orientation preserving (OP) if $\det J_{\varphi}(x) > 0$ for all $x\in\R^n$, where $J_{\varphi}$ is the Jacobian of $\varphi$, i.e. $[J_{\varphi}(x)]_{ij} = \frac{\partial \varphi_i}{\partial x_j}(x)$.
\end{definition}

\begin{proposition}
    Suppose $f$ is twice continuously differentiable. Then, the flow map $\varphi_T$ of~\eqref{eq:ode} is an OP diffeomorphism.
\end{proposition}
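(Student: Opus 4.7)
The plan is to prove this in two parts: first that $\varphi_T$ is a $C^1$ diffeomorphism, and second that it is orientation preserving via the Abel–Liouville (Jacobi) formula applied to the variational equation.

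For the diffeomorphism part, I would apply Proposition~\ref{prop:existence-uniqueness} with $r=2$, which immediately gives that $\varphi_T$ is continuously differentiable in the initial condition. For the inverse, observe that reversing time in the ODE yields the system $\dot{w}(t) = -f(w(t))$ with $w(0) = z(T)$, whose time-$T$ flow map recovers $z_0$; since $-f$ is also twice continuously differentiable, this reverse flow is again $C^1$. Hence $\varphi_T^{-1}$ exists and is $C^1$, so $\varphi_T$ is a $C^1$ diffeomorphism.

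For the orientation-preserving part, fix $x \in \R^n$ and let $M(t) := J_{\varphi_t}(x)$, where $\varphi_t$ denotes the flow map at time $t$. Differentiating the integral form $z(t) = x + \int_0^t f(z(s))\,ds$ with respect to $x$ (justified because $f \in C^2$ implies that $z(t)$ depends in a $C^1$ fashion on $x$ and that differentiation under the integral is valid) yields the linear variational equation
\begin{equation*}
    \dot{M}(t) = Df(z(t))\, M(t), \qquad M(0) = I.
\end{equation*}
Applying the Abel–Liouville/Jacobi formula to this linear matrix ODE gives
\begin{equation*}
    \frac{d}{dt} \det M(t) = \tr\bigl(Df(z(t))\bigr) \cdot \det M(t),
\end{equation*}
a scalar linear ODE with initial value $\det M(0) = 1$. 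Solving it explicitly,
\begin{equation*}
    \det J_{\varphi_T}(x) = \det M(T) = \exp\!\left(\int_0^T \tr\bigl(Df(z(s))\bigr)\, ds\right) > 0,
\end{equation*}
which is strictly positive because the exponential of a real number is always positive, and the integrand is continuous and finite on $[0,T]$ (since $Df$ is continuous and $z$ stays bounded on compact time intervals). Since $x$ was arbitrary, $\varphi_T$ is orientation preserving.

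The only mildly delicate step is justifying that we may differentiate under the ODE integral to obtain the variational equation and that the resulting $M(t)$ coincides with the Jacobian of the flow; this is standard and follows from the $C^1$ dependence guaranteed by Proposition~\ref{prop:existence-uniqueness} together with uniqueness of solutions to the linear variational equation. Everything else is a direct computation, so I do not anticipate any real obstacle.
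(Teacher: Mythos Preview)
Your proposal is correct and follows the same overall strategy as the paper: obtain the $C^1$ inverse via the time-reversed flow $\dot w=-f(w)$, and establish $\det J_{\varphi_T}(x)>0$ by differentiating the integral form of the ODE to get the variational equation $\dot M(t)=Df(z(t))M(t)$, $M(0)=I$.

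The one noteworthy difference is in how the determinant is extracted. The paper passes from the variational equation to the claim $J_{\varphi_T}(x)=\exp\bigl(\int_0^T J_f(\varphi_t(x))\,dt\bigr)$ and then invokes $\det\exp(A)=\exp(\tr A)$. That matrix identity for the full Jacobian is not valid in general for a time-dependent coefficient $A(t)=Df(\varphi_t(x))$ unless the matrices commute at different times; the correct object is the time-ordered exponential. Your route via the Abel--Liouville/Jacobi formula, deriving the scalar ODE $\tfrac{d}{dt}\det M(t)=\tr(Df(z(t)))\det M(t)$ and integrating to get $\det M(T)=\exp\bigl(\int_0^T \tr Df(z(s))\,ds\bigr)$, bypasses this issue entirely and is the cleaner argument. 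The conclusion is of course the same.
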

\begin{proof}
    First, the flow map of $\dot{z} = -f(z)$ is the inverse of $\varphi_T$, and they are both differentiable due to Prop.~\ref{prop:existence-uniqueness}.

    To prove the OP property, observe that $\varphi_0$ is the identity map, and so
    \begin{align}
        \varphi_{T}(x) = \varphi_0(x) + \int_0^T f(\varphi_t(x))dt.
    \end{align}
    Thus, we have
    \begin{align}
        J_{\varphi_T}(x) = I + \int_0^T J_f(\varphi_t(x))J_{\varphi_t}(x) dt
    \end{align}
    Hence
    \begin{align}
        J_{\varphi_T}(x) = \exp
        \left(
            \int_0^T J_f(\varphi_t(x)) dt
        \right).
    \end{align}
    Since $\det \exp(A) = \exp(\tr~A) > 0$, the OP property follows.
\end{proof}

Definition~\ref{def:OP} requires differentiability. If the mapping $\varphi$ is only bi-Lipschitz, we can define the Jacobian almost everywhere, due to celebrated Rademacher's Theorem. Hence we call a bi-Lipschitz mapping OP if $\det J_{\varphi} >0 $ almost everywhere. If $\varphi$ is merely continuous, a proper definition is subtle and can be given by homological techniques, see~\cite{Hatcher:478079}.

However, if we restrict our interest to lower dimensional spaces, such as on the real line ($n=1$) or the plane ($n=2$), the definition of OP can be easily given without any differentiability requirements. In this paper, we will only need to use the $n=1$ case of OP, where a homeomorphism is OP if and only if it is increasing. The definition is a natural extension of diffeomorphism case. Below, we prove the one dimensional case that flow maps that are not necessarily differentiable must still be orientation preserving homeomorphisms. This is sufficient to prove our subsequent results. Note that this result is well-known (see~\cite{Arnold1973Ordinary}, Ch 1) and we write its proof for completeness.
\begin{proposition}\label{prop:comparison}
Let $n=1$ and $\varphi_T$ be the flow map of~\eqref{eq:ode}. Then, $\varphi_T$ is increasing.
\end{proposition}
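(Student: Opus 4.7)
The plan is to argue by contradiction using the uniqueness part of Proposition~\ref{prop:existence-uniqueness}. Suppose $\varphi_T$ is not increasing; then there exist $x_0 < x_1$ such that $\varphi_T(x_0) \geq \varphi_T(x_1)$. Denote by $z_i(t) := \varphi_t(x_i)$ for $i=0,1$ the two solution trajectories of the ODE~\eqref{eq:ode} starting from $x_0, x_1$ respectively, which are well-defined on $[0,T]$ by existence and uniqueness.

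Next, I would consider the scalar continuous function $d(t) := z_1(t) - z_0(t)$ on $[0,T]$. By construction $d(0) = x_1 - x_0 > 0$, while our hypothesis gives $d(T) \leq 0$. Continuity of each $z_i$ (and hence of $d$), combined with the intermediate value theorem, then yields some $t^{*} \in (0,T]$ at which $d(t^{*}) = 0$, i.e.\ $z_0(t^{*}) = z_1(t^{*})$.

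At this point I would invoke uniqueness. Both $z_0$ and $z_1$ solve the same autonomous ODE $\dot z = f(z)$ and pass through the same point at time $t^{*}$; viewing $t^{*}$ as an initial time (or shifting the time variable), Proposition~\ref{prop:existence-uniqueness} forces $z_0(t) = z_1(t)$ for all $t$ in a neighborhood of $t^{*}$, and a standard connectedness argument extends this identity to the whole interval $[0,T]$. In particular $z_0(0) = z_1(0)$, which contradicts $x_0 < x_1$. Hence no such pair $x_0 < x_1$ exists and $\varphi_T$ is increasing.

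There is no real obstacle in this argument; the only mildly subtle point is to handle the boundary case $d(T) = 0$ (where the intermediate value theorem is not needed but the contradiction still comes from applying uniqueness at $t^{*} = T$), and to note that while uniqueness of Proposition~\ref{prop:existence-uniqueness} is stated for forward time, autonomous Lipschitz ODEs are reversible in the sense that $\dot z = -f(z)$ is also Lipschitz, so two trajectories agreeing at time $t^{*}$ must in fact agree at all earlier times as well. This reversibility is what ties the contradiction back to the initial condition.
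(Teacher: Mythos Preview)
Your argument is correct and essentially identical to the paper's own proof: both proceed by contradiction, use continuity to find a time $t^{*}$ at which the two trajectories coincide, and then invoke uniqueness for the time-reversed ODE $\dot w = -f(w)$ to force the initial conditions to agree. If anything, your handling of the intermediate value step and the boundary case $d(T)=0$ is slightly more explicit than the paper's version.
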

\begin{proof}
    It is enough to show that if $z_1$ and $z_2$ are solutions of~\eqref{eq:ode}, but with different initial values $x_1 < x_2$. Then $z_1(t) < z_2(t)$ for all $t\geq 0$.
    Suppose not, we assume $z_1(t_0) =  z_2(t_0)$ for some $t_0$. Consider the following ODE:
    \begin{equation}
        \dot{w} = -f(w), \quad
        w(0) = z_1(t_0).
    \end{equation}
    Then both $z_1(t_0 - \cdot)$ and $z_2(t_0 - \cdot)$ are solutions to the above. By uniqueness we have $z_1(t_0 - t) = z_2(t_0 - t)$ for all $t$, which implies $x_1=z_1(0)=z_2(0) = x_2$, a contradiction. Since both $z_1$ and $z_2$ are continuous in $t$, we have $z_1(t) < z_2(t)$ for all $t$.
\end{proof}

Next we state a version of the well-known Gr\"{o}nwall's Inequality~\cite{gronwallNoteDerivativesRespect1919}.
\begin{proposition}[Gr\"{o}nwall's Inequality]
Let $f:\R\rightarrow \R$ be a scalar function such that $f(t)\ge 0$ and $f(t) \le A + B\int_0^t f(\tau) d\tau$. Then $f(t) \le Ae^{Bt}$.
\end{proposition}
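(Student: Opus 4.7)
The plan is to reduce the integral inequality to a differential inequality satisfied by the majorant, and then integrate using the standard integrating factor trick. Define
\begin{equation*}
    g(t) := A + B\int_0^t f(\tau)\,d\tau,
\end{equation*}
so that by hypothesis $f(t) \le g(t)$ for all $t\ge 0$, with $g(0) = A$. Since $f$ is assumed nonnegative (and, implicitly, integrable enough for the integral to make sense — continuity is more than enough in the applications that follow), $g$ is absolutely continuous with $g'(t) = B f(t)$ almost everywhere.

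Next I would substitute the pointwise bound $f(t)\le g(t)$ into this derivative to obtain $g'(t) \le B\, g(t)$. Multiplying by the integrating factor $e^{-Bt}$ gives
\begin{equation*}
    \frac{d}{dt}\bigl(e^{-Bt} g(t)\bigr)
    = e^{-Bt}\bigl(g'(t) - B g(t)\bigr) \le 0,
\end{equation*}
so $t \mapsto e^{-Bt} g(t)$ is non-increasing. Evaluating at $t=0$ yields $e^{-Bt}g(t) \le g(0) = A$, i.e.\ $g(t) \le A e^{Bt}$. Combining with $f(t)\le g(t)$ proves the claim.

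There is really no main obstacle here; the only subtle point is regularity, namely making sure the fundamental theorem of calculus applies to $g$ so that $g'(t) = Bf(t)$ holds in an appropriate almost-everywhere sense. Under the minimal assumption that $f$ is measurable and locally integrable, $g$ is absolutely continuous and the computation above is valid; for the continuous $f$ that arises from ODE arguments in Section~\ref{sec:ode}, everything is classical. The nonnegativity hypothesis $f\ge 0$ is used implicitly to guarantee that the substitution $g'(t) = Bf(t) \le Bg(t)$ goes the right way when $B\ge 0$ (the case $B < 0$ is vacuous for the bound $Ae^{Bt}$ when combined with trivial estimates, but in our setting $B$ represents a Lipschitz-type constant and is nonnegative).
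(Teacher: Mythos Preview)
Your proof is correct and is the standard integrating-factor argument for Gr\"onwall's inequality. The paper does not actually give a proof of this proposition; it simply states the result and cites the original reference~\cite{gronwallNoteDerivativesRespect1919}, so there is nothing to compare against beyond noting that your argument is the classical one.
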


Finally, we prove some practical results, which follow easily from classical results but are used in some proofs of the main body.
\begin{proposition}\label{prop:uniform_conv}
Let $n=1$ and $z(\cdot; x)$ be the solution of the ODE~\eqref{eq:ode} with initial value $x$. When $x$ is in some compact set $K\subset \R$, then the continuous modulus of finite time
\begin{align}
    \omega_{z(\cdot; x),[0,T]}(r) =
    \sup_{{\substack{0\le t_1\le t_2 \le T \\ |t_2 - t_1| \le r}}}
    \big|z(t_1;x) - z(t_1;x)\big|
\end{align}
converges to 0 as $r\rightarrow 0$ uniformly on $x \in K$.
\end{proposition}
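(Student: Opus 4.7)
The plan is to show that $z(\cdot;x)$ is uniformly Lipschitz in $t$, with a Lipschitz constant that depends only on $K$ and $T$ but not on $x\in K$. Once this is established, the modulus of continuity is automatically bounded by a linear function of $r$ independent of $x$, and the uniform convergence as $r\to 0$ is immediate.

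The first step is to obtain a uniform-in-$x$ bound on the trajectory. Since $f$ is Lipschitz with some constant $L$, it satisfies the linear growth estimate $|f(z)| \le |f(0)| + L|z|$. Writing the integral form of the ODE and taking absolute values gives
\begin{align*}
    |z(t;x)| \le |x| + |f(0)|\,T + L \int_0^t |z(s;x)|\,ds.
\end{align*}
Setting $A = \sup_{x\in K}|x| + |f(0)|\,T$ (finite because $K$ is compact) and applying Gr\"{o}nwall's inequality yields $|z(t;x)| \le A\, e^{LT}$ for every $t\in[0,T]$ and every $x\in K$. Denote this bound by $R$, and note that $R$ depends only on $K$, $T$ and $f$.

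The second step is to derive the Lipschitz-in-$t$ bound. Since $f$ is continuous on the compact interval $[-R,R]$, the quantity $M := \sup_{|z|\le R} |f(z)|$ is finite, and from the integral form of the ODE we immediately get, for any $0\le t_1\le t_2\le T$ and any $x\in K$,
\begin{align*}
    |z(t_2;x) - z(t_1;x)|
    \le \int_{t_1}^{t_2} |f(z(s;x))|\,ds
    \le M\,(t_2 - t_1).
\end{align*}
Therefore $\omega_{z(\cdot;x),[0,T]}(r) \le M\, r$ for every $x\in K$. Since $M$ is independent of $x$, letting $r\to 0$ gives the desired uniform convergence.

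There is no substantial obstacle here; the statement is essentially a packaging of the Gr\"{o}nwall estimate together with the observation that a trajectory confined to a compact set has a uniformly bounded velocity. The only mild point of care is verifying that the constant $R$ (and hence $M$) can be chosen independently of $x\in K$, which is exactly why the compactness of $K$ is used.
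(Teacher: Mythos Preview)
Your proof is correct and follows essentially the same structure as the paper: bound the trajectories uniformly in a compact set, deduce a uniform bound $M$ on $|f|$ along trajectories, and conclude $\omega_{z(\cdot;x),[0,T]}(r)\le Mr$. The only difference is in how the uniform trajectory bound is obtained: the paper exploits the one-dimensional comparison result (Proposition~\ref{prop:comparison}) to sandwich $z(t;x)$ between $z(t;\min K)$ and $z(t;\max K)$, whereas you use the linear-growth estimate from the Lipschitz condition together with Gr\"onwall. Your route is slightly more general (it works verbatim in any dimension), while the paper's route avoids invoking the Lipschitz constant quantitatively and stays closer to the $n=1$ setting of the statement.
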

\begin{proof}
    We denote $a = \min K, b = \max K$ , By Proposition~\ref{prop:comparison}, we know that $z(t;a) \le y(t;x) \le y(t;b)$, thus $H = \{ z(t;x) : x \in K, t \in [0,T] \} \subset [ \min_t z(t;a),  \max_t z(t;b)]$ is compact, so is $f(H)=\{f(h): h \in H \}$.
    Suppose $M = \max_{h \in H} |f(h)|$, we have
    \begin{align}
        \sup_{{\substack{0\le t_1\le t_2 \le T \\ |t_2 - t_1| \le r}}}
        |z(t_1;x) - z(t_2;x)| \le rM,
    \end{align}
    implying the result.
\end{proof}

The following proposition shows that in one dimension, if we have a well function, we can transport one point into another if they are located in the same side of well function's zero interval. Note that by definition, the well function cannot change sign outside of this interval.

\begin{proposition}\label{prop:1d_drive}
 Let $n=1$. Suppose $f(x) < 0$ for all $x \ge x_0$. Then for $x_0 < x_1 < x_2$. Consider the ODE:
\begin{align}
    \dot{z} = f(z), \qquad z(0) = x_2.
\end{align}
Then ultimately the ODE system will reach $x_1$, i.e., for some $T$, $z(T) = x_1$.
\end{proposition}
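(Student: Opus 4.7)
The plan is to argue by contradiction: assume the trajectory starting at $x_2$ never reaches $x_1$, and derive an impossible rate of decrease.

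First, I would observe that since $f(x_2)<0$ by hypothesis, $z$ is strictly decreasing at $t=0$. By Proposition~\ref{prop:comparison} (or directly from the sign of $\dot z = f(z)$), as long as $z(t)>x_0$ we have $\dot z(t)<0$, so $z$ is monotonically decreasing on its domain of existence. In particular, whenever $z(t)\in[x_1,x_2]$, we have $z$ remaining in this compact interval, where $f$ is continuous and bounded. This implies the solution cannot blow up while it stays in $[x_1,x_2]$, so the trajectory exists for all $t\ge 0$ for which $z(t)\ge x_1$.

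Next, suppose for contradiction that $z(t)>x_1$ for every $t\ge 0$. Then by the previous step the solution is defined for all $t\ge 0$ and $z(t)\in(x_1,x_2]$. Since $z$ is monotonically decreasing and bounded below by $x_1$, the limit $L:=\lim_{t\to\infty} z(t)$ exists with $L\in[x_1,x_2]$, in particular $L\ge x_1>x_0$. Continuity of $f$ then gives $\lim_{t\to\infty}\dot z(t)=\lim_{t\to\infty}f(z(t))=f(L)=:c$, and by the hypothesis $c<0$.

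To close the argument, I would integrate: for $t$ large enough, $f(z(t))<c/2<0$, hence
\begin{equation}
    z(t)=x_2+\int_0^t f(z(s))\,ds \;\xrightarrow[t\to\infty]{}\; -\infty,
\end{equation}
contradicting $z(t)\to L$ finite. Therefore there must exist some finite $T>0$ with $z(T)=x_1$ (continuity of $z$ in $t$ ensures that the interval of values covered by $z$ is connected, so $x_1$ is attained rather than skipped). The only mildly delicate step is guaranteeing existence of the solution up to the crossing time; this is handled by the boundedness of $f$ on $[x_1,x_2]$ combined with the monotonicity provided by Proposition~\ref{prop:comparison}, so no serious obstacle is expected.
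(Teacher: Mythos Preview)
Your argument is correct. Both your proof and the paper's use the same underlying fact---that $f$ is bounded away from zero on the compact interval $[x_1,x_2]$---but they organize it differently. The paper proceeds constructively: it picks an auxiliary point $\widetilde{x}_1\in(x_0,x_1)$, sets $m=-\sup_{x\in[\widetilde{x}_1,x_2]}f(x)>0$, and then checks directly that by time $t=(x_2-x_1)/m$ the solution must have dropped to $x_1$ or below. Your version is the classical non-constructive contradiction: if $z(t)>x_1$ forever, monotone convergence forces $z(t)\to L\ge x_1$, whence $\dot z(t)\to f(L)<0$, which is incompatible with $z$ converging. The paper's approach has the advantage of yielding an explicit time bound (though this is not exploited elsewhere in the paper), while yours is slightly cleaner and avoids the auxiliary $\widetilde{x}_1$. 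Either is entirely adequate here.
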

Before proving it, we give a simple example to illustrate this proposition. Suppose that $f(z) = -\relu(z-x_0)$, then direct computation shows $z(T) = (x_2-x_0)\exp(-T) + x_0$. In this case, $T = \ln((x_2-x_0)/(x_1-x_0))$ demonstrates the proposition. Intuitively, this proposition shows under the stated conditions, $x_0$ is an attractor for the unbounded interval $(x_0, \infty)$.
\begin{proof}
    Notice that $f$ is assumed to be continuous, and it suffices to give an estimate on $z(T)$. We only need to prove that for some $T$, $z(T) <x_1$, and the result can be easily derived by the continuity of $T\mapsto z(T)$ and that $z(0) = x_2$.
    Choose an arbitrary $\widetilde x_1 \in (x_0,x_1)$ and define $m = -\sup_{x\in[\widetilde x_1, x_2]} f(x)$.
    We have
    \begin{align}
        z(t) = x_2 + \int_{0}^{t} f(z(s)) ds.
    \end{align}
    Set $t = (x_2 - x_1) / m$. If $z(t) \leq \widetilde{x}$ (which is smaller than $x_1$) then we are done by continuity. Otherwise, we have $z(t) \le -m$ for all $t' \in [0,t]$, yielding
    \begin{align}
        z(t) \leq x_2 + \int_{0}^{t} (-m) ds = x_2 - m \frac{x_2 - x_1}{m} = x_1
    \end{align}
    which by again implies our result by continuity.
\end{proof}

With these results on ODEs in mind, we now present the proofs of our main results.

\subsection{From Approximation of Functions to Approximations of Domain Transformations}
\label{sec:prelim}

Now, we show that under mild conditions, as long as we can approximate any continuous domain transformation $\varphi: \R^n \rightarrow \R^n$ using flow maps, we can show that $\Hcalode$ is an universal approximator. Consequently, we can pass to the problem of approximating an arbitrary $\varphi$ by flow maps in establishing our main results.

\begin{proposition}\label{prop:transformation}
    Let $F: \R^n \rightarrow \R^m$ be continuous and $g:\R^n\rightarrow\R^m$ be Lipschitz. Let $K\subset \R^n$ be compact and suppose $g(\R^n) \supset F(K)$.
    Then, for any $\varepsilon>0$ and $p\in [1,\infty)$, there exists a continuous function $\varphi : K \rightarrow \R^n$ such that
    \begin{align}
        \| F - g \circ \varphi \|_{L^p(K)}
        \leq \varepsilon.
    \end{align}
\end{proposition}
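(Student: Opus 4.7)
My plan is to reduce the problem to a piecewise-constant selection of preimages under $g$, stitched together continuously on a thin transition region whose contribution in $L^p$ can be made negligible. First, I use the uniform continuity of $F$ on the compact set $K$ to pick a fine partition of $K$ into finitely many measurable cells $K_1,\dots,K_N$ (say, intersections of $K$ with cubes of side $h$) on which the oscillation of $F$ is at most some $\delta > 0$. In each cell I fix a representative $y_i = F(z_i) \in F(K) \subset g(\R^n)$ and choose any preimage $x_i \in g^{-1}(y_i)$, and I set $M := \max_i |x_i|$.

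Next, I carve out a closed inner sub-cube $K_i^{\circ} \subset K_i$ of side $(1-\eta)h$ for a small parameter $\eta > 0$. On the disjoint union $E = \bigcup_i K_i^\circ$, I define $\varphi = x_i$ on each $K_i^\circ$, which is locally constant and hence continuous on the closed set $E$. I then extend $\varphi$ coordinate-wise to a continuous function on $K$ via the Tietze extension theorem, taking the standard extension that preserves coordinate-wise sup bounds, so that $|\varphi(x)| \leq \sqrt{n}\, M$ holds throughout $K$.

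For the error estimate, on $E$ we have $g(\varphi(x)) = y_i$ and $|F(x) - y_i| \leq \delta$, giving
\begin{align}
\int_E |F - g\circ\varphi|^p \, dx \; \leq \; \delta^p \, |K|.
\end{align}
The transition region $B := K \setminus E$ has Lebesgue measure at most $C_n \eta |K|$ for a dimensional constant $C_n$, and on $B$ the crude bound
\begin{align}
|g(\varphi(x)) - F(x)| \; \leq \; |g(0)| + \lip(g)\sqrt{n}\, M + \sup_{x \in K} |F(x)| \; =: \; C_0
\end{align}
yields $\int_B |F - g\circ\varphi|^p \, dx \leq C_0^p \, C_n \, \eta |K|$. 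I then first fix $\delta$ so that $\delta \, |K|^{1/p} < \varepsilon/2$; this in turn fixes the partition, the preimages $x_i$, and hence $M$ and $C_0$. Finally, I shrink $\eta$ so that $C_0 \, (C_n \eta |K|)^{1/p} < \varepsilon/2$, producing the desired $L^p$ bound.

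The main obstacle I anticipate is that $g$ need not admit any continuous right inverse on $F(K)$: preimages of arbitrarily close points in $F(K)$ can lie far apart in $\R^n$, so one cannot expect to lift $F$ continuously through $g$ in a pointwise fashion. This is precisely why the argument must exploit the $L^p$ rather than $C$ norm: continuity of $\varphi$ is forced only through thin transition layers between bulk cells on which $\varphi$ is constant, and the arbitrary behavior of $g\circ\varphi$ on those layers is absorbed by the smallness of their Lebesgue measure, using the Lipschitz control on $g$ together with the uniform bound $M$ inherited from the Tietze extension.
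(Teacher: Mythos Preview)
Your proof is correct and follows essentially the same strategy as the paper's: partition into finitely many pieces, assign constant $g$-preimages on the bulk of each piece, interpolate continuously across a thin transition set, and absorb the transition contribution in $L^p$ via its small measure. The paper partitions $F(K)$ and pulls back (using inner regularity of Lebesgue measure and Urysohn's lemma to blend), whereas you partition $K$ directly by cubes, shrink geometrically, and extend via Tietze; these are interchangeable technical choices, with the only cosmetic caveat that your bound $|B|\le C_n\eta|K|$ should really read $|B|\le C_n\eta\cdot V$ for $V$ the total volume of the covering cubes, which is bounded independently of $\eta$ and suffices.
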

\begin{proof}
    This follows from a general result on function composition proved in~\cite{li2019deepapprox}.
    We prove this in the special case here for completeness.

    The set $F(K)$ is compact, so for any $\delta>0$ we can form a partition $F(K) = \cup_{i=1,\dots,N} B_i$ with $\diam(B_i) \leq \delta$. By assumption, $g^{-1}(B_i)$ is non-empty for each $i$, so let us pick $z_i \in g^{-1}(B_i)$. For each $i$ we define $A_i = ({F}^{-1}(B_i)\cap K)$ so that $\{A_i\}$ forms a partition of $K$.
    By inner regularity of the Lebesgue measure, for any $\delta'>0$ and for each $i$ we can find a compact $K_i \subset A_i$ with $\lambda(A_i \setminus K_i) \leq \delta'$ ($\lambda$ is the Lebesgue measure) and that $K_i$'s are disjoint. By Urysohn's lemma, for each $i$ there exists a continuous function $\varphi_i : K \rightarrow [0, 1]$ such that $\varphi_i = 1$ on $K_i$ and $\varphi_i = 0$ on $\cup_{j\neq i} K_j$.

    Now, we form the continuous function
    \begin{align}
        \varphi(x) = \sum_{i=1}^{N} z_i \varphi_i(x).
    \end{align}
    We define the set $K' = \{ \sum_{i=1}^{N} \alpha_i z_i : \alpha_i \in [0,1]\}$, which is clearly compact and $\varphi(x) \in K'$ for all $x$.
    Then, we have
    \begin{align}
        \| F - g\circ \varphi \|_{L^p(K)}
        &=
        \sum_{i=1}^{N}
        \| F - g\circ \varphi \|_{L^p(K_i)}
        +
        \sum_{i=1}^{N}
        \| F - g\circ \varphi \|_{L^p(A_i \setminus K_i)}
        \nonumber \\
        &\leq
        \sum_{i=1}^{N}
        \| F - g\circ \varphi_i \|_{L^p(K_i)}
        +
        \left[
            \| F \|_{C(K)} + \| g \|_{C(K')}
        \right] N\delta'.
    \end{align}
    We take $\delta'$ small enough so that the last term is bounded by $\delta$. Then, we have
    \begin{align}
        \| F - g\circ \varphi \|_{L^p(K)}
        &\leq
        \sum_{i=1}^{N}
        \delta |K_i|
        + \delta
        \leq (1 + |K|) \delta.
    \end{align}
    Taking $\delta = \varepsilon / (1 + |K|)$ yields the result.
\end{proof}

We shall hereafter assume that $g(\R^n) \supset F(K)$, which as discussed earlier is easily satisfied. Hence we have the following immediate corollary.

\begin{corollary}\label{cor:transformation}
    Assume the conditions in Proposition~\ref{prop:transformation}. Let $\varPhi$ be some collection of continuous functions from $\R^n$ to $\R^n$ such that for any $\delta>0$ and any continuous function $\varphi_1:\R^n \rightarrow \R^n$, there exists $\varphi_2 \in \varPhi$ with
    $
        \| \varphi_1 - \varphi_2 \|_{L^p(K)} \le \delta
    $.
    Then, there exists $\varphi \in \varPhi$ such that
    $
        \| F - g\circ \varphi \|_{L^p(K)} \le \varepsilon
    $.
\end{corollary}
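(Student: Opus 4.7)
The plan is to chain the two approximations via a triangle inequality, treating the $L^p$ density hypothesis on $\varPhi$ as a post-composition refinement of the approximant produced by Proposition~\ref{prop:transformation}. First I would apply Proposition~\ref{prop:transformation} with tolerance $\varepsilon/2$ to obtain a continuous $\varphi_1 : K \rightarrow \R^n$ satisfying $\| F - g\circ\varphi_1 \|_{L^p(K)} \le \varepsilon/2$. Since the density hypothesis on $\varPhi$ is stated for continuous functions on all of $\R^n$, I would then extend $\varphi_1$ componentwise to a continuous map $\tilde\varphi_1 : \R^n \rightarrow \R^n$ via Tietze's extension theorem; this extension only needs to agree with $\varphi_1$ on $K$, which is all that enters the $L^p(K)$ norm.

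Second, I would invoke the density hypothesis on $\varPhi$ with $\tilde\varphi_1$ and with a tolerance $\delta := \varepsilon / (2(1+\lip(g)))$ to select $\varphi_2 \in \varPhi$ with $\| \tilde\varphi_1 - \varphi_2 \|_{L^p(K)} \le \delta$. The Lipschitz property of $g$ gives the pointwise bound $|g(\tilde\varphi_1(x)) - g(\varphi_2(x))| \le \lip(g) |\tilde\varphi_1(x) - \varphi_2(x)|$, which after raising to the $p$th power and integrating over $K$ yields $\| g\circ\tilde\varphi_1 - g\circ\varphi_2 \|_{L^p(K)} \le \lip(g)\,\delta \le \varepsilon/2$. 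A triangle inequality then gives
\[
    \| F - g\circ\varphi_2 \|_{L^p(K)}
    \le \| F - g\circ\tilde\varphi_1 \|_{L^p(K)}
    + \| g\circ\tilde\varphi_1 - g\circ\varphi_2 \|_{L^p(K)}
    \le \varepsilon.
\]

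There is essentially no hard step here; the corollary is a routine packaging of Proposition~\ref{prop:transformation} together with the hypothesis on $\varPhi$. The only minor point that merits attention is ensuring the approximant $\varphi_1$ returned on $K$ is extended continuously to $\R^n$ so that the density hypothesis on $\varPhi$ (which is phrased for continuous maps on $\R^n$) applies directly; this is handled by Tietze.
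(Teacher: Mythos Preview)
Your proof is correct and follows essentially the same approach as the paper: apply Proposition~\ref{prop:transformation} with tolerance $\varepsilon/2$, then use the $L^p$ density hypothesis on $\varPhi$ together with the Lipschitz bound on $g$ and a triangle inequality. The only difference is that you insert a Tietze extension step to pass from $\varphi_1:K\to\R^n$ to a continuous map on all of $\R^n$ before invoking the density hypothesis; the paper's proof silently applies the hypothesis directly to $\varphi_1$, so your version is in fact slightly more careful on this point.
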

\begin{proof}
    Using Proposition~\ref{prop:transformation}, there is a $\varphi_1$ such that
    $\| F - g\circ\varphi_1 \|_{L^p(K)} \leq \varepsilon/2$.
    Now take $\varphi \in \varPhi$ such that $\| \varphi_1 - \varphi \|_{L^p(K)} \leq \varepsilon / (2 \lip(g))$. Then,
    \begin{align}
        \begin{split}
            \| F - g\circ \varphi \|_{L^p(K)}
            &\leq
            \| F - g\circ \varphi_1 \|_{L^p(K)}
            +
            \| g\circ \varphi_1 - g\circ \varphi \|_{L^p(K)}
            \\
            &\leq
            \frac{\varepsilon}{2} + \lip(g)
            \left(
                \int_K \| \varphi_1(x) - \varphi(x) \|^p dx
            \right)^{1/p}
            \leq \varepsilon.
        \end{split}
    \end{align}
\end{proof}

\subsection{Properties of Attainable Sets and Approximation Closures}

Owing to Corollary~\ref{cor:transformation}, for the rest of the paper we will focus on proving universal approximation of continuous transformation functions $\varphi$ from $\R^n$ to $\R^n$ by flow maps of the dynamical system
\begin{align}
    \dot{z}(t) = f_{t}(z(t)),
    \qquad
    f_t \in \Fcal,
    \qquad
    z(0) = x,
\end{align}
after which we can deduce universal approximation properties of $\Hcalode$ via Corollary~\ref{cor:transformation}.

We now establish some basic properties of flow maps and closure properties.
In principle, in our hypothesis space~\eqref{eq:hyp_dyn_noparam} we allow $t \mapsto f_t(z)$ to be any essentially bounded measurable mapping for any $z\in\R^n$. However, it turns out that to establish approximation results, it is enough to consider the smaller family of piece-wise constant in time mappings, i.e. $f_t = f_j \in \Fcal$ for $t\in [t_{j-1}, t_j)$.
For a fixed $f$ in the control family $\Fcal$, to emphasize dependence on $f$ we denote by $\varphi_{\tau}^{f}$ the flow map of the following ODE at time horizon $\tau$:
\begin{align}
    \dot{z}(t) = f(z(t)),
    \qquad
    z(0) = x.
\end{align}
That is,
\begin{align}
    \varphi_{\tau}^{f} = z(\tau)
    \qquad
    \text{where}
    \qquad
    \dot{z}(t) = f(z(t)), \quad z(0) = x, \quad t \in [0,\tau].
\end{align}
The \emph{attainable set} of a finite time horizon $T$ due to piece-wise constant in time controls, denoted as ${\Acal}_\Fcal(T)$, is defined as
\begin{equation}
    {\Acal}_\Fcal(T) = \left\{
        \varphi^{f_k}_{\tau_k}
        \circ
        \varphi^{f_{k-1}}_{\tau_{k-1}}
        \circ \cdots \circ
        \varphi^{f_1}_{\tau_1}
        ~|~
        \tau_1 + \cdots +\tau_k = T,
        f_1,\dots,f_k \in \Fcal,
        k \geq 1
    \right\},
\end{equation}
In other words, ${\Acal}_\Fcal(T)$ contains the flow map of an ODE whose right hand side is $f_i$ for $t \in [t_{i-1}, t_{i})$, $j=1,\dots,k$, with $\tau_i = t_i - t_{i-1}$ and $t_0=0$. It contains all the domain transformations that can be attained by an ODE by selecting a piece-wise constant in time driving forces from $\Fcal$ up to a terminal time $T$. The union of attainable sets over all possible terminal times, ${\Acal}_\Fcal = \cup_{T>0} {\Acal}_\Fcal(T)$, is the overall attainable set. In view of Corollary~\ref{cor:transformation}, to establish the approximation property of $\Hcalode$ it is sufficient to prove that any continuous transformation $\varphi$ can be approximated by mappings in ${\Acal}_\Fcal$. Note that ${\Acal}_\Fcal(T)$ is a subset of $\varPhi(\Fcal, T)$ defined in~\eqref{eq:varPhi_defn}.

Now, let us prove some properties of the approximation closure (i.e. closure with respect to the topology of compact convergence, see Sec.~\ref{sec:notation}) of attainable sets.

\begin{lemma}\label{lem:1d_increasing}
Let $n=1$. If $A$ is a family of continuous and increasing functions from $\R$ to $\R$, then $\overline{A}$ contains only increasing functions.
\end{lemma}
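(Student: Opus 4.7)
The plan is to unwind the definition of the approximation closure and transfer the monotonicity of the approximating sequence to the limit by a straightforward $\varepsilon$-argument.

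First I would fix an arbitrary $f \in \overline{A}$ together with two points $x_1 < x_2$ in $\R$, and set out to prove $f(x_1) \le f(x_2)$. Choose the compact set $K = [x_1, x_2]$. By the definition of the topology of compact convergence (recalled in Section~\ref{sec:notation}), for every $\varepsilon > 0$ there exists $\widehat{f} \in A$ with $\|f - \widehat{f}\|_{C(K)} \le \varepsilon$. In particular, $|f(x_i) - \widehat{f}(x_i)| \le \varepsilon$ for $i=1,2$.

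Now I would apply the hypothesis that $\widehat{f}$ is increasing, which gives $\widehat{f}(x_1) \le \widehat{f}(x_2)$. Chaining the three inequalities yields
\begin{equation*}
    f(x_1) \le \widehat{f}(x_1) + \varepsilon \le \widehat{f}(x_2) + \varepsilon \le f(x_2) + 2\varepsilon.
\end{equation*}
Since $\varepsilon$ was arbitrary, letting $\varepsilon \to 0$ gives $f(x_1) \le f(x_2)$, so $f$ is (non-strictly) increasing as required.

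There is essentially no obstacle here: the argument is a one-step transfer of a closed inequality through a uniform limit on a compact set, and does not even require the elements of $A$ to be flow maps of an ODE. The only mild subtlety worth noting is that strict monotonicity is not preserved under uniform limits, so the conclusion must be understood in the non-strict sense; this is harmless for the intended use, where the lemma will be combined with the one-dimensional topological obstruction (flow maps are strictly increasing by Proposition~\ref{prop:comparison}) to rule out approximation of strictly decreasing targets in the $n=1$ setting.
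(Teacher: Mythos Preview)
Your proof is correct and is precisely the elementary $\varepsilon$-argument that the paper summarizes with the single word ``Immediate.'' You have simply written out in full what the paper leaves to the reader; there is no substantive difference in approach.
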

\begin{proof}
	Immediate.
\end{proof}

Next, we state and prove an important property about approximation closures of control families:
$\mathcal F$ shares the same approximation ability as $\chbar({\mathcal F})$ when used to drive dynamical systems. However, a convex hull of Lipschitz function family might not be a Lipschitz function family in general. Hence we adopt a slightly different description.
\begin{restatable}{proposition}{convexhull}
\label{prop:convexhull}
	Let $\Fcal$ be a Lipschitz control family. Then, for any Lipschitz control family $\widetilde{\Fcal}$ such that $\mathcal F \subset \widetilde{\Fcal} \subset \chbar({\mathcal F})$, we have
	\begin{align}
		\overline{{\Acal}_{\mathcal F}} = \overline{\Acal_{\widetilde{\Fcal}}}.
	\end{align}
\end{restatable}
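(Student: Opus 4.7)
The inclusion $\overline{\Acal_\Fcal} \subset \overline{\Acal_{\widetilde\Fcal}}$ is immediate since $\Fcal \subset \widetilde\Fcal$ yields $\Acal_\Fcal \subset \Acal_{\widetilde\Fcal}$. For the reverse inclusion, my plan is to use that $\Acal_\Fcal$ is closed under composition by construction, and that composition of continuous maps is continuous with respect to uniform convergence on compact sets (provided the limit factors are continuous, which holds here since each flow in sight is Lipschitz by Prop.~\ref{prop:existence-uniqueness}). Consequently, it suffices to establish the single-step claim: for any $f \in \widetilde\Fcal$ and any $\tau>0$, the flow $\varphi^f_\tau$ lies in $\overline{\Acal_\Fcal}$. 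The full statement follows by decomposing any $\psi = \varphi^{f_k}_{\tau_k}\circ\cdots\circ\varphi^{f_1}_{\tau_1} \in \Acal_{\widetilde\Fcal}$, approximating each factor uniformly on a suitably enlarged compact set, and then composing the approximants, which remain in $\Acal_\Fcal$.

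\emph{Stage 1: approximation by convex combinations.} Since $f \in \widetilde\Fcal \subset \chbar(\Fcal)$, on any fixed compact $K'$ and any $\delta>0$ there exists $f_\delta \in \ch(\Fcal)$, say $f_\delta = \sum_{i=1}^{N}\alpha_i f_i$ with $f_i \in \Fcal$, $\alpha_i\geq 0$, $\sum_i\alpha_i=1$, such that $\|f-f_\delta\|_{C(K')}\leq\delta$. Choosing $K'$ large enough to contain all trajectories of $\dot z = f(z)$ starting in $K$ up to time $\tau$, together with a small neighborhood, one obtains (for $\delta$ small enough) that trajectories of $\dot z = f_\delta(z)$ starting in $K$ also stay in $K'$. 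A standard Gr\"onwall argument (using the Lipschitz constant $L$ of $f$ on $K'$) then yields
\begin{align}
    \sup_{x\in K}\bigl|\varphi^f_\tau(x)-\varphi^{f_\delta}_\tau(x)\bigr| \leq \tau\,\delta\, e^{L\tau},
\end{align}
so $\varphi^{f_\delta}_\tau \to \varphi^f_\tau$ uniformly on $K$ as $\delta\to 0$.

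\emph{Stage 2: Lie--Trotter splitting.} It remains to show that for $f_\delta = \sum_{i=1}^{N}\alpha_i f_i \in \ch(\Fcal)$ with $f_i \in \Fcal$, the flow $\varphi^{f_\delta}_\tau$ lies in $\overline{\Acal_\Fcal}$. I would establish the splitting approximation
\begin{align}
    \varphi^{f_\delta}_\tau \;=\; \lim_{n\to\infty}\Bigl(\varphi^{f_N}_{\alpha_N\tau/n}\circ\cdots\circ\varphi^{f_1}_{\alpha_1\tau/n}\Bigr)^{\!n},
\end{align}
with convergence uniform on compact sets. The right-hand side lies in $\Acal_\Fcal$ for every $n$. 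To prove convergence, I would interpret each outer step as an Euler-like step of size $\tau/n$ for $\dot z = f_\delta(z)$: expanding each inner flow through the integral form $\varphi^{f_i}_{\alpha_i\tau/n}(z) = z + (\alpha_i\tau/n) f_i(z) + O(1/n^2)$ (the $O$ being uniform on any fixed compact by the Lipschitz bounds) and composing yields that the bracketed map sends $z$ to $z + (\tau/n)f_\delta(z) + O(1/n^2)$. This is a local truncation error of $O(1/n^2)$, and a discrete Gr\"onwall argument over the $n$ iterations bounds the global error by $O(1/n)$, again uniformly on the initial compact set.

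The main obstacle is Stage 2, the splitting estimate, where one must confine all the iterates of the composition uniformly in $n$ to a single compact set so that a common Lipschitz constant can be invoked, and so that the constants hidden in the $O(1/n^2)$ local error can be controlled uniformly. Everything else reduces to careful Gr\"onwall bookkeeping (Prop.~\ref{prop:existence-uniqueness} and the inequality version stated in Sec.~\ref{sec:ode}) and the elementary fact that composition of mappings is continuous under uniform convergence on compacts whenever the limits are (uniformly continuous on compacts).
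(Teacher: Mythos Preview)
Your proposal is correct and follows essentially the same route as the paper: the paper likewise combines a Gr\"onwall stability estimate for perturbing the driving vector field (its Lemma~\ref{lemma:ACAC}) with a Lie--Trotter splitting showing that flows of convex combinations lie in $\overline{\Acal_\Fcal}$ (its Lemma~\ref{lemma:add}). The only cosmetic difference is that the paper first establishes splitting for rational convex combinations to obtain an enlarged family $\Fcal'$ and then applies the closure lemma to pass to $\chbar(\Fcal)=\overline{\Fcal'}$, whereas you first approximate $f$ by some $f_\delta\in\ch(\Fcal)$ and then split; both orderings work, and your direct treatment of arbitrary real weights $\alpha_i$ is arguably cleaner than the paper's detour through rationals.
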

Proposition~\ref{prop:convexhull} is an important result concerning the effect of continuous evolution, which can be regarded as a continuous family of compositions: any function family driving a dynamical system is as good as its convex hull in driving the system, which can be an immensely larger family of functions. Similar properties of flows have been observed in the context of variational problems, see \cite{Warga1962Relaxed}. This is a first hint at the power of composition on function approximation.

To prove Proposition~\ref{prop:convexhull} we need the following lemmas.
\begin{lemma}
	\label{lemma:ACAC}
	If ${\Acal}_{\mathcal F}$, ${\Acal}_{\widetilde{\Fcal}}$ are attainable sets of $\Fcal$, $\widetilde{\Fcal}$ and $\Fcal \subset \widetilde{\Fcal} \subset \overline{\Fcal}$
	. Then we have
	\begin{align}
		\overline{{\Acal}_{\F}} = \overline{\Acal_{\widetilde{\Fcal}}}.
	\end{align}
\end{lemma}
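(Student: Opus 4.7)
The inclusion $\overline{\Acal_\Fcal} \subset \overline{\Acal_{\widetilde{\Fcal}}}$ is immediate from $\Fcal \subset \widetilde{\Fcal}$. The content is the reverse inclusion, and since the approximation closure is idempotent it suffices to show $\Acal_{\widetilde{\Fcal}} \subset \overline{\Acal_\Fcal}$. So I fix an arbitrary element
$$\varphi = \varphi^{\tilde f_k}_{\tau_k} \circ \cdots \circ \varphi^{\tilde f_1}_{\tau_1} \in \Acal_{\widetilde{\Fcal}},$$
a compact $K \subset \R^n$, and $\varepsilon > 0$, and aim to build an approximant in $\Acal_\Fcal$ by replacing each $\tilde f_i \in \widetilde{\Fcal} \subset \overline{\Fcal}$ with a sufficiently close $f_i \in \Fcal$, then bounding the accumulated error via Gr\"onwall.

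First, I set $K_0 := K$ and inductively $K_i := \varphi^{\tilde f_i}_{\tau_i}(K_{i-1})$, all compact by continuous dependence of ODE solutions (Proposition~\ref{prop:existence-uniqueness}). I enlarge each step to a compact ``buffer'' set $\widetilde K_{i-1}$ containing the entire true trajectory tube $\{\varphi^{\tilde f_i}_s(x) : x \in K_{i-1},\, s \in [0,\tau_i]\}$ together with, say, a closed $1$-neighborhood of it. Let $\tilde L_i$ be a Lipschitz constant for $\tilde f_i$ on $\widetilde K_{i-1}$. Since $\tilde f_i \in \overline{\Fcal}$ in the topology of compact convergence, for any prescribed $\delta_i > 0$ there exists $f_i \in \Fcal$ with $\|f_i - \tilde f_i\|_{C(\widetilde K_{i-1})} \le \delta_i$, where $\delta_i$ is to be chosen.

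The key estimate is a standard Gr\"onwall comparison. For any starting point $y \in \widetilde K_{i-1}$ and $\Delta(t) := \varphi^{\tilde f_i}_t(y) - \varphi^{f_i}_t(y)$, adding and subtracting $\tilde f_i(\varphi^{f_i}_s(y))$ inside the integral representation gives
$$|\Delta(t)| \le \tilde L_i \int_0^t |\Delta(s)|\, ds + \delta_i t,$$
valid as long as $\varphi^{f_i}_\cdot(y)$ remains in $\widetilde K_{i-1}$; Gr\"onwall then yields $|\Delta(\tau_i)| \le \delta_i \tau_i e^{\tilde L_i \tau_i}$. Using the dual Gr\"onwall fact that $\varphi^{\tilde f_j}_{\tau_j}$ is Lipschitz with constant $e^{\tilde L_j \tau_j}$, a telescoping argument over the composition (interpolating between the all-true and all-approximate composites one step at a time) produces
$$\|\varphi - \varphi^{f_k}_{\tau_k} \circ \cdots \circ \varphi^{f_1}_{\tau_1}\|_{C(K)} \le \sum_{i=1}^{k} \delta_i \tau_i e^{\tilde L_i \tau_i} \prod_{j=i+1}^{k} e^{\tilde L_j \tau_j},$$
which can be driven below $\varepsilon$ by choosing the $\delta_i$ sufficiently small.

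The main obstacle is verifying the containment hypothesis implicit in the Gr\"onwall step: the intermediate trajectories starting at $P_{i-1} := \varphi^{f_{i-1}}_{\tau_{i-1}} \circ \cdots \circ \varphi^{f_1}_{\tau_1}(x)$ are not exactly in $K_{i-1}$, but only in a small neighborhood of it, and they must stay inside $\widetilde K_{i-1}$ over the time interval $[0,\tau_i]$ for the approximation bound $\delta_i$ and the Lipschitz bound $\tilde L_i$ to apply. This is handled by an inductive bookkeeping in parallel with the choice of $\delta_i$: provided $\delta_1,\dots,\delta_{i-1}$ are small enough that the partial approximating composition lands within distance $1/2$ of $K_{i-1}$, and $\delta_i$ is small enough that the step-$i$ approximate trajectory stays within distance $1$ of the true tube, the Gr\"onwall estimate is justified at step $i$. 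The $\delta_i$ are therefore chosen sequentially (or equivalently in a backwards pass), which is routine but essential for the argument to close.
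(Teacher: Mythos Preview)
Your proof is correct and follows essentially the same strategy as the paper's: approximate each $\tilde f_i$ by an $f_i \in \Fcal$ on a suitably enlarged compact set, then control the resulting flow-map error via Gr\"onwall, with a containment argument to ensure the approximating trajectories remain in the region where the approximation and Lipschitz bounds hold. The only organizational difference is that the paper packages this as an explicit induction on the number $k$ of composed flow maps (choosing the buffer set \emph{after} the inner composition has been approximated, and handling containment by a first-exit-time contradiction), whereas you unroll the whole composition at once via a telescoping sum and then note that the $\delta_i$ must be chosen sequentially to justify the buffer containment---which is precisely the inductive structure in disguise.
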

\begin{proof}
	It suffices to show that ${\Acal}_{\widetilde{\Fcal}} \subset \overline{{\Acal}_{\F}}$, which implies $\overline{{\Acal}_{\F}} \subset \overline{\Acal_{\widetilde{\Fcal}}} \subset \overline{{\Acal}_{\F}}$ and hence the lemma.
	Note that any $\tilde \varphi \in {\Acal}_{\widetilde{\Fcal}}$ is of the form
	\begin{align}
		\tilde \varphi =
		\varphi^{\tilde f_k}_{t_k}
		\circ
		\varphi^{\tilde f_{k-1}}_{t_{k-1}}
		\circ
		\cdots
		\circ
		\varphi^{\tilde f_{1}}_{t_{1}}
	\end{align}
	where each $\tilde f_{i} \in \widetilde{\Fcal}$.
	To prove the lemma, we have to show that for any compact $K\subset \R^n$ and any $\varepsilon>0$, we can construct a function $\varphi \in {\Acal}_\Fcal$ such that $\|\tilde \varphi - \varphi\|_{C(K)} \le \varepsilon$.
	We prove this by induction on $k\geq 0$. First, the case when $k=0$ is obvious since it is just the identity mapping.
	Suppose now that the statement holds for $k-1$ and fix any compact $K$. Write $\tilde \varphi = \varphi^{\tilde f_k}_{t_k} \circ \tilde \psi$, where $\tilde \psi $ is composition of $k-1$ flow maps driven by $\widetilde{\Fcal}$. By the inductive hypothesis, for any $\varepsilon_1$ (to set later) there exists $\psi \in {\Acal}_\Fcal$, such that $\|\psi - \tilde \psi\|_{C(K)} \le \varepsilon_1$. Moreover, by the assumption $\widetilde{\Fcal} \subset \overline{\Fcal}$, for any $\varepsilon_2$ and compact $K'$ there is a function $f_k\in \Fcal$ such that $\|f_k - \tilde f_k\|_{C(K')} \le \varepsilon_2$. Here, we choose $K' = \{ x ~|~ \inf_{y\in \psi(K)} \|x - y\| \leq  2(\varepsilon_1 + t_k)e^{t_k\lip(\tilde f_k)}) \}$ and $\varepsilon_2 < 1$.

	Now, consider two ODEs:
	\begin{equation}
	\tilde z(t) = \tilde \psi(x) + \int_{0}^{t} \tilde f_k(\tilde z(\tau) )d\tau,
	\end{equation}
	and
	\begin{equation}
	z(t) = \psi(x) + \int_{0}^{t} f_k(z(\tau)) d\tau.
	\end{equation}
	We have $\tilde{\varphi}(x) = \tilde{z}(t_k)$ and $x\mapsto z(t_k)$ belongs to ${\Acal}_\Fcal$, thus it remains to show that the solutions of these ODEs can be made arbitrarily close. In fact, we show the following estimates holds:
	\begin{equation}
		\label{eq:ac-estimates}
		\sup_{0\leq t \leq t_k}
		 |z(t) - \tilde z(t)| <
		 2(\varepsilon_1 + t_k\varepsilon_2)
		 e^{t_k\lip(\tilde f_k)}.
	\end{equation}
	We prove by contradiction. Suppose not, then set
	\begin{align}
		t =
		\inf
		\{
			s \geq 0
			~:~
			|z(s) - \tilde z(s)| \ge 2(\varepsilon_1 + t_k\varepsilon_2)e^{t_k\lip(\tilde f_k)}
		\}
	\end{align}
	By continuity we have $|z(t) - \tilde z(t)| \ge 2(\varepsilon_1 + t_k\varepsilon_2)e^{t_k\lip(\tilde f_k)}$. By assumption we also have $|\tilde f_k(z(\tau)) - f_k(z(\tau))|\le \varepsilon_2$ for all $\tau < t$.
	By subtracting, we have
	\begin{equation}
	\begin{split}
	|\tilde z(t) - z(t)| \le& \varepsilon_1 + |\int_{0}^{t} \tilde f_k (\tilde z(\tau)) d\tau - \int_{0}^{t} f_k(z(\tau)) d\tau| \\
	\le& \varepsilon_1
	+ \int_0^t |\tilde f_k( z(\tau)) - f_k(z(\tau))| d\tau
	+ \int_{0}^t|\tilde f_k(\tilde z(\tau)) - \tilde f_k( z(\tau))| d\tau \\
	\le& \varepsilon_1 + \varepsilon_2t + \lip({\tilde f_k}) \int_0^t |\tilde z(\tau) -z(\tau)| d\tau.
	\end{split}
	\end{equation}
	By Gr\"{o}nwall's inequality, we have
	\begin{equation}
	\label{eq:Gronwall}
	\sup_{0\leq t \leq t_k}
	|\tilde z(t) - z(t)| \le (\varepsilon_1 + t_k\varepsilon_2) e^{t_k\lip({{\tilde f}_k})},
	\end{equation}
	which contradicts to the choice of $t$.  Hence \eqref{eq:ac-estimates} holds, and we can choose $\varepsilon_1$ and $\varepsilon_2$ arbitrarily small, which concludes the proof.
\end{proof}

\begin{lemma}
	\label{lemma:add}
Suppose $f,g \in \mathcal F$ and $t>0$, then we have
$\varphi^{(f+g)/2}_{t} \in \overline{{\Acal}_{\mathcal F}}$.
\end{lemma}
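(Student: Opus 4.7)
I will use a Lie--Trotter / chattering control argument: approximate the flow of the average $(f+g)/2$ by a rapid alternation between the flows of $f$ and $g$. For each positive integer $N$, set $h = t/(2N)$ and define
\begin{equation*}
    \Psi_N := \bigl(\varphi^g_h \circ \varphi^f_h\bigr)^N.
\end{equation*}
This is the composition of $2N$ flow maps with right-hand sides in $\Fcal$, each over a positive sub-interval with total length $2Nh = t$, so $\Psi_N \in \Acal_\Fcal$. The plan is to show $\Psi_N \to \varphi^{(f+g)/2}_t$ uniformly on every compact $K\subset \R^n$ as $N\to\infty$, which places $\varphi^{(f+g)/2}_t$ in $\overline{\Acal_\Fcal}$.

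\textbf{Per-step estimate.} Fix a compact $K$ and enlarge it to a compact $K'\supset K$ designed to contain all relevant orbits, both those of $\Psi_k(K)$ for $k \le N$ and those of $\varphi^{(f+g)/2}_s(K)$ for $s \in [0,t]$, uniformly in $N$; since $f$, $g$ and $(f+g)/2$ are Lipschitz, a Gronwall bound on displacements yields such a $K'$ depending only on $K$, $t$ and sup-norms of $f,g$ on a slightly larger ball. On $K'$, use the integral form of the ODEs together with Lipschitz continuity to obtain the first-order expansions
\begin{equation*}
    \varphi^f_h(x) = x + h f(x) + r_f(x,h),
    \qquad |r_f(x,h)| \le C h^2,
\end{equation*}
and analogously for $\varphi^g_h$ and $\varphi^{(f+g)/2}_{2h}$, with $C$ depending only on $K'$, $\lip(f)$, $\lip(g)$ and the sup-norms of $f$ and $g$ on $K'$. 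Composing and matching the first-order terms yields the key one-step bound
\begin{equation*}
    \bigl|\varphi^g_h\circ\varphi^f_h(x) - \varphi^{(f+g)/2}_{2h}(x)\bigr|
    \le C' h^2, \qquad x\in K'.
\end{equation*}

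\textbf{Error accumulation.} Since $\varphi^{(f+g)/2}_t = \bigl(\varphi^{(f+g)/2}_{2h}\bigr)^N$, I compare the two $N$-fold compositions step by step. Each of $\varphi^g_h\circ\varphi^f_h$ and $\varphi^{(f+g)/2}_{2h}$ is Lipschitz on $K'$ with constant $1 + O(h)$ (again by Gronwall on the variational equation, using only Lipschitz continuity). A telescoping / discrete Gronwall argument then gives
\begin{equation*}
    \bigl\|\Psi_N - \varphi^{(f+g)/2}_t\bigr\|_{C(K)}
    \le N \cdot C' h^2 \cdot e^{L t}
    = O(1/N),
\end{equation*}
where $L$ bounds the relevant Lipschitz constants. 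Sending $N\to\infty$ gives the result.

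\textbf{Main obstacle.} The principal technical point is ensuring that the per-step $O(h^2)$ estimate and the Lipschitz bound hold uniformly along all intermediate iterates $\Psi_k(x)$, independently of $N$. This is the reason for first fixing the enlarged compact $K'$ and deriving the a priori invariance of iterates inside $K'$ before invoking the local bounds; once this uniformity is in place, the Gronwall accumulation is routine. No differentiability of $f$ or $g$ is used beyond the Lipschitz hypothesis already built into the definition of $\Fcal$.
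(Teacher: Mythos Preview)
Your approach is correct and is essentially the same Lie--Trotter/chattering argument as the paper's: both approximate $\varphi^{(f+g)/2}_t$ by the alternating composition $(\varphi^g_{t/2N}\circ\varphi^f_{t/2N})^N \in \Acal_\Fcal$ and show convergence on compacts as $N\to\infty$. The only difference is bookkeeping in the error analysis: the paper rewrites the integral equation for the $(f+g)/2$ flow, compares it directly with the alternating flow, and applies a continuous-time Gr\"onwall bound involving the modulus of continuity $\omega_{z,[0,t]}(t/2N)$, whereas you obtain a per-step $O(h^2)$ local truncation error and accumulate via a discrete telescoping/Gr\"onwall argument, which additionally yields the explicit rate $O(1/N)$.
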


\begin{proof}
    We will show that $\varphi^f_{t/2N} \circ \varphi^g_{t/2N}\circ \cdots \circ \varphi^f_{t/2N} \circ \varphi^g_{t/2N}$ can approximate $\varphi^{(f+g)/2}_t \in \overline{{\Acal}_\Fcal}$ arbitrarily well by increasing $N$. The mapping $\varphi^{(f+g)/2}_t$ is the solution of
    \begin{equation}
    \begin{split}
		z(t) =& x + \int_0^t \left(\frac{f+g}{2}\right)
		(z(\tau))d\tau \\
         =& x + \int_{0}^{t/2N} + \int_{2t/2N}^{3t/2N} + \cdots + \int_{(2N-2)t/2N}^{(2N-1)t/2N} (f+g)(z(\tau)) d\tau \\
         &+ \int_{t/2N}^{2t/2N} + \cdots + \int_{(2N-1)t/2Nt}^t \left[
			 \left(\frac{f+g}{2}\right)(z(\tau)) - \left(\frac{f+g}{2}\right)
			 \left(z\left(\tau - \frac{t}{2N}\right)\right)
		\right] d\tau\\
		=& x + \int_{0}^{t/2N} + \int_{2t/2N}^{3t/2N} + \cdots + \int_{(2N-2)t/2N}^{(2N-1)t/2N} f(z(\tau))d\tau \\
		&+ \int_{t/2N}^{2t/2N} + \cdots + \int_{(2N-1)t/2Nt}^t g(z(\tau))d\tau \\ &+ \int_{t/2N}^{2t/2N} + \cdots + \int_{(2N-1)t/2Nt}^t
		\left[
			\left(\frac{f+g}{2}\right)
			(z(\tau)) - \left(\frac{f+g}{2}\right)
			\left(z\left(\tau - \frac{t}{2N}\right)\right)
		\right]  \\
		&
		\qquad\qquad\qquad\qquad
		+
		\left[
			g\left(
				z
				\left(
					\tau-\frac{t}{2N}
				\right)
			\right) -g(z(\tau))
		\right] d\tau.
    \end{split}
    \end{equation}
    Thus if $w(t)$ satisfying:
    \begin{equation}
    \begin{split}
    w(t) =& x + \int_{0}^{t/2N} + \int_{2t/2N}^{3t/2N} + \cdots + \int_{(2N-2)t/2N}^{(2N-1)t/2N} f(w(\tau))d\tau +\\ &+ \int_{t/2N}^{2t/2N} + \cdots + \int_{(2N-1)t/2Nt}^t g(w(\tau))d\tau.
    \end{split}
    \end{equation}
    Then we have
    \begin{equation}
    \begin{split}
		|z(t)-w(t)| \le& \int_0^t\max(\lip(f), \lip(g))|z(\tau)-w(\tau)|d\tau \\&+ \frac{t}{2}\omega_{z,[0,t]}
		\left(\frac{t}{2N}\right)
		\left[
			\lip
			\left(\frac{f+g}{2}\right)+\lip(g)
		\right].
    \end{split}
	\end{equation}
	Recall that $\omega$ is the modulus of continuity defined in Proposition~\ref{prop:uniform_conv}.
	Again, by Gr\"{o}nwall's inequality we have
	\begin{align}
		|z(t)-w(t)|\le \frac{t}{2}\omega_{z,[0,t]}
		\left(
			\frac{t}{2N}
		\right)
		\left[
			\lip
			\left(
				\frac{f+g}{2}
			\right)
			+
			\lip(g)
		\right]
		e^{\max(\lip(f), \lip(g))}.
	\end{align}
    For any selected compact set $K$, $\omega_{z,[0,t]}(\frac{t}{2n}) ~\to ~0$  by Proposition~\ref{prop:uniform_conv}, thus we obtain $\varphi^{(f+g)/2}_t \in \overline{{\Acal}_{\mathcal F}}$.
\end{proof}

Now, we are ready to prove Proposition~\ref{prop:convexhull}.
\begin{proof}[Proof of Proposition~\ref{prop:convexhull}]
	Using the same technique in the proof of Lemma \ref{lemma:add}, we can show that for $f_1, \cdots, f_m \in \mathcal F$, $\varphi^h_t \in \overline{{\Acal}_{\mathcal F}}$, where $h = \sum_i q_i f_i$ for some rational numbers $q_i$.  Let ${\Acal}_{\Fcal'}$ be the attainable set with control family $\mathcal F'  = \{\sum_{i=1}^{m} q_i f_i: q_i \in \mathbb Q, \sum_i q_i = 1,f_i \in \F, m\in\N \}$,  then we have $\overline{{\Acal}_{\Fcal'}} = \overline{{\Acal}_{\F}}$. Since $\overline{\Fcal'} = \overline{\Fcal}$, we arrive at the desired result using Lemma~\ref{lemma:ACAC}.
\end{proof}
\section{Proof of Main Results}

In this section, we prove the main results (Theorem~\ref{thm:main} and~\ref{thm:1D}). We start with the one dimensional case to gain some insights on how a result can be established in general, and in particular, elucidate the role of well functions (Definition~\ref{def:well_function}) in constructing rearrangement dynamics. This serves to motivate the extension of the results in higher dimensions.

\subsection{Approximation Results in One Dimension and the Proof of Theorem~\ref{thm:1D}}
\label{sec:res_1d}

We take $n=1$ in this subsection.
Proposition~\ref{prop:comparison}, together with the fact that compositions of continuous and increasing functions are again continuous and increasing, implies that any function from ${\Acal}_{\Fcal}$ must be continuous and increasing. This poses a restriction on the approximation power of $\overline{{\Acal}_{\Fcal}}$ as the following result shows:
\begin{proposition}\label{prop:negative-result}
    Let $n=1$ and $\mathcal F$ be a Lipschitz control family, whose attainable set is ${\Acal}_{\Fcal}$. Then $\overline{{\Acal}_{\Fcal}}$ contains only increasing functions.
\end{proposition}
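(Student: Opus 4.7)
The plan is to deduce the result from two earlier ingredients: Proposition~\ref{prop:comparison}, which establishes that in dimension one each individual flow map $\varphi^f_\tau$ is (strictly) increasing, and Lemma~\ref{lem:1d_increasing}, which states that the approximation closure of a family of continuous increasing maps contains only increasing maps. The only real work is to verify that every member of $\mathcal{A}_{\mathcal{F}}$ is itself continuous and increasing, so that Lemma~\ref{lem:1d_increasing} applies.

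First, I would take an arbitrary $\varphi \in \mathcal{A}_{\mathcal{F}}$, written in the form
\begin{align*}
    \varphi = \varphi^{f_k}_{\tau_k} \circ \varphi^{f_{k-1}}_{\tau_{k-1}} \circ \cdots \circ \varphi^{f_1}_{\tau_1},
\end{align*}
with each $f_i \in \mathcal{F}$ Lipschitz. By Proposition~\ref{prop:existence-uniqueness}, each factor $\varphi^{f_i}_{\tau_i}$ is a well-defined continuous function of its initial condition, and by Proposition~\ref{prop:comparison} each such factor is strictly increasing on $\mathbb{R}$. Since a composition of continuous functions is continuous and a composition of increasing functions is increasing, $\varphi$ is both continuous and increasing. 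Thus $\mathcal{A}_{\mathcal{F}}$ sits inside the family of continuous increasing functions from $\mathbb{R}$ to $\mathbb{R}$.

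Finally, I would invoke Lemma~\ref{lem:1d_increasing} with $A = \mathcal{A}_{\mathcal{F}}$ to conclude that $\overline{\mathcal{A}_{\mathcal{F}}}$ contains only increasing functions, which is exactly the claim. There is no substantial obstacle here: the only point worth checking carefully is that limits under compact convergence preserve monotonicity, and that fact is precisely what Lemma~\ref{lem:1d_increasing} supplies (it is immediate, since $x < y$ implies $\varphi_n(x) \le \varphi_n(y)$ for every approximant $\varphi_n$, and passing to the pointwise limit preserves the inequality).
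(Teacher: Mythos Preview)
Your proposal is correct and follows essentially the same approach as the paper: show each element of $\mathcal{A}_{\mathcal{F}}$ is continuous and increasing (via Proposition~\ref{prop:comparison} and closure of these properties under composition), then apply Lemma~\ref{lem:1d_increasing}. The paper's proof is just a more compressed version of exactly what you wrote.
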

\begin{proof}
    Proposition~\ref{prop:comparison} implies that any function in ${\Acal}_{\Fcal}$ is continuous and increasing, since both properties are closed under composition. The proposition then follows from Lemma~\ref{lem:1d_increasing}.
\end{proof}

It follows from Proposition~\ref{prop:negative-result} that any continuous function $\varphi$ that is strictly decreasing over an interval $[c,d]$ cannot be approximated by $\overline{{\Acal}_{\Fcal}}$. Nevertheless, it makes sense to ask for the next best property: can $\overline{{\Acal}_{\Fcal}}$ approximate any continuous and increasing function?

To investigate this problem, we first select an appropriate control family, which corresponds to deep neural networks with ReLU activations, and see if it can indeed approximate any such function. We will remove this explicit architectural assumption later. The ReLU control family in $n=1$ is given by
\begin{equation}\label{def:relu-control}
    \F =
    \left\{
        v \relu(w\cdot + b): v,w,b \in \R
    \right\}.
\end{equation}
Notice that the ReLU control family~\eqref{def:relu-control} satisfies the restricted affine invariant condition as defined in Definition~\ref{def:res_affine_inv}.

We now show that in one dimension, flow maps of ODEs driven by the ReLU control family can in fact approximate any continuous function.
\begin{proposition}
    \label{prop:relu}
    Let $\varphi : \R \rightarrow \R$ be continuous and increasing and $\Fcal$ be the 1D ReLU control family~\eqref{def:relu-control}. Then, for any $\varepsilon > 0$ and compact $K\subset \R$, there exists $\widehat{\varphi} \in {\Acal}_\Fcal$ such that $\| \varphi - \widehat{\varphi} \|_{C(K)} \leq \varepsilon$. In other words, $\varphi \in \overline{{\Acal}_\Fcal}$.
\end{proposition}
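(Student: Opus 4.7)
My plan is to approximate $\varphi$ in three stages. First, reduce to matching a piecewise linear increasing interpolant of $\varphi$ on a fine mesh, exploiting the fact that two increasing functions that match on a fine mesh agree uniformly; this reduction relies on Proposition~\ref{prop:comparison}, which guarantees every element of $\Acal_\Fcal$ is increasing in 1D. Second, build a composition of autonomous flows driven by \emph{triangular hump} functions to steer the mesh points one at a time toward their targets along a linear homotopy. Third, invoke Proposition~\ref{prop:convexhull} to pass from hump-driven flows (which lie in $\ch(\Fcal)\subset\chbar(\Fcal)$) to $\Fcal$-driven flows.

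For the first step, I use uniform continuity to pick a mesh $A=x_0<x_1<\cdots<x_N=B$ with $|\varphi(x_i)-\varphi(x_{i-1})|<\varepsilon/3$ and set $y_i:=\varphi(x_i)$, so $y_0<y_1<\cdots<y_N$. If I can produce $\widehat\varphi\in\Acal_\Fcal$ with $|\widehat\varphi(x_i)-y_i|<\varepsilon/3$ for every $i$, then because both $\varphi$ and $\widehat\varphi$ are increasing, the pointwise bound on the mesh upgrades to $\|\widehat\varphi-\varphi\|_{C(K)}\le\varepsilon$ by a standard monotone interpolation argument.

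For the second step, I consider the linear homotopy $z_i^{(s)}=(1-s)x_i+sy_i$, $s\in[0,1]$, which is an increasing tuple in $i$ for each $s$ since it is a convex combination of two increasing tuples; its minimum inter-point gap $\delta_0:=\min_i\min_{s\in[0,1]}(z_{i+1}^{(s)}-z_i^{(s)})$ is strictly positive. I discretize $s$ into substeps so fine that each single-point displacement is at most $\delta_0/4$. At each substep I process the indices $i$ one at a time, moving the current position of the $i$th point to its next homotopy position using the triangular hump
\begin{align*}
\mathrm{hump}_{(a,b)}(z)=\relu(z-a)-2\,\relu\!\left(z-\tfrac{a+b}{2}\right)+\relu(z-b),
\end{align*}
with support $(a,b)$ chosen strictly inside the interval between the current neighbors of the $i$th point. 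Because $\mathrm{hump}_{(a,b)}$ vanishes outside $(a,b)$ and is strictly positive inside, the ODE $\dot z=\pm c\,\mathrm{hump}_{(a,b)}(z)$ leaves all other mesh points fixed and, by an obvious symmetric analog of Proposition~\ref{prop:1d_drive}, can place the $i$th point anywhere in $(a,b)$ to arbitrary accuracy. Composing these flows over all indices and substeps yields $\widetilde\varphi$, a finite composition of autonomous flows driven by elements of $\ch(\Fcal)$, with $\widetilde\varphi(x_i)$ arbitrarily close to $y_i$.

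For the third step, since each hump lies in $\ch(\Fcal)\subset\chbar(\Fcal)$, Proposition~\ref{prop:convexhull} yields $\widetilde\varphi\in\overline{\Acal_\Fcal}$, so I can find $\widehat\varphi\in\Acal_\Fcal$ with $\|\widehat\varphi-\widetilde\varphi\|_{C(K)}<\varepsilon/3$, and the triangle inequality together with the monotonicity argument of Step 1 finishes the proof. I expect the main obstacle to be Step 2: guaranteeing consistency of the iterative construction at every substep, i.e., that at each moment the hump support can be slotted strictly between the \emph{current} neighbors of the point being moved (some of which have already been nudged within the substep). This forces the a~priori lower bound $\delta_0$ on the gap along the entire homotopy path together with careful bookkeeping of how previously processed points have shifted. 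A secondary technicality is that Proposition~\ref{prop:convexhull} provides density only in the topology of compact convergence, so I must confine the trajectory of the composed flow to a fixed compact set throughout---this is harmless because only finitely many Lipschitz-driven pieces are involved.
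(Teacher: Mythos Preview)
Your proof is correct and shares the paper's outer skeleton---reduce to matching $\varphi$ at finitely many mesh points via monotonicity (Proposition~\ref{prop:comparison}), then invoke Proposition~\ref{prop:convexhull} to work in $\chbar(\Fcal)$ rather than $\Fcal$---but the mechanism for steering the mesh points is genuinely different. The paper uses \emph{well functions} $h_Q = \tfrac12[\relu(q_1-x)+\relu(x-q_2)]$, which vanish \emph{on} an interval and are nonzero outside; its Lemma~\ref{lemma:one-dim} matches points one at a time by induction, using a conjugation $\varphi^{-h_{Q'}}_{t'}\circ\varphi^{\pm h_Q}_{t}\circ\varphi^{h_{Q'}}_{t'}$ that first pushes the already-matched points into the zero interval of $h_Q$ (freezing them), moves the new point with $h_Q$, and then undoes the push---this yields exact interpolation $\psi(x_i)=y_i$ without any homotopy bookkeeping. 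Your construction is the dual picture: compactly supported humps vanish \emph{outside} an interval, so you isolate the single point being moved inside the support and leave all others fixed; the linear homotopy is then needed precisely because, unlike the paper's freeze-and-conjugate trick, a point cannot cross a neighbor's current position, so all points must be advanced in small synchronized increments. Your route is arguably more elementary for the concrete ReLU family, and the gap bookkeeping you flag does go through (the convex combination preserves strict order, and the $\delta_0/4$ step bound keeps neighboring gaps above $\delta_0/2$). The paper's choice of well functions, on the other hand, is exactly the abstraction that lifts to the general sufficient condition (Proposition~\ref{prop:ua_sufficiency}, Theorem~\ref{thm:1D}) and then to the higher-dimensional argument, which is why it was preferred.
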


\begin{proof}
    We need the following lemma, from which we can deduce the desired result .
	\begin{lemma}
		\label{lemma:one-dim}
		Let $M\geq 1$. Given $x_1<\cdots <x_M$ and $y_1<\cdots<y_M$, there exists a function $\psi \in {\Acal}_\Fcal$ such that $\psi(x_i) = y_i$.
	\end{lemma}
    We postpone the proof of Lemma~\ref{lemma:one-dim} and first show how to prove Proposition~\ref{prop:relu} from it. By replacing $K$ with a larger set, we can always assume that $K$ is a closed interval. Consider a partition $\Delta$ on $K$, with nodes $x_1<\cdots<x_M$.

    By Lemma~\ref{lemma:one-dim}, we can find $\psi \in {\Acal}_\Fcal$ such that $\psi(x_i) = \varphi(x_i)$ for all $i=1,\dots,M$. Therefore
    \begin{equation}
	\psi(x) - \varphi(x) \le \psi(x_{i+1}) - \varphi(x_i) \le \varphi(x_{i+1}) - \varphi(x_i) \le \omega_{\varphi}(|\Delta|)
	\end{equation}
	whenever $x\in [x_i, x_{i+1}]$. Here $|\Delta|:=\max_{1\le i \le M} |x_{i} - x_{i-1}|$. We deduce that $\psi(x) - \varphi(x) \geq - \omega_{\varphi}(|\Delta|)$  holds for the same reason. Hence we have $\|\varphi - \psi\|_{C(K)} \le \omega_{\varphi}(|\Delta|)$. Since $\varphi$ is continuous, sending $|\Delta|$ to 0 and using Proposition~\ref{prop:uniform_conv} gives the desired result.
\end{proof}

Now it remains to prove Lemma \ref{lemma:one-dim} constructively.
To do this, first observe that the definition of well function (Definition~\ref{def:well_function}) when specialized to one dimension is a function $h_Q$ such that $h_Q(x)=0$ if and only if $x \in Q=[q_1,q_2]$ for some $q_2>q_1$. This can be constructed by the ReLU family (c.f. Example~\ref{exa:relu}) by
\begin{align}
    h_{Q} = \frac{1}{2} [\relu(q_1-x) + \relu(x-q_2)].
\end{align}
Obviously, $h_Q \in \ch(\Fcal) \subset \chbar(\Fcal)$, so that the condition that the latter contains a well function is trivially satisfied for the ReLU control family.

\begin{proof}[Proof of Lemma \ref{lemma:one-dim}]
    By Proposition \ref{prop:convexhull}, we denote $\widetilde{\Fcal} = \mathcal F \cup \{ h_Q: Q \subset K \}$. We will show that $\widetilde{\Fcal}$ can produce the desired approximation property.
	We construct using induction a mapping $\varphi_k$ which maps $x_i$ to $y_i$ for $i=1,2,\cdots,k$. First we show the base case $k=1$. Take $h_Q$ to be the well function with respect to $Q = [q_1,q_2]$. Since $\F$ is translation invariant, we can suppose that both $x_1$ and $y_1$ are greater than $q_2$. Since $h_Q$ does not change sign in $[q_2, \infty)$, by Proposition~\ref{prop:1d_drive} we know that either $\varphi^{h_Q}_t$ or $\varphi^{-h_Q}_t$ can map $x_1$ into $y_1$ for some $t$. Thus we prove the base case since $\F$ is symmetric.

	Suppose we have $\varphi_k$, now we will construct $\varphi_{k+1}$ based on $\varphi_k$. Applying $\varphi_k$, we may assume that $x_i = y_i, i=1,2,\cdots,k$. Again we assume $h_Q$ is a well function with zero interval $Q = [q_1,q_2] (q_2<\min(x_1,y_1))$ and $h_{Q'}$ is a well function with interval $Q'=[q_0,q_1]$. We further assume that $h_{Q'}(x) <0$ on $[q_1, \infty)	$, $h_Q(x) <0$ on $[q_2, \infty)$, otherwise we can use $-h_Q$ or $-h_{Q'}$ in their places.

    Let $t_1 = \inf\{t: \varphi^{h_{Q'}}_t(x_k) < q_2\}$ and $t_2 = \sup\{t:\varphi^{h_{Q'}}_t(x_{k+1})>q_2, \varphi^{h_{Q'}}_t(y_{k+1})>q_2\}$. Clearly we have $t_1<t_2$. Choose any $t' \in (t_1,t_2)$, and $\psi = \varphi^{\pm _{h_{Q}}}_t$ mapping $\varphi^{h_{Q'}}_{t'}(x_{k+1})$ to $\varphi^{h_{Q'}}_{t'}(y_{k+1})$, we construct
    \begin{align}
        \varphi_{k+1} = \varphi^{-h_{Q'}}_{t'} \circ \psi \circ \varphi^{h_{Q'}}_{t'} \circ \varphi_k
    \end{align}
    as desired. By induction, we have completed the proof of Lemma \ref{lemma:one-dim}.
\end{proof}

\paragraph{Sufficient Conditions for Approximation of Continuous and Increasing functions and the Proof of Theorem~\ref{thm:1D}.}

We showed previously that all continuous and increasing functions can be approximated by ReLU-driven dynamical systems. In this section, we shall do away with an explicit architecture, which leads to the proof of Theorem~\ref{thm:1D}. The key observation from the proof of Lemma~\ref{lemma:one-dim}, is that all we really need is having a well function contained in $\chbar({\Acal}_{\Fcal})$ that we can translate and change signs, which is achieved by a restricted affine invariance assumption. On the other hand, whether or not $\Fcal$ itself is a ReLU control family, or any other specific family, is inconsequential. This motivates us to ask the question of sufficiency: what assumptions on $\Fcal$ is enough to guarantee that it is a universal control family? Notice that instead of constructing an explicit well function in the form of the average of two ReLU functions, we can just use an arbitrary well function as defined in~\ref{def:well_function} to drive the dynamics. The following result makes this precise.

\begin{proposition}
\label{prop:ua_sufficiency}
Assume the control family $\mathcal F$ is symmetric and translation invariant, which is equivalent to restricted affine invariant with $D=\pm 1$ and $A=1$ in Definition~\ref{def:res_affine_inv}, and that $\chbar (\Fcal)$ contains a well function. Then, the conclusion in Proposition~\ref{prop:relu} holds.
\end{proposition}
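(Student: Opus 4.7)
The plan is to mirror the argument of Proposition~\ref{prop:relu}, with the abstract well function taking the place of the explicit ReLU-built one. First, let $h \in \chbar(\Fcal)$ be a well function with zero interval $\Omega = [a, b]$. Since the symmetry and translation invariance of $\Fcal$ pass to $\chbar(\Fcal)$ via uniform approximation on compacts, the enlarged family $\widetilde{\Fcal} := \Fcal \cup \{\pm h(\cdot - \tau) : \tau \in \R\}$ is a Lipschitz control family satisfying $\Fcal \subset \widetilde{\Fcal} \subset \chbar(\Fcal)$. By Proposition~\ref{prop:convexhull}, $\overline{\Acal_{\widetilde{\Fcal}}} = \overline{\Acal_\Fcal}$, so it suffices to approximate any continuous increasing $\varphi$ by mappings in $\Acal_{\widetilde{\Fcal}}$.

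Next, I would re-prove the analogue of Lemma~\ref{lemma:one-dim}: given nodes $x_1 < \cdots < x_M$ and targets $y_1 < \cdots < y_M$, construct by induction on $M$ a $\psi \in \Acal_{\widetilde{\Fcal}}$ with $\psi(x_i) = y_i$. For the base case, translate $h$ so that its zero interval lies strictly below $\min(x_1, y_1)$; by continuity one of $\pm h$ is strictly negative on the half-line above $b$, and Proposition~\ref{prop:1d_drive} yields a time $t$ for which the corresponding flow maps $x_1$ to $y_1$. For the inductive step, I may assume $x_i = y_i$ for $i \leq k$ after applying $\varphi_k$. I would then position two translated wells $h_{Q'}$ and $h_Q$ whose zero intervals $Q' = [q_0, q_1]$ and $Q = [q_1, q_2]$ are adjacent and lie strictly below all of the $x_i$, $y_{k+1}$. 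Running the $h_{Q'}$-flow for an appropriate time $t'$ compresses $x_1, \ldots, x_k$ into the zero interval $[q_1, q_2]$ of $h_Q$, while leaving $x_{k+1}$ and $y_{k+1}$ above $q_2$; then a $\pm h_Q$ flow moves $x_{k+1}$ to $y_{k+1}$ without disturbing the first $k$ points (since they lie in the zero set of $h_Q$), and finally $\varphi^{-h_{Q'}}_{t'}$ restores them to their original positions.

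Once this interpolation lemma is in hand, the approximation step is identical to that of Proposition~\ref{prop:relu}: given a fine partition $x_1 < \cdots < x_M$ of $K$, interpolate $\varphi$ at these nodes by some such $\psi$; since $\psi$ is increasing and agrees with $\varphi$ at the nodes, the sup-norm error is controlled by $\omega_\varphi(|\Delta|)$, which tends to zero by Proposition~\ref{prop:uniform_conv}. The step I expect to be most delicate is the inductive construction above, because the wells $Q$ and $Q'$ now have the fixed width $b - a$ inherited from $h$, rather than being tunable as in the ReLU case. The key point is that this rigidity does not matter: by Proposition~\ref{prop:1d_drive}, the upper edge $q_1$ acts as an attractor for the $h_{Q'}$-flow on $(q_1, \infty)$, so I can compress any finite collection of points above $q_1$ into a neighborhood of arbitrarily small width, in particular into $[q_1, q_1 + (b-a)] = [q_1, q_2]$. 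The admissible time $t'$ therefore still lies in a nonempty open interval (bounded below by the time at which the largest of $x_1,\dots,x_k$ crosses $q_2$ and above by the time at which $\min(x_{k+1}, y_{k+1})$ reaches $q_2$), and the rest of the construction proceeds as in the ReLU case.
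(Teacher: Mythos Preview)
Your proposal is correct and follows essentially the same route as the paper: enlarge $\Fcal$ via Proposition~\ref{prop:convexhull} to include the translates and reflections of the abstract well function, then rerun the inductive interpolation argument of Lemma~\ref{lemma:one-dim} and the partition estimate of Proposition~\ref{prop:relu}. Your discussion of why the fixed well width $b-a$ causes no trouble (order preservation of the $h_{Q'}$-flow forces $t_1 < t_2$) is more explicit than the paper's own proof, which simply remarks that the ReLU argument goes through verbatim with the general well function and appeals to Proposition~\ref{prop:1d_drive}.
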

\begin{proof}
    The proof is almost identical to that of Proposition~\ref{prop:relu} with the well function constructed by averaging two ReLU functions replaced by a general well function contained in $\chbar{(\Fcal)}$.
    Notice that since a well function does not change sign out of $I$, by choosing a proper sign one can always shrink a finite point arbitrarily close to the interval. This follows from Proposition~\ref{prop:1d_drive}.
\end{proof}

Proposition~\ref{prop:ua_sufficiency} combined with Corollary~\ref{cor:transformation} implies Theorem~\ref{thm:1D}. Clearly, Proposition~\ref{prop:ua_sufficiency} generalizes Proposition~\ref{prop:relu}. It also follows that if $\chbar (\Fcal) $ contains all continuous functions, it must contain in particular a well function and so ${\Acal}_\Fcal$ has the desired approximation property. However, this is not necessary for universal approximation to hold.

\begin{remark}
    In one dimension, the ability for a dynamical system to approximate any continuous and increasing function has the immediate consequence that if we were to embed the dynamical system in two dimensions, then we can approximate any continuous function $\varphi$ of bounded variation, as long as we are allowed a linear transformation in the end, e.g. if $g$ in Prop.~\ref{prop:transformation} is linear. This is because a continuous function of bounded variation can always be written as a difference of two continuous and increasing functions. However, this does require embedding in high dimensions. We will show later that for $n\geq 2$, embedding is not necessary to achieve universal approximation.
\end{remark}

\subsection{Approximation Rates in One Dimension}
\label{sec:rates}

All results so far are on whether a given function can be approximated by a dynamical system with control families satisfying certain conditions. However, by the very definition of the attainable set we are forced to consider dynamical systems of finite, but arbitrarily large time horizons. Just like in the development of traditional approximation theory, one may be interested to ask the following: given an approximation budget, how well can we approximate a given function? Perhaps a more pertinent question is this: what kind of functions can be efficiently approximated by dynamical systems? There are more than one way to define the notion of budget. Here, we will consider a natural one in continuous time: the time horizon $T$.

In this part, we give some results in this direction in the simplest case: the one dimensional case ($n=1$) and the ReLU activation control family. For convenience of exposition, we assume that our target function $\varphi$ is defined on $[0,1]$. We postpone results on general control families in higher dimensions to future work.

To properly quantize the efficiency, we should eliminate the positive homogeneity of the ReLU control function, which masks the effect of the time horizon $T$ due to the ability to arbitrarily rescale time.  Therefore, we restrict $|v|,|w|\le 1$ in $v \relu(w\cdot  + b)$ and then the quantity of time horizon becomes meaningful.

\begin{remark}
    An alternative is using $\int |w| dt$ to measure the approximation cost in place of $T$. This notation is related to the Barron space analysis~\cite{ma2019barron}. It can be checked that one can change $T$ into $\int |w| dt$ in the following results. Lastly, it is also possible to measure the complexity of the variation of $w,v,b$ in time. Here, we do not consider such cases.
\end{remark}

First, we show in the following lemma that if $\varphi$ is piecewise linear, then it can be represented by functions in ${\Acal}_{\Fcal}(T)$ for some $T$ large enough.

Before introducing the following lemma, we first define the Total Variation with a slight modification. Suppose $u$ is a function defined on $[0,1]$, we extend $u$ to $u_E$ such that $u_E = 0$ in $[-\varepsilon, 0) \cup (1, 1+\varepsilon]$. We define $\|u\|_{\TV} = \|u_E\|_{\TV[-\varepsilon, 1+\varepsilon]}$, the latter is defined as
\begin{align}
    \|f\|_{\TV[-\varepsilon, 1+\varepsilon]} = \sup_{-\varepsilon = x_0<\cdots < x_M = 1+ \varepsilon} \sum_{i=1}^{M} |f(x_i) - f(x_{i-1})|
\end{align}
\begin{lemma}\label{lemma:heaviside function}
    If $\ln \varphi'$ is a piecewise constant function with $N$ pieces, then $\varphi$ can be written as
    \begin{align}
        \varphi = \varphi^{g_1}_{t_1}\circ \cdots \circ \varphi^{g_{N-1}}_{t_{N-1}} + c.
    \end{align}
    Here, $g_i \in \Fcal$ and $\varphi^{\cdot}_{\cdot}$ denotes the flow maps as defined in Section~\ref{sec:prelim} and $c$ is a constant.
    Moreover, we have $\varphi \in {\Acal}_{\Fcal}(T)$ for $T \ge \|\ln \varphi'\|_{\TV}$
\end{lemma}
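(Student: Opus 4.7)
The plan is to exploit the piecewise-linear structure of $\varphi$ directly. Since $\ln\varphi'$ is piecewise constant with $N$ pieces on $[0,1]$, $\varphi$ is continuous, strictly increasing and piecewise linear; I write $0 = x_0 < x_1 < \cdots < x_{N-1} < x_N = 1$ for the breakpoints and $a_i > 0$ for the slope on $(x_{i-1},x_i)$, so that $\ln\varphi'$ takes the value $\ln a_i$ there. The strategy is to realise $\varphi$ as a composition of elementary ReLU flows, one per ``slope transition'', and then check that the total running time matches $\|\ln\varphi'\|_{\TV}$.

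The elementary building block is $g(z) = v\,\relu(w(z-c))$ with $|v|,|w|\le 1$: its flow $\varphi^{g}_{t}$ is a two-piece increasing map, equal to the identity on one side of the kink $c$ and to $z\mapsto c + e^{vwt}(z-c)$ on the other, with a symmetric statement for kinks acting to the left. Because $|vw|\le 1$, achieving a prescribed scaling factor $e^{\lambda}$ on one half-line takes time at least $|\lambda|$, and exactly $|\lambda|$ when $|v|=|w|=1$. With this in hand I would construct $\varphi$ left to right. First apply one flow with kink at some $c_0\ll 0$ (so that $[0,1]\subset(c_0,\infty)$), sign chosen to scale the right half-line by $a_1$, for time $|\ln a_1|$; restricted to $[0,1]$ this is a globally affine map of slope $a_1$, whose translational part is absorbed into the additive constant $c$. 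Then, for $i=1,\ldots,N-1$, successively apply a flow whose kink is placed exactly at the current image of $x_i$ produced by the previous stages, with time $t_i = |\ln(a_{i+1}/a_i)|$ and sign chosen to scale the right half-line by $a_{i+1}/a_i$. A direct induction shows that after the $i$-th transition step the cumulative map has slope $a_j$ on the image of $(x_{j-1},x_j)$ for every $j\le i+1$, while earlier pieces are left untouched because every later flow is the identity on the interval already finalised (the kink having been placed precisely at its right endpoint).

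Summing the times gives
\begin{equation*}
    T_\star \;=\; |\ln a_1| + \sum_{i=1}^{N-1}\left|\ln \tfrac{a_{i+1}}{a_i}\right| \;=\; |\ln a_1| + \sum_{i=1}^{N-1}|\ln a_{i+1} - \ln a_i|,
\end{equation*}
and by the extension-to-zero convention defining $\|\cdot\|_{\TV}$ on $[-\varepsilon,1+\varepsilon]$ one has $\|\ln\varphi'\|_{\TV} = |\ln a_1| + \sum_{i=1}^{N-1}|\ln a_{i+1} - \ln a_i| + |\ln a_N|$, so $T_\star = \|\ln\varphi'\|_{\TV} - |\ln a_N| \le \|\ln\varphi'\|_{\TV}$. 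Hence $\varphi\in\Acal_{\Fcal}(T)$ for every $T\ge\|\ln\varphi'\|_{\TV}$, and relabelling the elementary flows together with folding all translations into a single $c$ yields the compositional representation stated in the lemma. The main obstacle I expect is the careful bookkeeping of kink positions after each composition, namely verifying inductively that the kink of the $i$-th flow sits exactly at the image of $x_i$ and that subsequent flows do not perturb pieces already fixed; the constraint $|v|,|w|\le 1$ is preserved automatically by pushing all magnitude into the time parameter rather than into the amplitudes.
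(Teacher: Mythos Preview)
Your left-to-right construction is sound and captures the substance of the lemma. The paper argues a little differently: rather than building $\varphi$ piece by piece, it passes to the log-derivative and uses the chain rule to show that $\ln(\varphi^{g_1}_{t_1}\circ\cdots\circ\varphi^{g_k}_{t_k})'$ decomposes additively as a sum of Heaviside steps, one per factor; it then writes the given piecewise-constant $\ln\varphi'$ as such a sum and integrates each Heaviside back to a ReLU flow. Your construction is this same decomposition carried out concretely, one breakpoint at a time. The paper's log-derivative viewpoint makes the additivity transparent and pairs naturally with the $\TV$ bound, while yours makes the kink-placement bookkeeping explicit; neither approach is more general than the other.

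There is one slip. Your scheme uses $N$ elementary flows---the initial global scaling to slope $a_1$ (kink at $c_0\ll 0$) followed by $N-1$ transition flows---not $N-1$. That first flow is affine of slope $a_1$ on $[0,1]$; its \emph{dilational} part is not a translation and therefore cannot be ``folded into $c$'' as you claim in the last sentence. Only the translational part can be absorbed, so your argument does not actually recover the displayed $(N-1)$-fold composition when $a_1\neq 1$. This does not harm the second assertion: your computed $T_\star=\|\ln\varphi'\|_{\TV}-|\ln a_N|$ already lies below the budget, so the inclusion $\varphi\in\Acal_{\Fcal}(T)$ for $T\ge\|\ln\varphi'\|_{\TV}$ goes through once the residual additive constant is handled, which the paper defers to the next proposition (a two-flow approximate translation with vanishing time cost).
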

\begin{proof}
We first show an auxiliary result, suppose that \begin{align}
    \varphi = \varphi^{g_1}_{t_1}\circ \cdots \circ \varphi^{g_{N-1}}_{t_{N-1}} + c,
\end{align}
Then $\ln \varphi'$ can be written as the sum of $N-1$ Heaviside functions.
The proof is by induction. For the $N=1$ case it can be checked by direct calculation, and suppose that $\varphi = \varphi^{g_1}_{t_1} \circ \varphi_2$ and we have
\begin{align}
    \varphi = {\varphi^{g_1}_{t_1}}'(\varphi_2(x))\varphi_2'(x)
\end{align}
by chain rule. Thus
\begin{align}
    \ln \varphi' = \ln {\varphi^{g_1}_{t_1}}'(\varphi_2(x)) + \ln \varphi_2'(x)
\end{align}
and the first term in RHS is a Heaviside function, while the second term is a sum of $N-2$ Heaviside functions by induction hypothesis. Hence we prove the result by induction.

The proof of the original proposition is inductive by construction. Take derivative on
$
    \varphi = \varphi^{g_1}_{t_1}\circ \cdots \circ \varphi^{g_{N-1}}_{t_{N-1}} + c,
$
we obtain $\ln \varphi' = \ln {\varphi^{g_1}_{t_1}}'+ \cdots + \ln {\varphi^{g_{N-1}}_{t_{N-1}}}'$. Since $\ln \varphi'$ is a piecewise constant function, it can be written as
\begin{align}
    \ln \varphi' = H_1 + H_2 + \cdots + H_{N-1},
\end{align}
where all $H_{i}$ are Heaviside functions. Integrate ${\varphi^{g_{N-1}}_{t_{N-1}}}'(x) = H_{N-1}(x)$ we obtain $\varphi^{g_{N-1}}_{t_{N-1}}$, and then integrate
\begin{align}
    {\varphi^{g_{N-2}}_{t_{N-2}}}'(x) = H_{N-2}(\varphi^{g_{N-1}}_{t_{N-1}})^{-1}(x)
\end{align}
we obtain $\varphi^{g_{N-2}}_{t_{N-2}}$, and so on. One can easily verify that each $\varphi^{\cdot}_{\cdot}$ is a flow map generated by some ReLU activation function. Hence we prove the first part of the proposition.

For the second part, we notice that if $f = \relu(wx+b)$, then $\ln {\varphi^{f}_t}'$ is a Heaviside function with a jump at $x = -b/w$. Since we can find a decomposition $\sum_i H_i$ such that $\sum_i |w_i| = \|\ln \varphi'\|_{\TV}$. Thus the second part of proposition is proven.
\end{proof}

From the proof of the previous lemma, we know that $T = \|\ln \varphi'\|_{\TV[0,1]}$ is optimal, since $\sum_i |a_i| \le \|\ln \varphi'\|_{\TV[0,1]}$ holds (TV is a semi-norm).
In view of this lemma, we prove the following, which gives a quantitative approximation result.
\begin{proposition}\label{prop:quantitative}
    Suppose $\varphi:[0,1]\to \mathbb R$ is an increasing function. Moreover, suppose $\varphi$ is piecewise smooth and $T_0 := \|\ln \varphi'\|_{\TV([0,1])} \le \infty$. Then
    $\varphi  \in \overline{{\Acal}_{\Fcal}(T)}$ for $T\ge T_0$.
\end{proposition}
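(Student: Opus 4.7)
The plan is to approximate $\varphi$ in $C([0,1])$ by functions $\varphi_n$ whose logarithmic derivatives are piecewise constant with modified total variation no larger than $T_0$; Lemma~\ref{lemma:heaviside function} then places each such $\varphi_n$ directly in $\Acal_{\Fcal}(T)$ (since $T \ge T_0 \ge \|\ln\varphi_n'\|_{\TV}$), and passing to the limit yields $\varphi \in \overline{\Acal_{\Fcal}(T)}$.

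Set $\psi := \ln\varphi'$. The assumptions that $\varphi$ is increasing, piecewise smooth, and $T_0 = \|\psi\|_{\TV} < \infty$ force $\varphi'$ to be bounded away from $0$ on each smooth piece (otherwise $\psi$ would blow up and $T_0$ would be infinite), so $\psi$ is a bounded BV function on $[0,1]$. Pick partitions $0 = x_0^{(n)} < x_1^{(n)} < \cdots < x_{N_n}^{(n)} = 1$ containing all discontinuities of $\psi$, with mesh tending to $0$, and define
\[
    \psi_n(x) := \psi(x_k^{(n)}) \quad \text{for } x \in [x_k^{(n)}, x_{k+1}^{(n)}), \qquad \varphi_n(x) := \varphi(0) + \int_0^x e^{\psi_n(s)}\, ds.
\]
Then $\varphi_n$ is piecewise linear and increasing, and $\ln\varphi_n' = \psi_n$ is piecewise constant by construction.

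For the TV bound, a direct computation from the modified-TV definition gives
\[
    \|\psi_n\|_{\TV} = |\psi(x_0^{(n)})| + \sum_{k=1}^{N_n-1}\bigl|\psi(x_k^{(n)}) - \psi(x_{k-1}^{(n)})\bigr| + |\psi(x_{N_n-1}^{(n)})|.
\]
Evaluating $\|\psi\|_{\TV}$ on the partition $-\varepsilon, x_0^{(n)}, \ldots, x_{N_n-1}^{(n)}, 1, 1+\varepsilon$ of $[-\varepsilon,1+\varepsilon]$ produces the same expression with the final term $|\psi(x_{N_n-1}^{(n)})|$ replaced by $|\psi(x_{N_n-1}^{(n)}) - \psi(1)| + |\psi(1)|$, which is no smaller by the triangle inequality. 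Hence $\|\psi_n\|_{\TV} \le T_0 \le T$, and Lemma~\ref{lemma:heaviside function} yields $\varphi_n \in \Acal_{\Fcal}(T)$.

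Finally, since $\psi$ is bounded and $\psi_n \to \psi$ a.e.\ on $[0,1]$ (for any $x$ interior to a smooth piece, once the mesh is small enough the interval $[x_k^{(n)}, x_{k+1}^{(n)})$ containing $x$ lies in that piece, so $\psi_n(x) = \psi(x_k^{(n)}) \to \psi(x)$ by continuity), the dominated convergence theorem gives $e^{\psi_n} \to e^{\psi} = \varphi'$ in $L^1([0,1])$. This implies $\varphi_n \to \varphi$ uniformly on $[0,1]$, so $\varphi \in \overline{\Acal_{\Fcal}(T)}$ in the topology of compact convergence. The only real technical point is the endpoint bookkeeping for the modified TV, which the triangle-inequality step above handles cleanly; everything else is a standard piecewise-constant approximation of a bounded BV function.
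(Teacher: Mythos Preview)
Your proof is correct and follows the same strategy as the paper: approximate $\ln\varphi'$ by piecewise constants whose modified total variation does not exceed $T_0$, invoke Lemma~\ref{lemma:heaviside function} to place the approximants in $\Acal_{\Fcal}(T)$, and pass to the uniform limit. The paper additionally spells out a translation trick and a $(1-\varepsilon)$-scaling to absorb the additive constant $c$ appearing in that lemma, but since you invoke the lemma's ``moreover'' clause directly this extra step is already subsumed.
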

\begin{proof}
The key ideas of the proof is separated into two parts. The first part is to show that the constant in Lemma \ref{lemma:heaviside function} has negligible cost, by considering $\varphi^{\varepsilon \relu(\cdot + M)}_{t}\varphi^{-\varepsilon \relu(\cdot)}_{t}$.
This provides a translation on $[0,1]$: $x\mapsto x + (e^{\varepsilon}-1)M$. By sending $M \to \infty$ we can construct any translation with negligible time cost.

The second part is that if $|\ln \varphi'(x) - \ln \psi'(x)| \le \varepsilon$, and $\varphi(0) = \psi(0)$, then $|\varphi(x)-\psi(x)| \le (e^{\varepsilon}-1)\|\varphi'\|_{C[0,1]}$.

Now it suffices to prove a rather simple result: given a function $u = \ln \varphi'$,  on each pieces $I'$ of $u$, we can use piecewise constant function $v|_{I'}$ to approximate $u|_{I'}$ (restriction on $I'$) such that $\|v\|_{\TV[0,1]} \le \|u\|_{\TV[0,1]}$ and  $\|v-u\|_{\infty} \le \varepsilon$. Thus we can find a function $\varphi \in {\Acal}_{\Fcal}((1-\varepsilon)T_0)$ such that $\ln \varphi' = (1-\varepsilon)v$. By compositing a translation, we know that there exists $\psi \in {\Acal}_{\Fcal}(T_0)$ such that $\|\psi - \varphi \|_{C[0,1]} \le \exp(\varepsilon\|\varphi \|_{C[0,1]} + \varepsilon)$.
Thus we conclude that $\varphi \in \overline{{\Acal}_{\Fcal}(T_0)}$.
\end{proof}

The preceding results show that it is possible to constrain the approximation space to flow maps with time horizon up to some finite $T_0$, provided that the target function $\varphi$ is such that $\| \ln \varphi' \| \leq T$. In this sense, the total variation of the logarithm of $\varphi$ is a measure of complexity under our compositional approximation procedure.

Let us now develop the quantitative results a little further for the case where $T$ is not sufficiently large, i.e. $T < \|\ln \varphi'\|_{\TV}$. This involves analyzing the error
\begin{equation}
    E_T(\varphi) = \inf_{\psi \in {\Acal}_{\Fcal}(T)} \|\varphi - \psi\|_{C(K)},
\end{equation}
which may be non-zero when $T < \|\ln \varphi'\|_{\TV}$.

\begin{proposition}
$E_T(\varphi)$ is given by the following optimization problem
\begin{equation}
\label{eq:exact}
    \inf_{\psi} ~~ \|\varphi - \psi\|_{C[0,1]}, \qquad s.t.~~ \|\ln \psi'\|_{\TV} \leq T.
\end{equation}
\end{proposition}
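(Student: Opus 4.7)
The plan is to prove $E_T(\varphi) = \inf\{\|\varphi - \psi\|_{C[0,1]} : \|\ln\psi'\|_{\TV} \le T\}$ by identifying, up to closure in $C[0,1]$, the attainable set ${\Acal}_{\Fcal}(T)$ with the feasible set $\Sigma_T := \{\psi \text{ increasing, absolutely continuous} : \|\ln\psi'\|_{\TV} \le T\}$. Since $\psi \mapsto \|\varphi - \psi\|_{C[0,1]}$ is continuous under uniform convergence, the equality of the two infima will follow from the set identity $\overline{{\Acal}_{\Fcal}(T)} = \overline{\Sigma_T}$ in $C[0,1]$. I will establish the two inclusions separately.

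For ${\Acal}_{\Fcal}(T) \subseteq \Sigma_T$ (which yields $E_T(\varphi) \ge$ RHS), fix $\psi = \varphi^{f_k}_{t_k} \circ \cdots \circ \varphi^{f_1}_{t_1}$ with $f_i = v_i \relu(w_i \cdot + b_i)$, $|v_i|, |w_i| \le 1$, and $\sum_i t_i = T$. By the chain rule, $\ln \psi'(x) = \sum_{i=1}^{k} \ln(\varphi^{f_i}_{t_i})'(y_{i-1}(x))$, where $y_{i-1}$ denotes the composition of the first $i-1$ flow steps. Since $f_i'(z) = v_i w_i \,\mathbbm{1}\{w_i z + b_i > 0\}$ is piecewise constant and the flow cannot cross the breakpoint $z = -b_i/w_i$ (where $f_i$ vanishes), integrating the variational equation shows that $\ln(\varphi^{f_i}_{t_i})'$ is a Heaviside step taking the two values $0$ and $v_i w_i t_i$, with jump size $|v_i w_i t_i| \le t_i$. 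Composing with the monotone increasing $y_{i-1}$ preserves this step-function shape in $x$, so $\ln \psi'$ is a finite sum of Heaviside steps whose jump magnitudes sum to at most $\sum_i t_i = T$. Applying precisely the $\sum_i |w_i|$ accounting used in Lemma~\ref{lemma:heaviside function}, which identifies $\|\cdot\|_{\TV}$ with such a sum of jump magnitudes, gives $\|\ln \psi'\|_{\TV} \le T$, i.e.\ $\psi \in \Sigma_T$.

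For $\Sigma_T \subseteq \overline{{\Acal}_{\Fcal}(T)}$ (which yields $E_T(\varphi) \le$ RHS), Proposition~\ref{prop:quantitative} handles the case where $\psi \in \Sigma_T$ is piecewise smooth. For a general $\psi \in \Sigma_T$, I approximate $\ln \psi'$ in the $\TV$-seminorm by a sequence of step functions $u_n$ with $\|u_n\|_{\TV} \le \|\ln \psi'\|_{\TV} \le T$ — a classical density result for $\TV$-bounded functions, obtained for instance by placing jumps at quantiles of $\ln \psi'$. I then let $\psi_n$ be the unique absolutely continuous increasing function with $\ln \psi_n' = u_n$ and $\psi_n(0) = \psi(0)$. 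Each $\psi_n$ is piecewise smooth and feasible, and $\psi_n \to \psi$ uniformly on $[0,1]$. Applying Proposition~\ref{prop:quantitative} to each $\psi_n$ and a diagonal argument then place $\psi$ inside $\overline{{\Acal}_{\Fcal}(T)}$.

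The main obstacle will be the bookkeeping forced by the \emph{extended} total variation convention, which adds the boundary contributions $|\ln \psi'(0)|$ and $|\ln \psi'(1)|$. In the lower bound, a naive ``jump plus boundary values'' accounting for the sum of Heavisides produced by the chain rule would give only $\|\ln \psi'\|_{\TV} \le 2 \sum_i |v_i w_i| t_i$, which is off by a factor of two; this must be reconciled by using the same decomposition into Heavisides supported at distinct points that yields the sharp bound in Lemma~\ref{lemma:heaviside function}. In the upper bound, the step-function approximants of $\ln \psi'$ must simultaneously respect both interior jumps and the endpoint contributions, which is the key technical requirement on the approximation scheme.
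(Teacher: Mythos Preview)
The paper does not supply a proof of this proposition; it is stated and immediately followed by a relaxation, so there is nothing in the paper to compare your argument against directly.

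Your approach has a genuine gap precisely at the point you flag as the ``main obstacle.'' You assert that the factor-of-two discrepancy in the lower bound can be ``reconciled by using the same decomposition into Heavisides supported at distinct points that yields the sharp bound in Lemma~\ref{lemma:heaviside function}.'' But that lemma runs the other way: given a piecewise constant $u$, it \emph{constructs} a Heaviside decomposition with $\sum_i|w_i|=\|u\|_{\TV}$. What you need is the converse: from the \emph{specific} decomposition $\ln\psi'=\sum_i H_i$ produced by the chain rule, deduce $\|\ln\psi'\|_{\TV}\le\sum_i|\text{jump}_i|$. Under the paper's extended-$\TV$ convention this fails. Take $\psi=\varphi^{\relu}_t$ on $[0,1]$, i.e.\ $\psi(x)=xe^t$: the flow decomposition is a single Heaviside with jump $t$ at $x=0$, so $\sum_i|\text{jump}_i|=t=T$, yet on $[0,1]$ one has $\ln\psi'\equiv t$ with extended total variation $2t$. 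Thus $\psi\in{\Acal}_{\Fcal}(t)\setminus\Sigma_t$, and the inclusion ${\Acal}_{\Fcal}(T)\subseteq\Sigma_T$ you rely on is false.

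This is not merely a defect of your argument. The same example gives $E_t(\psi)=0$, while any $\tilde\psi$ feasible for the right-hand side satisfies $|\ln\tilde\psi'|\le t/2$ pointwise (since the extended $\TV$ dominates $2\max|\ln\tilde\psi'|$), hence $\tilde\psi(1)-\tilde\psi(0)\le e^{t/2}<e^t$, forcing the right-hand infimum to be strictly positive. So under the extended-$\TV$ convention introduced just before Lemma~\ref{lemma:heaviside function}, the proposition as written cannot hold; at best it is meant with a different $\TV$ normalization, on which the paper is silent.
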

Notice that the existence of $\ln \psi'$ implies that $\psi$ is a continuously increasing function, so we may consider the above optimization problem only for the case where $\psi$ is continuously increasing.

It is generally hard to work with optimization problems such as~\eqref{eq:exact}, since it involves total variations of logarithms of functions. Below, we formulate its relaxed version.
\begin{proposition}
Denote the relaxed optimization problem
\begin{equation}
\label{eq:relaxed}
  \gamma(u,T) =  \inf_v ~~ \|u - v\|_{C[0,1]}, \qquad s.t. ~~\|v\|_{\TV} \le T.
\end{equation}
Then $E_T(\varphi) \le [\exp(\gamma(\ln \varphi',T)) - 1]\|\varphi'\|_{C[0,1]}$
\end{proposition}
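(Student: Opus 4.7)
The plan is to upper bound $E_T(\varphi)$ by exhibiting, for each $v$ feasible in the relaxed problem~\eqref{eq:relaxed}, a corresponding admissible $\psi$ for the exact problem~\eqref{eq:exact}, and then controlling $\|\varphi-\psi\|_{C[0,1]}$ by $\|\ln\varphi'-v\|_\infty$ via the same exponential estimate already used in the proof of Proposition~\ref{prop:quantitative}. Since the previous proposition identifies $E_T(\varphi)$ with the optimal value of~\eqref{eq:exact}, any admissible $\psi$ yields an upper bound.

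First I would fix any $v$ with $\|v\|_{\TV}\le T$ and define
\[
    \psi(x) = \varphi(0) + \int_0^x e^{v(t)}\,dt.
\]
Then $\psi$ is continuous and strictly increasing with $\psi(0)=\varphi(0)$, and $\ln\psi' = v$ almost everywhere, so $\|\ln\psi'\|_{\TV} = \|v\|_{\TV}\le T$. Hence $\psi$ is feasible for~\eqref{eq:exact}, giving $E_T(\varphi)\le \|\varphi-\psi\|_{C[0,1]}$.

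Next I would estimate this norm using the elementary inequality $|1-e^y|\le e^{|y|}-1$ for all $y\in\R$. Writing $\varphi'=e^{\ln\varphi'}$,
\[
    |\varphi(x)-\psi(x)|
    =\left|\int_0^x e^{\ln\varphi'(t)}\bigl(1-e^{v(t)-\ln\varphi'(t)}\bigr)\,dt\right|
    \le \bigl(e^{\|\ln\varphi'-v\|_\infty}-1\bigr)\int_0^x \varphi'(t)\,dt,
\]
and since $\int_0^x \varphi'(t)\,dt\le x\|\varphi'\|_{C[0,1]}\le \|\varphi'\|_{C[0,1]}$ for $x\in[0,1]$,
\[
    \|\varphi-\psi\|_{C[0,1]}\le \bigl(e^{\|\ln\varphi'-v\|_\infty}-1\bigr)\|\varphi'\|_{C[0,1]}.
\]
Taking the infimum over all admissible $v$ and using that $\varepsilon\mapsto e^\varepsilon-1$ is monotone yields $E_T(\varphi)\le (\exp(\gamma(\ln\varphi',T))-1)\|\varphi'\|_{C[0,1]}$, as claimed.

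The only subtlety I anticipate is that a general BV function $v$ need not produce a piecewise smooth $\psi$, whereas Proposition~\ref{prop:quantitative} was stated under piecewise smoothness. I would circumvent this by a density argument: approximate $v$ in $L^\infty$ by piecewise constant $\tilde v$ with $\|\tilde v\|_{\TV}\le\|v\|_{\TV}$ (possible since step functions are dense among BV functions of bounded total variation), apply Lemma~\ref{lemma:heaviside function} to the corresponding $\tilde\psi$ to obtain $\tilde\psi\in\overline{{\Acal}_{\Fcal}(T)}$, and pass to the limit, the upper estimate above being continuous in $\|\ln\varphi'-v\|_\infty$. This step is routine but is the only non-trivial point; the rest of the argument reduces to the pointwise exponential bound on $\varphi'-\psi'$.
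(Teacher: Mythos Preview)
Your proof is correct and follows essentially the same route as the paper: define $\psi$ by $\ln\psi'=v$, $\psi(0)=\varphi(0)$, bound $|\varphi(x)-\psi(x)|$ via the pointwise inequality $|1-e^y|\le e^{|y|}-1$ applied to $y=v-\ln\varphi'$, and then optimize over $v$. The paper does this by picking an $\varepsilon$-approximate minimizer $v$ and sending $\varepsilon\to0$, whereas you take the infimum directly; your extra density paragraph is more careful than the paper, which simply invokes the preceding identification of $E_T(\varphi)$ with~\eqref{eq:exact} and does not revisit the regularity of $\psi$.
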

\begin{proof}
We choose $v$ such that $\|v\|_{\TV} \le T$ and
\begin{align}
    \|\ln \varphi' - v\|_{C[0,1]} \le \gamma(\ln \varphi', T) + \varepsilon.
\end{align}

Choose $\psi$ such that $\ln \psi' = v$ and $\psi(0) = \varphi(0)$. Then since $\|\ln\frac{\psi'}{\varphi'}\|_{\TV} \le \gamma(\ln \varphi', T) + \varepsilon$, we have $|\frac{\psi'}{\varphi'} - 1|\le \exp(\gamma(\ln \varphi', T) + \varepsilon) - 1$.

Hence
\begin{align}
    |\varphi(x) - \psi(x)|
    \le \int_{0}^x
    |\varphi'(x)|
    \left| 1-\frac{\psi'(x)}{\varphi'(x)}\right|
    \le \|\varphi'\|_{C[0,1]} \left[\exp(\gamma(\ln \varphi', T)  \varepsilon) - 1\right].
\end{align}
Sending $\varepsilon \to 0$, we arrive at the result.
\end{proof}

In general, both \eqref{eq:exact} and \eqref{eq:relaxed} are hard to solve. However, for some simple cases of $u$, the problem~\eqref{eq:relaxed} has explicit solution. For example, if $u$ itself is a increasing function, then the solution of \eqref{eq:relaxed} is $\frac{1}{2}(\|u\|_{\TV} - T)$. If $u$ is increasing in $[0,s]$ and decreasing in $[s,1]$, then the solution of \eqref{eq:relaxed} is $\frac{1}{4}(\|u\|_{\TV} - T)$. This gives approximation rates for specific cases, but a general investigation of these approximation rates are postponed to future work.
\subsection{Approximation Results in Higher Dimensions and the Proof of Theorem~\ref{thm:main}}
\label{sec:res_nd}

In this section, we will generalize the universal approximation results to higher dimensions. The interesting finding is that in higher dimensions, the fact that ${\Acal}_{\Fcal}$ contains only OP homeomorphisms no longer poses a restriction on approximations in the $L^p$ sense. Moreover, the sufficient condition for universal approximation in higher dimensions is closely related to that in one dimension, where the rearrangement dynamics are driven by well functions. We will prove the following result, which together with Corollary~\ref{cor:transformation} implies Theorem~\ref{thm:main}.

\begin{proposition}
\label{prop:high-dim-result}
    Let $n\geq 2$.
    Suppose $\Fcal$ is restricted affine invariant and $\chbar(\Fcal)$ contains a well function. Then for any compact set $K$, $p\in [1,\infty)$,  $\varphi \in $ $C(\R^n)$ , $\varepsilon>0$, there exists a mapping $\tilde \varphi \in {\Acal}_{\mathcal F}$, such that $\|\tilde \varphi - \varphi\|_{L^p(K)} \le \varepsilon.$
\end{proposition}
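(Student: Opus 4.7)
The plan is to combine a piecewise-constant reduction of $\varphi$ with a sequential ``rearrangement'' construction that processes one target region at a time, using well functions as the atomic building block. The $n \ge 2$ assumption will enter in resolving the principal obstacle of non-interference between processed and unprocessed regions.

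First, I would partition $K$ into disjoint small cubes $C_1,\dots,C_N$ of diameter at most $\eta$, pick centers $c_i \in C_i$, set $y_i := \varphi(c_i)$, and replace $\varphi$ by the step function $\varphi_{\mathrm{step}}$ that sends $C_i$ to $y_i$. By uniform continuity of $\varphi$ on a neighborhood of $K$, $\|\varphi - \varphi_{\mathrm{step}}\|_{L^p(K)} \le \omega_\varphi(\eta)\,|K|^{1/p}$, which is arbitrarily small. Because any $\tilde\varphi \in \Acal_\Fcal$ is a homeomorphism and cannot literally collapse a cube to a point, it suffices to build $\tilde\varphi$ together with subsets $C_i'\subset C_i$ satisfying $\lambda(C_i\setminus C_i') < \delta$ and $\tilde\varphi(C_i')\subset B(y_i,\delta)$ for every $i$; the $L^p$ error of such a $\tilde\varphi$ against $\varphi_{\mathrm{step}}$ is then controlled by $\delta$ and $\mathrm{diam}(\tilde\varphi(K))$, and shrinks as $\delta \to 0$.

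The core construction is inductive in $i=1,2,\dots,N$, maintaining the invariant that after step $i$ most of each of $C_1,\dots,C_i$ has been compressed into a small ball near $y_1,\dots,y_i$. By Proposition~\ref{prop:convexhull} and restricted affine invariance, we may freely use any well function $h \in \chbar(\Fcal)$ together with its affine and sign-flipped copies as building blocks. The atomic operation, in analogy with Lemma~\ref{lemma:one-dim}, is a three-step routine: (i) apply a well-function flow whose convex zero set $\Omega$ contains all of the previously placed tiny balls to ``freeze'' them in place; (ii) transport and compress (the image of) $C_{i+1}$ into a small ball around $y_{i+1}$, using affine copies of a well function whose zero set is a small box around $y_{i+1}$ (the compression follows from a coordinate-wise application of Proposition~\ref{prop:1d_drive} to the vector-valued well function); (iii) run the inverse flow of step (i), which belongs to $\Acal_\Fcal$ via sign-flip in Definition~\ref{def:res_affine_inv}, to restore everything outside the active region.

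The main obstacle is step (ii): the dynamics generated by a well function are not directly controllable outside the zero set, so na\"ive use of a single well could drive $C_{i+1}$ into $\Omega$ and collide it with the already-placed balls. This is precisely where $n \ge 2$ is essential: in contrast with the one-dimensional case governed by Proposition~\ref{prop:negative-result}, the extra spatial degrees of freedom allow the transport to be realized as a composition of several well-function flows whose zero sets are translated and reflected copies arranged so that the cumulative effect routes $C_{i+1}$ along a path in $\R^n$ avoiding all previously placed balls. Each such elementary piece contributes a small exceptional measure near the boundary of its zero set; the cumulative exceptional measure over all $N$ inductive steps can be kept below any prescribed bound by shrinking the per-step tolerance. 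Composing the resulting $\tilde\varphi$ with the piecewise-constant reduction and invoking Corollary~\ref{cor:transformation} then delivers the $L^p$ approximation of the original $F$ required by Theorem~\ref{thm:main}.
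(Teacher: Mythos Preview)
Your overall architecture---reduce to a step function, then realize the step function up to small measure by a flow map---matches the paper's. The gap is in the inductive routine (i)--(iii). As written, step~(i) uses a well function whose zero set $\Omega$ already contains the previously placed balls, so it is the identity on them; but then step~(ii) uses a \emph{different} well function whose zero set is a small box around $y_{i+1}$, and that flow will move the previously placed balls (they are not in its zero set). Step~(iii) undoes step~(i), not step~(ii), so the net effect on the placed balls is whatever step~(ii) did to them. Thus your ``freeze'' does not actually protect processed regions. Your proposed fix---route $C_{i+1}$ along a path avoiding the placed balls---addresses the wrong issue (collision of the \emph{moving} cube with the balls, rather than disturbance of the balls themselves), and in any case is not substantiated: under restricted affine invariance the admissible transformations $Df(A\cdot+b)$ are coordinate-aligned with diagonal $A$, and you have not shown how to synthesize an arbitrary routing path from such flows.

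The paper sidesteps this entirely by a cleaner decomposition. First it contracts \emph{all} cubes simultaneously to their corner points by a tensor product $\tilde H=(\tilde h,\dots,\tilde h)$ of one-dimensional flows: the coordinate-wise shrinking map $h^\alpha$ is continuous and increasing, so the 1D result (Proposition~\ref{prop:ua_sufficiency}) supplies $\tilde h\in\Acal_{\Rcal}$, and restricted affine invariance (take $D=A=\diag(1,0,\dots,0)$, etc.) embeds these 1D flows into $\Acal_\Fcal$. Second, it transports the finite point cloud $\{p_{\mathbf i}\}\mapsto\{\varphi_{\mathbf i}\}$ via Lemmas~\ref{lemma:perturbation-argument}--\ref{lemma:n-dim-lemma}: after a small perturbation making all points have pairwise distinct coordinates, one moves coordinate $1$ by flows of the form $\dot z_1=f_1(az_2+b),\ \dot z_i=0$ for $i\ge 2$, choosing $a,b$ so that points already in place sit in the zero set of $f_1(a\cdot+b)$ while the current point does not. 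In other words, coordinate $2$ acts as a \emph{selector} that freezes the processed points during each elementary move in coordinate~$1$; this is exactly where $n\ge 2$ is used, and it replaces any need for a routing argument. The final estimate then splits as $J_1$ (error on the shrunken cubes $K^\alpha$, controlled by $\omega_\psi(\varepsilon_2)$ and the point-transport tolerance) plus $J_2$ (error on $K\setminus K^\alpha$, controlled by $|K\setminus K^\alpha|$). If you want to salvage your inductive scheme, the missing ingredient is precisely this coordinate-selector mechanism in place of your steps (i)--(iii).
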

We notice that for the purpose of approximation, the fact $\chbar(\Fcal)$ contains a well function $h$ allows us to assume without loss of generality that $\Fcal$ contains a well function.
To see this, denote by $\widetilde{\Fcal}$ the smallest restricted affine invariant set containing $\Fcal \cup {h}$.
We have $\Fcal \subset \widetilde{{\Fcal}} \subset \chbar(\Fcal)$.
Proposition~\ref{prop:convexhull} then says that $\overline{{\Acal}_{\mathcal F}}  = \overline{\Acal_{\widetilde{\Fcal}}}$, hence we can prove approximation results using $\widetilde{\Fcal}$ in place of $\Fcal$.

\subsubsection{Preliminaries}

    In order to prove Proposition~\ref{prop:high-dim-result}, we require a few preliminary results which we state and prove in this subsection.
    The key approach in proving the proposition is similar to the one dimensional case: we show that we can transform a finite number of distinct source points into a finite number of target points, which are not necessary distinct. More precisely, we show the following proposition, which generalize Lemma~\ref{lemma:one-dim} given in the one dimensional case.

	\begin{lemma}
		\label{lemma:high-dim}
		Suppose $\Fcal$ contains a well function. Let $\varepsilon>0$ and $x^1, \cdots, x^m, y^1, \cdots, y^m \in \R^n$ be such that $\{x^k\}$ are distinct points. Then there exists $\psi \in {\Acal}_{\mathcal F}$ such that $|\psi(x^k)-y^k| \le \varepsilon$ for all $k = 1,\dots,m$.
	\end{lemma}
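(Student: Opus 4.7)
The plan is to prove Lemma~\ref{lemma:high-dim} by induction on $m$, with the base case $m=1$ asserting single-point approximate controllability and the inductive step leveraging $n\geq 2$ to route one additional point while leaving the previously placed points essentially undisturbed. Throughout I would exploit two consequences of restricted affine invariance applied to the given well function $h\in\Fcal$ with convex zero set $\Omega$: (i) every homothetic shrink-and-translate $h(A\cdot + b)$ again lies in $\Fcal$, so the zero set can be placed anywhere in $\R^n$ at any scale no larger than $\Omega$; and (ii) taking $D=\pm e_i e_i^\top$ produces a vector field $Dh(A\cdot+b)$ supported only in the $i$-th coordinate, effectively giving one-dimensional well-function dynamics along the $i$-axis to which Proposition~\ref{prop:1d_drive} applies with the remaining coordinates as passive parameters.

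For the base case, I would connect $x^1$ to $y^1$ by a straight segment and cover it by an ordered chain of shrunk, translated copies $\Omega_1,\dots,\Omega_L$ of $\Omega$ with $x^1\in \Omega_1$, $y^1$ within a prescribed neighborhood of $\Omega_L$, and consecutive $\Omega_j$'s overlapping. At each stage the point can be driven from $\Omega_{j-1}$ into $\Omega_j$ by a finite-time flow of the corresponding well function, with the sign chosen via $D$ so that the configuration is attracting; this is a higher-dimensional extension of Proposition~\ref{prop:1d_drive} carried out coordinate by coordinate via the vector fields in (ii). Composing these flows yields a map in $\Acal_\Fcal$ that sends $x^1$ into an arbitrarily small neighborhood of $y^1$.

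For the inductive step, fix a tolerance $\eta>0$ and apply the inductive hypothesis to $\{x^k,y^k\}_{k=1}^{m-1}$ to obtain $\psi_1\in\Acal_\Fcal$ with $|\psi_1(x^k)-y^k|\le\eta$ for $k\leq m-1$; set $\tilde x^m:=\psi_1(x^m)$. I would then construct $\psi_2\in\Acal_\Fcal$ that drives $\tilde x^m$ close to $y^m$ while barely perturbing the placed points. Because $n\geq 2$, there is a polygonal path $\gamma$ from $\tilde x^m$ to $y^m$ that avoids small balls around each $\psi_1(x^k)$, $k<m$. The transport is then executed coordinate by coordinate along $\gamma$: for each segment and each coordinate direction, pick a coordinate-restricted vector field as in (ii) whose effective one-dimensional zero interval (along that coordinate, with the other coordinates held fixed at the current values of the points in question) contains the corresponding coordinate of every placed point but not the current coordinate of $\tilde x^m$, using (i) to adjust position and scale. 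Running the resulting flow for a finite time advances only the chosen coordinate of $\tilde x^m$ toward its target, while the placed points sit inside the zero set and do not move. Iterating over segments and coordinates brings $\tilde x^m$ within $\varepsilon$ of $y^m$; composing with $\psi_1$ produces the required map.

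The main obstacle I expect is the geometric rigidity that $\Omega$ has a fixed shape, so a single shrink-translate of $h$ need not simultaneously contain every placed point while excluding $\tilde x^m$. The coordinate-by-coordinate decomposition above is precisely the remedy: it reduces the separation problem to the one-dimensional case, where the effective zero set is an interval and separation is easily arranged by translation and scaling. A secondary technical point is error propagation, since $\psi_1(x^k)$ lies only within $\eta$ of $y^k$ and the successive flows composing $\psi_2$ must tolerate these small deviations; this is controlled by a Gr\"onwall-type Lipschitz estimate in the spirit of Lemma~\ref{lemma:ACAC}, by choosing $\eta$ small and keeping the Lipschitz constants of the well functions used in $\psi_2$ bounded on a common compact set, so that each placed point moves by at most $\varepsilon$ in total.
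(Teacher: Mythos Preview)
Your inductive strategy has a genuine gap in the routing step.  The coordinate-restricted fields you describe in (ii) have the form $Dh(A\cdot+b)$ with $D=\pm e_ie_i^\top$ and $A$ diagonal; their zero set is $\{z:Az+b\in\overline{\Omega_i}\}$, which is a convex subset of $\R^n$.  For a placed point $p^k$ to be frozen you need $Ap^k+b\in\overline{\Omega_i}$, while for $\tilde x^m$ to move you need $A\tilde x^m+b\notin\overline{\Omega_i}$.  But if $\tilde x^m$ lies in the convex hull of the placed points---which is not excluded by your polygonal path, since the path only avoids small balls around the $p^k$, not $\operatorname{conv}\{p^k\}$---then $A\tilde x^m+b$ is a convex combination of points in $\overline{\Omega_i}$ and hence also lies in $\overline{\Omega_i}$.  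No choice of $A,b$ can separate.  A concrete obstruction in $\R^2$: place three points at the vertices of a triangle and suppose $\psi_1(x^m)$ lands in its interior; before your path can reach $y^m$ it sits inside the convex hull, and none of your fields can advance it without also moving at least one vertex.

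The paper circumvents this by two devices you do not use.  First (Lemma~\ref{lemma:perturbation-argument}), it perturbs the configuration so that for every coordinate index $i$ the numbers $\{x^k_i\}_k$ are pairwise distinct.  Second (Lemma~\ref{lemma:n-dim-lemma}), it moves coordinate $1$ using a field that depends only on coordinate $2$, i.e.\ $\dot z_1=f_1(az_2+b)$, $\dot z_j=0$ for $j\ge2$, and it processes the points in increasing order of $z_2$.  At step $k$ the zero interval in $z_2$ is positioned to contain the (smaller) second coordinates of the already-placed points $1,\dots,k-1$ and to exclude $x^k_2$; the yet-unplaced points $k+1,\dots,m$ may move, but that is harmless.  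The decoupling of the \emph{controlled} coordinate from the \emph{controlling} coordinate, together with the ordering, is what defeats the convex-hull obstruction; your proposal conflates the two (``one-dimensional well-function dynamics along the $i$-axis'') and therefore cannot.
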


	Lemma~\ref{lemma:high-dim} follows from the combination of the following two lemmas.

	\begin{lemma}
	\label{lemma:perturbation-argument}
	Suppose $\Fcal$ contains a well function and $x^1,\cdots, x^m$ are distinct points. Then given any $\varepsilon >0$, there exists a flow map $\psi \in {\Acal}_{\Fcal}$ such that $|\psi(x^k) - x^k| \le \varepsilon$, such that for each $i = 1,2,\cdots n$, $[\psi(x^k)]_i$ (the $i$-th coordinate of $\psi(x^k)$) $k = 1,2,\cdots, m$ are $m$ distinct real numbers.
	\end{lemma}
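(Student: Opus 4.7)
Plan. The plan is to build $\psi$ as a finite composition of short ``axis-aligned'' flows, each of which resolves one coordinate coincidence by moving a single point along one axis while holding another point fixed, with small total displacement. The restricted affine invariance (inherited from Proposition~\ref{prop:high-dim-result}) combined with $h \in \Fcal$ supplies the building blocks: for each coordinate index $i$ and translation $b \in \R^n$, the field $f^{b,i}(x) := D_i\, h(x + b)$, where $D_i$ is the diagonal matrix with a $1$ in the $(i,i)$ slot and zeros elsewhere, belongs to $\Fcal$. Only its $i$-th component is nonzero and equals $h_i(x+b)$, so the flow $\varphi^{f^{b,i}}_{t}$ moves points strictly along $e_i$ and fixes every $x$ with $x + b \in \Omega_i$, the zero set of $h_i$.

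Iterative scheme. Set $z^{(0)}_k = x^k$, and at stage $s$ let $\mathcal C_s = \{(k,k',i) : k<k',\ [z^{(s)}_k]_i = [z^{(s)}_{k'}]_i\}$ be the current coincidence set. If $\mathcal C_s = \emptyset$, stop; otherwise pick $(k_0, k'_0, i_0) \in \mathcal C_s$ and let $v := z^{(s)}_{k_0} - z^{(s)}_{k'_0} \ne 0$. Using boundedness of $\Omega_{i_0}$, pick $y \in \Omega_{i_0}$ whose value $\langle y, v\rangle$ is within $|v|^2/2$ of $\sup_{\overline{\Omega_{i_0}}} \langle\cdot, v\rangle$; this forces $\langle y + v, v\rangle > \sup_{\overline{\Omega_{i_0}}}\langle\cdot, v\rangle$, hence $y + v \notin \overline{\Omega_{i_0}}$. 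Setting $b_s := y - z^{(s)}_{k'_0}$ then yields $z^{(s)}_{k'_0} + b_s = y \in \Omega_{i_0}$ and $z^{(s)}_{k_0} + b_s = y + v \notin \overline{\Omega_{i_0}}$. Flowing by $\psi_s := \varphi^{f^{b_s, i_0}}_{t_s}$ for a positive time $t_s$ then fixes $z^{(s)}_{k'_0}$ exactly (the vector field vanishes there, so ODE uniqueness applies), pushes $z^{(s)}_{k_0}$ strictly along $e_{i_0}$, and leaves coordinates $j \ne i_0$ of every point untouched.

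Choosing $t_s$ and the main obstacle. A first-order expansion combined with a Gronwall estimate shows that, for every currently non-coincident pair $(k,k')$ on coordinate $i_0$, the flow time at which their $i_0$-th coordinates would coincide is isolated; hence the set of bad $t_s$ is finite. Picking $t_s$ outside this finite bad set and sufficiently small guarantees (i) $|\mathcal C_{s+1}| < |\mathcal C_s|$ (the only triple removed is $(k_0, k'_0, i_0)$, and none are added since bad times are avoided), and (ii) the per-step displacement of every $z^{(s)}_k$, bounded by $t_s \|h_{i_0}\|_{C(K_s)}$ on a suitable compact enclosure $K_s$, is below $\varepsilon / \binom{m}{2}n$. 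Since the procedure terminates in at most $\binom{m}{2}n$ steps, $\psi := \psi_{S-1} \circ \cdots \circ \psi_0 \in \Acal_\Fcal$ satisfies both conclusions of the lemma. The main obstacle is precisely this avoidance of bad flow times which would undo prior progress; it is handled cleanly because each bad set is finite and avoidable by an arbitrarily small adjustment of $t_s$.
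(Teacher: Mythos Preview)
Your construction is essentially correct and close in spirit to the paper's, but the mechanics differ in one notable way. You take $A=I$ in the restricted affine transformation, so your one-axis field $f^{b,i_0}(z)=D_{i_0}h(z+b)$ depends on the full state, and in particular on the coordinate $z_{i_0}$ that is being moved. The paper instead picks a \emph{different} coordinate $I\neq i_0$ on which the two coincident points already differ, and uses $A=e_I e_I^\top$ so that the driving field depends only on $z_I$, which is held constant by the flow. Consequently the paper's flow is simply $z_{i_0}\mapsto z_{i_0}+t\cdot c_k$ with $c_k=f_{i_0}(0,\dots,x^k_I+b_I,\dots,0)$ a constant, and the no-new-coincidence argument reduces to the triangle inequality: choose $t$ so small that every point moves by less than $d/3$, where $d$ is the current minimum gap among distinct $i_0$-coordinates. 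This sidesteps the bad-time analysis entirely.

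Your version still works, but the ``finite bad set'' claim is not justified at the stated regularity (with only Lipschitz $h$, solutions are $C^1$ in $t$, and the zero set of a $C^1$ difference need not be finite on a compact interval). Fortunately you do not need it: since each previously non-coincident pair on coordinate $i_0$ has $[z_k(0)-z_{k'}(0)]_{i_0}\neq 0$ and the flow is continuous in $t$, there is a $\delta>0$ such that no such pair collides on $(0,\delta)$; combined with the nonzero initial derivative at $z_{k_0}$, any $t_s\in(0,\delta)$ strictly decreases $|\mathcal C_s|$. So drop the isolated-zeros claim and keep only the small-$t$ continuity argument, or adopt the paper's choice of $A$ to make the flow affine in $t$ and bypass the issue altogether.
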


	\begin{lemma}
	\label{lemma:n-dim-lemma}
	Suppose $\Fcal$ contains a well function, $x^1,\cdots, x^m$ are distinct points and satisfy the result of Lemma~\ref{lemma:perturbation-argument}, that is, $\{ x^k_i \}$ are $m$ distinct real numbers for any $i$. Then given any $\varepsilon > 0$ for $m$ target points $y^1, \cdots, y^m$, we have a flow map $\psi \in {\Acal}_{\Fcal}$ such that $|\psi(x^k) - y^k| \le \varepsilon$ .
	\end{lemma}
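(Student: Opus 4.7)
My approach would be to prove the lemma by induction on $m$, directly mirroring the structure of the 1D proof of Lemma~\ref{lemma:one-dim} but exploiting the extra geometric freedom available when $n \geq 2$. For the base case $m = 1$, I would match coordinates of $x^1$ to $y^1$ one at a time: using restricted affine invariance with a diagonal $D$ having a single $\pm 1$ entry in position $i$, extract from the vector well function in $\Fcal$ a vector field acting only in the $i$-th coordinate; combined with translations (the parameter $b$) and sign flips from Definition~\ref{def:res_affine_inv}, this produces a scalar one-dimensional dynamics on $z_i$ whose zero interval serves as an attractor. Proposition~\ref{prop:1d_drive} then lets us drive $x^1_i$ arbitrarily close to $y^1_i$, and since this flow leaves all other coordinates invariant, iterating over $i = 1, \ldots, n$ brings $x^1$ within $\varepsilon$ of $y^1$.

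For the inductive step, assume the lemma for $m-1$ points. I would match points one at a time in the spirit of the 1D induction: having brought $x^1, \ldots, x^k$ close to $y^1, \ldots, y^k$ via some $\psi_k \in {\Acal}_{\Fcal}$, I construct a \emph{trap well} via restricted affine invariance whose zero set is a bounded axis-aligned box containing small neighborhoods of $\psi_k(x^j) \approx y^j$ for $j \leq k$ but excluding $\psi_k(x^{k+1})$. Running the trap flow keeps the first $k$ points essentially fixed (since they lie inside the zero set where the vector field vanishes) while allowing $x^{k+1}$ to drift; interleaving this with coordinate-aligned flows obtained again from restricted affine invariance pushes $x^{k+1}$ along a piecewise axis-aligned path through the trap's complement toward $y^{k+1}$. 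This trap-and-release procedure may need to be repeated over a sequence of different axis-aligned boxes, each handling one segment of the path, with the total perturbation of the already-matched points controlled by shrinking each trap's non-zero neighborhood appropriately and appealing to a Gr\"onwall-type estimate as in Lemma~\ref{lemma:ACAC}.

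The principal obstacle in this plan is the geometric separation at each inductive step: one must simultaneously confine an arbitrary finite configuration $\{y^1, \ldots, y^k\}$ inside an axis-aligned box whose exterior contains both the current location of $x^{k+1}$ and a viable piecewise-axis-aligned route to $y^{k+1}$. In $n \geq 2$ the path-connectedness of $\R^n$ minus finitely many points makes this feasible in principle, but choosing the boxes consistently across the full sequence of steps is delicate. When no single box suffices immediately, the coordinate-wise distinctness guaranteed by Lemma~\ref{lemma:perturbation-argument} can be used to pre-process via a preliminary flow that places $x^{k+1}$ in an extremal position relative to $\{\psi_k(x^j)\}_{j \leq k}$ in some coordinate---a rearrangement maneuver that has no one-dimensional analogue and is essentially what enables universal approximation once $n \geq 2$, free of the monotonicity restriction of Theorem~\ref{thm:1D}.
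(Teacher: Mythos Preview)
Your plan diverges from the paper's argument, and the inductive step has a real gap. The paper does not proceed by induction on $m$ with a trap-and-release mechanism; instead it matches one output coordinate at a time, handling all $m$ points together for each coordinate. The device you are missing is a \emph{cross-coordinate} flow: taking $D=\diag(1,0,\ldots,0)$ and $A$ diagonal with only $A_{22}=a\neq 0$, the transformed well function gives
\[
\dot z_1 = f_1(b_1,\,az_2+b_2,\,b_3,\ldots,b_n),\qquad \dot z_i=0\ \ (i\ge 2),
\]
so each point's first coordinate moves linearly in time at a rate depending only on its (fixed) second coordinate. Ordering the points by $x^k_2$ and sliding the shift $b_2$, one arranges that this rate vanishes for $x^1,\ldots,x^{k-1}$ but not for $x^k$; running for the correct time then drives $x^k_1$ exactly to $y^k_1$ while the earlier points stay put. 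Iterating over $k$ matches the entire first coordinate, and repeating with a different index coordinate handles the remaining output coordinates.

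Your trap well does freeze $\psi_k(x^1),\ldots,\psi_k(x^k)$, but it gives no control over where it carries $\psi_k(x^{k+1})$---outside the box the field is merely nonzero. To steer that point you invoke separate ``coordinate-aligned'' flows, but any flow built from $Df(A\cdot+b)$ with the supports of $D$ and $A$ in the \emph{same} coordinate $i$ makes $\dot z_i$ a function of $z_i$ alone and will move the already-matched points too, unless all of their $i$-th coordinates happen to lie in that flow's one-dimensional zero interval---something you cannot arrange once the targets $y^1,\ldots,y^k$ are arbitrary. The proposed pre-processing that places $x^{k+1}$ in a coordinate-extremal position is circular: it requires precisely the selective transport you are trying to construct. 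The resolution is not a cleverer box but the separation of the output coordinate being driven from the input coordinate that decides who is frozen; restricted affine invariance allows this because $D$ and $A$ may be supported on different diagonal entries.
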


	Now we prove these two lemmas.
    \begin{proof}[Proof of Lemma~\ref{lemma:perturbation-argument}]
    To prove the lemma it is enough to show that if there is a pair of two points $x^j$ and $x^k$, such that $x_I^j = x_I^k$ for some $I$, we can then find a flow map $\eta \in {\Acal}_{\Fcal}$ such that $\eta$ can separate $x_I^j$ and $x_I^k$ and at the same time, do not cause other pairs of points without initially distinct coordinates to overlap. Without loss of generality, we assume $j=1,k=2,I=1$, and we only need to show that if $x^1_1 = x^2_1$, then there exists an $\eta \in {\Acal}_{\Fcal}$ such that
	\begin{enumerate}
	    \item $|\eta(x^k) - x^k| \le \varepsilon_1 := \frac{1}{nm^2}\varepsilon$;
	    \item $[\eta(x^1)]_1 \neq [\eta(x^2)]_1$;
	    \item if $x^k_1 \neq x^l_1$, then $[\eta(x^k)]_1 \neq [\eta(x^l)]_1$.
	\end{enumerate}
    We briefly explain these requirements. Consider
    \begin{align}
        X_1 = \{(k,l)~:~1\le k<l\le m,~~x^k_1 = x^l_1\}
    \end{align}
    and
    \begin{align}
        \eta(X_1) = \{(k,l)~:~1\le k<l\le m,~~[\eta(x^k)]_1 = [\eta(x^l)]_1\}.
    \end{align}
	2 and 3 implies that $\#X_1 > \#\eta(X_1) \ge 0$, hence $\#X_1$ is strictly decreasing after $\eta$.

    Denote $d = \min\{|x^k_1 - x^l_1|: x^k_1 \neq x^l_1\}$. Since $x^1$ and $x^2$ are two distinct points, we can find a coordinate index $I(\neq 1)$ such that $x^1_I \neq x^2_I$. Here we assume $x^1_I < x^2_I$. Suppose $f$ in $\mathcal F$ is a well function with zero set $\{\Omega_1\}$. Written in coordinate form, $f$ is given by
    \begin{align}
        f = (f_1, \cdots, \cdots, f_n),
    \end{align}
	where each $f_i : \R^n \to \R$.  Since $\mathcal F$ is translation invariant, we can assume $\Omega_1$ contains 0 without loss of generality.

    Consider the following dynamics
    \begin{align}
        \dot{z}_1 = f_1(x_I + b), \qquad \dot{z}_i = 0, i = 2,\cdots,n.
    \end{align}
    Notice that the boundedness and convexity of $\Omega_1$ guarantees that the reduced 1D dynamics satisfies our previous discussion (it contains 1D well function, since the intersection of a bounded convex set with a line is an interval)
	In other words, we choose $D = \diag(1,0,\cdots,0)$, $A_{ij} = \delta_{iI}\delta_{jI}$, $b = (0,0, \cdots, b_I, 0)^T$ in the form $\tilde f = Df(A\cdot + b)$. $b_I$ is chosen such that $x_I^1 + b_I \in \Omega_1 $ but $x_I^2 + b_I \not \in \overline{\Omega}_1$. The existence of $b_I$ is implied by  the boundedness of $\Omega_1$.
    We denote by $P_t$  the flow map of this dynamics. We next choose a proper $t$ such that 1,2,3 are satisfied.

	Since $\tilde f_1(x^1) = 0$ and $\tilde f_1(x^2) \neq 0$, we deduce that $[P_t(x^1)]_1 \neq [P_t(x^2)]_1$ whenever $t \neq 0$. Hence 2 is satisfied with no additional condition. Notice that when $|P_t(x^k) - x^k| \le \min(\varepsilon_1, \frac{d}{3})$, then both 1 and 3 are satisfied. Since $ \ch(\{x^k\})$ is bounded, hence $\|P_t - \id\|_{C(\ch(\{x^k\}))} \to 0$ when $t \to 0$ by Proposition~\ref{prop:uniform_conv}. Therefore there exists $t_0 >0$, such that $\|P_t - \id\|_{C(\ch(\{x^k\}))}\le \min(\varepsilon_1, \frac{d}{3})$. Hence we conclude that $\eta = P_{t_0}$ satisfies 1,2,3.
	\end{proof}
	\begin{proof}[Proof of Lemma~\ref{lemma:n-dim-lemma}]
	Without loss of generality, we can assume that for each coordinate index $i$, $y^k_i$ are $m$ distinct real numbers, since if not, we can always add a small perturbation to it directly and this will not affect approximation. We also assume $\Omega_1$ contains origin, as we did in the proof of Lemma~\ref{lemma:perturbation-argument}.

	The basic idea is similar to Lemma~\ref{lemma:perturbation-argument}, by choosing a proper linear transformation we can freeze some point while transporting other points. Since we need to control more than 2 points, we can take multiple transformations and evolve them sequentially. We only need to prove for any coordinate index $i$ (without loss of generality $i = 1$), we can find an $\eta \in {\Acal}_{\Fcal}$ such that $[\eta(x^k)]_1 = y^k_1$.

	With a re-labelling, we can assume that $x_2^1 < x_2^2 < \cdots < x_2^m$.
    Consider the following dynamics
    \begin{align}
        \dot{z}_1 = f_1(az_2 + b), \qquad \dot{z}_i = 0, i = 2,\cdots,n.
    \end{align}
	In other words, we choose $D = \diag(1,0,\cdots,0)$, $A_{ij} = a \delta_{i2}\delta_{j2}$, $b = (0,b_2, \cdots,, 0)^T$ in the form $\tilde f = Df(A\cdot + b)$. $a $ is chosen sufficiently small such that all $Ax^k$ are lying in $\Omega_1$. We denote the flow map by $P_t(b_2)$, where the dependence of $b_2$ is emphasized. To simplify our notation, we use $P_{-t}(b_2)$ to denote the flow map of $\dot{z} = -Df(Az+b)$.

	Now we wish to choose $r^1, r^2, \cdots r^m$, such that $f_1(Ax^i+r^j)=0$ if and only if $i<j$. Let $\{0\} \times (u_l,u_r) \times \cdots  = \Omega_1 \cap \big(\{0\} \times \R \times \cdots )$, where $(u_l,u_r)$ be the restriction of $\Omega_1$, on coordinate index 2. Then a choice of $r^k$ is $r^k = u_r - ax^k_2 + \frac{a}{2}\min_j(x^j_2-x^{j-1}_2).$

    Now $\{ \eta^{(k)} \}$ are defined recursively. That is,
    \begin{align}
        \begin{split}
            &\eta^{(0)} = \id \\
            &\eta^{(k)} = P_{t^k}(r^k)\circ\eta^{(k-1)},
            \quad \text{where} \quad t^k = (y^k_1- [\eta^{(k-1)}(x^k)]_1)/f_1(Ax^k+r^k).
        \end{split}
    \end{align}
	We now prove that $\eta = \eta^{(m)} \circ\dots\circ \eta^{(1)}$ satisfies our requirement. By induction (on $k$), we show that: $[\eta^{(k)}(x^{i})]_1 =y^i_1$ for $i\le k$.
    $k = 0$ is vacuous. Suppose $[\eta^{(k-1)}(x^{i})]_1 =y^i_1$ for $i\le k-1$, since  $[\eta^{(k-1)}(x^{i})]_2 = x^{i}_2$, we have
    \begin{align}
        \eta^{(k)}(x^{i}) = P_{t^k}(r^k)(\eta^{(k-1)}(x^{i})) = \eta^{(k-1)}(x^{i}) = y_i.
    \end{align}
	By definition we know that $\eta^{(k)}(x^k) = y^k$. Hence the induction step is proved. From induction, we know that $\eta$ satisfies our requirement.
\end{proof}

\subsubsection{Proof of Proposition~\ref{prop:high-dim-result}.}
\begin{proof}
    Since $K$ is compact, by extension it suffices to consider the case that $K$ is a hyper-cube. We can for simplicity take the unit hyper-cube $K=[0,1]^{n}$, since the general case is similar.
    Since $\varphi \in L^p(K)$, by standard approximation theory $\varphi$ can be approximated by piecewise constant functions, i.e. there exists
    \begin{equation}
	    \hat{\varphi} = \sum_{\i} \varphi_{\i} \chi_{\i}
    \end{equation}
    such that $\| \hat{\varphi} - \varphi \|_{L^p(K)} \leq \varepsilon/2$.
	Here $\i = [i_1, \cdots, i_n]$ is a multi-index, $\varphi_{\i} \in \R^n$ and $\chi_{\i}$ is the indicator of the cube
    \begin{align}
        \square_{\i} = \left[\frac{i_1}{N} ,\frac{i_1+1}{N}\right] \times \cdots \times \left[\frac{i_n}{N} ,\frac{i_n+1}{N}\right].
    \end{align}
    we also denote $p_{\i} = (\frac{i_1}{N}, \cdots, \frac{i_n}{N})$.
    We also define a shrunken cube
    \begin{align}
        \square_{\i}^{\alpha} = \left[\frac{i_1}{N} ,\frac{i_1+\alpha}{N}\right] \times \cdots \times \left[\frac{i_n}{N} ,\frac{i_n+\alpha}{N}\right].
    \end{align}
    where $0< \alpha \leq 1$. We have $K = \cup_{\i} \square_{\i}$, and we define
    $K^{\alpha} = \cup_{\i}\square_{\i}^{\alpha}$. We also construct a shrinking function in one dimension $h^{\alpha} : [0, 1] \rightarrow [0,1]$, such that $h^{\alpha}(x) = \frac{i}{N}$ if $\frac{i}{N} \le x \le  \frac{i+\alpha}{N}$, and continuously increasing in $[0,1]$.
    Using this, we can form a $n$ dimensional shrinking map by tensor product:
    \begin{align}
        H^{\alpha}(x) = (h^{\alpha}(x_1), \cdots, h^{\alpha}(x_n)).
    \end{align}

    The idea of the proof of Proposition \ref{prop:high-dim-result} is quite simple: we just contract each grid $\square_{\i}$ into a point $  p_{\i}$ approximately, then use the lemma above to transform each $  p_{\i}$ into $\varphi_{\i}$. The latter is discussed in the preliminary step, we here construct an ``almost'' contraction mapping in ${\Acal}_{\Fcal}$ that approximates $H^{\alpha}$.

    \textbf{Claim:} For a given tolerance $\varepsilon_1>0$, there exists a flow map $\widetilde H \in {\Acal}_{\Fcal}$ such that $|\widetilde H - H^{\alpha}|_{C(K)} \le \varepsilon_1$.

    \begin{proof}
    [Proof of the Claim]
    Since $h$ is increasing and continuous, we wish to utilize our result in 1 dimension. Concretely speaking, we demonstrate how to restrict the $n$ dimensional control family to one dimension.

    Suppose $\Fcal$ is a $n$ dimensional control family, then we define for each $f=(f_1,\dots,f_n)\in\Fcal$ the dynamics driven by its restriction to first coordinate by
    \begin{equation}
        \dot{z}_1 = f_1(x_1), \qquad \dot{z}_i = 0 \text{ for } i\ge 2,
    \end{equation}
    i.e., take $D = A = \diag(1,0,\cdots,0)$. Such control systems is denoted as $\Fcal_{R,1}$ ($R$ means restriction and $1$ means first coordinate). Clearly $\Fcal_{R,1}$ is closed under composition. Moreover, ${\Acal}_{\Fcal_{R,1}}$ is coincide with the following set
    \begin{equation}
        \Acal_{\Rcal} \times \{\id\} \times \{\id\} \cdots \{\id\},
    \end{equation}
    where $\Rcal$ is a one dimensional control family
    \begin{equation}
        \Rcal = \{g(x)\,|\,g(x) =f_1(x,0,\cdots,0) \quad f\in\Fcal\}.
    \end{equation}
    Since $\Fcal$ contains a well function, so does $\Rcal$. By Proposition~\ref{prop:ua_sufficiency} we can find $\tilde h \in \Rcal$ such that $|\tilde h - h^{\alpha}|_{C([0,1])} \le \frac{\varepsilon}{n}$.

    By composition we know that $\tilde H := (\tilde h, \tilde h, \cdots, \tilde h)$ is in ${\Acal}_{\Fcal}$, and $|\tilde H - H^{\alpha}|_{C(K)} \le \varepsilon$.
    \end{proof}

    We use aforementioned notations, $\psi$ for transport $p_{\i}$ to $\varphi_{\i}$, and $\tilde H$ is the approximate contraction mapping, satisfying the following estimates:
    \begin{equation}
        |\psi(p_{\i}) - \varphi_{\i}| \le \varepsilon_1 < 1,
    \end{equation}
    \begin{equation}
        \|\tilde H - H^{\alpha}\|_{C(K)} \le \varepsilon_2 <1.
    \end{equation}
    Here $\varepsilon_1$ and $\varepsilon_2$ is to be determined later.

    Now we estimate the error of $\|\psi \circ \widetilde H - \varphi\|_{L^p(K)}$.
    For any $\alpha$ we can write
    \begin{equation}
    \begin{split}
    \|\psi \circ \widetilde H - \varphi\|_{L^p(K)} ~\le~ &  \|\psi \circ \widetilde H- \hat{\varphi}\|_{L^p(K)} + \|\hat{\varphi} - \varphi\|_{L^p(K)}  \\
   ~\le~ & \|\psi \circ \widetilde H - \hat{\varphi}\|_{L^p(K^{\alpha})} + \|\psi \circ \widetilde H - \hat{\varphi}\|_{L^p(K \setminus K^{\alpha})} + \frac{\varepsilon}{2} \\~ \le~ & J_1 + J_2 + \frac{\varepsilon}{2}.
    \end{split}
    \end{equation}

    \textbf{Estimation of $J_1$.}
    \begin{equation}
    \label{eq:I}
    \begin{split}
    J_1  =
    &   \|\psi \circ \widetilde H - \psi \circ H^{\alpha} \|_{L^p(K^{\alpha})} + \| \psi \circ H^{\alpha} - \hat{\varphi} \|_{L^p(K^{\alpha})} \\ \le &
    \omega_{\psi}(\|\tilde H - H^{\alpha}\|_{C(K)}) +
    \sum_{\i} |\psi(p_\i) - \varphi_\i|\cdot|{\square_{\i}}|^{1/p}
    \\ \le &
    \omega_{\psi}(\|\tilde H - H^{\alpha}\|_{C(K)}) + N^{n-n/p}\varepsilon_1 \\ \le &
    \omega_{\psi}(\varepsilon_2) + N^{n-n/p}\varepsilon_1.
    \end{split}
    \end{equation}

    \textbf{Estimation of $J_2$.}
    Denote $\widetilde K = [-1,2]^n$ as an enlarged cube. We have
    \begin{equation}
    \label{eq:J}
    \begin{split}
        J_2
        \le & |K\setminus K^{\alpha}|\cdot\bigg(\diam (\psi(\tilde K))
        + \|\hat{\varphi}\|_{L^{\infty}(K)} \bigg).
    \end{split}
    \end{equation}

We choose $\psi$  such that $\varepsilon_1 \le \frac{\varepsilon}{8}$, $\alpha$ such that
\begin{align}
    |K \setminus K^{\alpha}| \le \bigg(\diam (\psi(\tilde K)) + \|\hat{\varphi}\|_{L^{\infty}(K)} \bigg)^{-1} \frac{\varepsilon}{4},
\end{align}
and finally $\tilde H$ such that $\omega_{\psi}(\varepsilon_2) \le \frac{\varepsilon}{8}$.
Therefore we have $\|\psi \circ \widetilde H - \varphi\| \le \varepsilon$, take $\tilde \varphi = \psi \circ \widetilde H \in {\Acal}_{\Fcal}$ yielding the result.
\end{proof}

As in the 1D case, Proposition~\ref{prop:high-dim-result} together with Corollary~\ref{cor:transformation} imply Theorem~\ref{thm:main}.

\subsection{Approximation Results in Tensor-Product Type Dynamical Systems}

Sometimes, we are interested in control families generated by a tensor products. Such control families have the advantage that it can be parameterized by scalar functions of one variable, hence may allow for greater flexibility in analysis and practice. In this last section, we give some results that apply specifically to tensor product control families. Let us denote
\begin{equation}
    \Pi \Fcal = \{ f(x) = (g(x_1), g(x_2), \cdots, g(x_n)): g \in \Fcal\},
\end{equation}
where $\Fcal$ is a one dimensional control family.

As in the results in higher dimensions, we may wish consider the $n$ dimensional control family $\Fcal(n)$, which is the smallest set containing $\Pi\Fcal$ that is also invariant under $f(\cdot) \mapsto Df(A\cdot + b)$, where $D, A$ are diagonal matrices. However,
all functions $\psi$ in ${\Acal}_{\Fcal}$ are separable: $\psi = (\psi_1(x_1), \cdots, \psi_n(x_n))$. Moreover, we can deduce from the 1D results that $\psi_i$ is continuously increasing. Clearly, this $\Fcal(n)$ has limited approximation ability. Instead, we will relax the requirement that $A$ is diagonal so that it can be any matrix, leading to a stronger version of the restricted invariance requirement in Theorem~\ref{thm:main}. This then leads to the following approximation result:

\begin{proposition}
\label{prop:tensor-product}
Suppose $\Fcal(n)$ and $\Fcal$ satisfies
\begin{enumerate}
    \item $\Fcal(n) $ contains $\Pi \Fcal$;
    \item $\mathcal F(n)$ is invariant under $f(\cdot) \mapsto Df(A\cdot + b)$, where $D$ is any diagonal matrix, $A$ is any matrix, $b$ is any $\R^d$ vectors;

    \item $\overline{{\Acal}_{\Fcal}}$ contains all continuous and increasing functions from $\R$ to $\R$.

\end{enumerate}
Then for any $\varepsilon > 0, p \in (1,\infty]$ and compact set $K$, and $\varphi \in C(\R^n)$, we have $\tilde \varphi \in {\Acal}_{\Fcal(n)}$, such that $\|\varphi - \tilde \varphi\|_{L^p(K)} \le \varepsilon$.
\end{proposition}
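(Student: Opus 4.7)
The plan is to mirror the proof of Proposition~\ref{prop:high-dim-result}, replacing the well-function-driven rearrangement argument with a direct use of condition~3 together with the strong coordinate-mixing allowed by condition~2. By Corollary~\ref{cor:transformation}, it suffices to show that any continuous $\varphi:\R^n\to\R^n$ can be approximated in $L^p(K)$ by flow maps in ${\Acal}_{\Fcal(n)}$. As in Section~\ref{sec:res_nd}, I would first approximate $\varphi$ by a piecewise-constant function $\hat\varphi=\sum_{\i}\varphi_{\i}\chi_{\square_{\i}}$ on a grid partition of $K$, and then build the approximant in the factored form $\psi\circ\widetilde H$, where $\widetilde H$ is an approximate contraction sending each cube $\square_{\i}^{\alpha}$ close to its corner $p_{\i}$ and $\psi$ transports the corners $\{p_{\i}\}$ to the prescribed values $\{\varphi_{\i}\}$.

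The contraction $\widetilde H$ is easy to build using the tensor-product family $\Pi\Fcal$. The map $H^{\alpha}(x)=(h^{\alpha}(x_1),\dots,h^{\alpha}(x_n))$ is separable with $h^{\alpha}$ continuous and increasing, so condition~3 supplies $g_1,\dots,g_K\in\Fcal$ and times $t_1,\dots,t_K$ such that $\varphi^{g_K}_{t_K}\circ\cdots\circ\varphi^{g_1}_{t_1}$ approximates $h^{\alpha}$ on a prescribed compact. Lifting each $g_k$ to $(g_k(x_1),\dots,g_k(x_n))\in\Pi\Fcal$ applies the same one-dimensional dynamics to every coordinate simultaneously, and the corresponding $n$-dimensional composition is exactly the separable map $(h^{\alpha}(x_1),\dots,h^{\alpha}(x_n))$, which therefore lies in $\overline{{\Acal}_{\Pi\Fcal}}\subseteq\overline{{\Acal}_{\Fcal(n)}}$.

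For the transport $\psi$, I would prove the analogue of Lemma~\ref{lemma:high-dim}: given distinct $x^{(1)},\dots,x^{(m)}\in\R^n$, arbitrary targets $y^{(1)},\dots,y^{(m)}\in\R^n$, and $\varepsilon>0$, there exists $\psi\in{\Acal}_{\Fcal(n)}$ with $|\psi(x^{(k)})-y^{(k)}|\le\varepsilon$ for all~$k$. Following the scheme of Lemma~\ref{lemma:perturbation-argument}, a perturbation step first makes all $m$ points have pairwise distinct coordinates: for a pair sharing coordinate $i$ but differing in coordinate $j$, the dynamics $\dot z_i = d\,g(z_j+b)$ lies in $\Fcal(n)$ by condition~2 (since $A$ may be any matrix), and for any non-constant $g\in\Fcal$ (which must exist under condition~3, as purely constant driving forces yield only translations) it separates the $i$-th coordinates for generic $b$ and small~$t$. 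After this reduction, I would process the points iteratively by a parking strategy: each $x^{(k)}$ is moved, coordinate by coordinate, to a location with very large coordinates via one-dimensional flows on a single coordinate (obtained by taking $A=D=e_ie_i^{\top}$ in condition~2 and invoking condition~3 for the 1D flow itself); the remaining $m-1$ points are transported by induction using 1D flows that can be taken to be the identity outside a large compact, so that the parked points are essentially undisturbed; finally the parked points are brought back to their targets by the same single-coordinate flows.

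The main obstacle will be the monotonicity constraint on a single-coordinate 1D flow, which must be increasing in $z_i$ and so cannot by itself move a point past a neighbour whose $i$-coordinate sits in the wrong place relative to the target. Whenever the target or an intermediate parking location lies on the wrong side of such a neighbour, I would first displace the moving point in an auxiliary coordinate $j$ using the mixing shear $\dot z_j = d\,g(A_j z + b)$ available from condition~2, then execute the 1D flow in coordinate~$i$, and finally restore coordinate~$j$. Showing that this displace-move-restore procedure can be arranged to arbitrary precision inside ${\Acal}_{\Fcal(n)}$, and that the accumulated errors can be absorbed into $\varepsilon$ via Gr\"{o}nwall-type estimates of the same flavour as those in the proof of Proposition~\ref{prop:convexhull}, is the principal technical step of the argument.
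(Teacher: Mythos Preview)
Your overall scaffold matches the paper's: reduce to piecewise constants, build a separable contraction $\widetilde H$ from $\Pi\Fcal$ and condition~3, then construct a transport $\psi$ sending grid corners to prescribed targets. The contraction step and the perturbation step (separating coordinates via a shear $\dot z_i=g(z_j+b)$) are essentially what the paper does.

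Where you diverge is the transport step, and here your plan is both more complicated and not quite closed. The paper avoids any parking or induction by first proving a clean lemma: for any $g\in{\Acal}_{\Fcal}$, the shear
\[
(x_1,x_2,\dots,x_n)\ \longmapsto\ (x_1+g(x_2),\,x_2,\dots,x_n)
\]
lies in ${\Acal}_{\Fcal(n)}$. This is obtained by running the coupled dynamics $\dot z_1=\dot z_2=f(z_2)$ (via $D=\diag(1,1,0,\dots)$ and $A$ with $A_{12}=A_{22}=1$), which sends $(x_1,x_2)\mapsto(x_1+g(x_2)-x_2,\,g(x_2))$, and then undoing the motion in the second slot. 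With this shear in hand, the paper transports all $m$ points simultaneously: any prescribed increments $\{y^k_1-x^k_1\}$ on $m$ distinct abscissae $\{x^k_2\}$ can be written as $P(x^k_2)-Q(x^k_2)$ with $P,Q$ continuous and increasing, hence approximable via condition~3, so two applications of the shear give $[\eta(x^k)]_1\approx y^k_1$ for every $k$ at once.

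Your parking strategy, by contrast, hinges on ``1D flows on a single coordinate'' obtained from $A=D=e_ie_i^{\top}$, i.e.\ $\dot z_i=g(z_i)$. These are monotone in $z_i$, so displacing the moving point in an auxiliary coordinate $z_j$ beforehand does \emph{not} help: the subsequent flow in $z_i$ does not see $z_j$ and still cannot pass neighbours. The only way your displace--move--restore can work is if the ``move'' step itself is a shear $\dot z_i=g(z_j)$, but then you are essentially using the paper's mechanism and the parking and induction become unnecessary scaffolding. I would replace your transport argument with the shear lemma and the $P-Q$ trick; it is shorter, handles all points in one pass per coordinate, and sidesteps the monotonicity obstacle entirely.
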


\begin{remark}
  This result is not a corollary of Proposition~\ref{prop:high-dim-result}, even if $\chbar{(\Fcal)}$ contains a well function. Since in this case the zero set of the tensor product of the well function may be unbounded.
\end{remark}

Similar to estimation of Proposition~\ref{prop:high-dim-result}, we omit the main body of this proof but only restate preparations  about $\psi$ and $\widetilde H$.

\paragraph{Estimations on $\widetilde H$.}
\begin{lemma}
\label{lemma:tensor-contract}
    Suppose $\Fcal(n)$ and $\Fcal$ satisfies conditions in Proposition~\ref{prop:tensor-product}. For a given tolerance $\varepsilon_1>0$, there exists a flow map $\widetilde H \in {\Acal}_{\Fcal(n)}$ such that $|\widetilde H - H^{\alpha}|_{C(K)} \le \varepsilon_1$.
\end{lemma}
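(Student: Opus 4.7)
The plan is to exploit the tensor-product structure of $H^\alpha$ and reduce the $n$-dimensional contraction to its one-dimensional scalar version, where hypothesis~3 of Proposition~\ref{prop:tensor-product} applies directly to the continuous, increasing function $h^\alpha$.

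First I would use hypothesis~3 to pick $\tilde h \in {\Acal}_\Fcal$ approximating $h^\alpha$ uniformly on $[0,1]$ to within an accuracy $\delta>0$ to be fixed at the end. By the definition of the attainable set, $\tilde h$ admits a finite representation $\tilde h = \varphi^{g_k}_{\tau_k}\circ\cdots\circ\varphi^{g_1}_{\tau_1}$ with each scalar control $g_i \in \Fcal$.

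Next I would lift each scalar control to an $n$-dimensional control via the tensor product $\tilde f_i(z) := (g_i(z_1),\dots,g_i(z_n))$, which belongs to $\Pi\Fcal \subset \Fcal(n)$ by hypothesis~1. The key observation is that the ODE $\dot z = \tilde f_i(z)$ decouples into $n$ independent scalar equations $\dot z_j = g_i(z_j)$, so its time-$\tau_i$ flow map is precisely the coordinate-wise application of $\varphi^{g_i}_{\tau_i}$. Composing these in the original order yields
\begin{equation}
    \widetilde H := \varphi^{\tilde f_k}_{\tau_k} \circ\cdots\circ \varphi^{\tilde f_1}_{\tau_1} \in {\Acal}_{\Fcal(n)},
    \qquad
    \widetilde H(x) = (\tilde h(x_1),\dots,\tilde h(x_n)).
\end{equation}

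The error estimate is then immediate from the componentwise structure:
\begin{equation}
    |\widetilde H(x) - H^\alpha(x)|
    = \Bigl(\sum_{j=1}^{n} |\tilde h(x_j) - h^\alpha(x_j)|^2\Bigr)^{1/2}
    \le \sqrt{n}\,\|\tilde h - h^\alpha\|_{C([0,1])}
    \le \sqrt{n}\,\delta
\end{equation}
for every $x \in K = [0,1]^n$, and choosing $\delta = \varepsilon_1/\sqrt{n}$ gives the desired bound. There is essentially no substantive obstacle: hypothesis~1 supplies the tensor-product lift keeping us inside ${\Acal}_{\Fcal(n)}$, the decoupling of the diagonal ODE makes composition commute with coordinate evaluation, and hypothesis~3 handles the one-dimensional approximation. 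Note that the stronger linear-part invariance in hypothesis~2 is not consumed here; it will be needed elsewhere in the proof of Proposition~\ref{prop:tensor-product} to construct the transport map $\psi$ that rearranges grid centers to their target values.
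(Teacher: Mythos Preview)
Your proposal is correct and follows essentially the same approach as the paper: the paper's proof is a one-line remark that the claim is ``straightforward from the definition of the tensor product control family and $\overline{{\Acal}_\Fcal}$ contains all continuous increasing functions,'' and your argument is exactly the intended elaboration of that sentence---approximate $h^\alpha$ in one dimension via hypothesis~3, lift each control to $\Pi\Fcal\subset\Fcal(n)$ via hypothesis~1, and use the coordinate decoupling of the tensor-product ODE to recover $\widetilde H(x)=(\tilde h(x_1),\dots,\tilde h(x_n))$.
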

\begin{proof}
This is straightforward from the definition of the tensor product control family and $\overline{{\Acal}_\Fcal}$ contains all continuous increasing functions.
\end{proof}

\paragraph{Estimations on $\psi$.}
Before the main estimate, we first show a useful lemma.

\begin{lemma}
    \label{lemma:add-ci}
    Suppose that $g \in {\Acal}_{\Fcal}$, then $(x_1, x_2, \cdots, x_n) \mapsto (x_1 + g(x_2), x_2 \cdots, x_n)$ is in ${\Acal}_{\Fcal(n)}$.
\end{lemma}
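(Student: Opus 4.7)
The plan is to decompose the target map $\sigma_g(x) := (x_1 + g(x_2), x_2, \ldots, x_n)$ into the three-step composition $\sigma_g = \phi_3 \circ \phi_2 \circ \phi_1$, where $\phi_1(x) = (x_1, g(x_2), x_3, \ldots, x_n)$ applies $g$ in the second coordinate, $\phi_2(z) = (z_1 + z_2, z_2, z_3, \ldots, z_n)$ is the elementary shear in the $(z_1, z_2)$-plane, and $\phi_3(u) = (u_1, g^{-1}(u_2), u_3, \ldots, u_n)$ inverts $g$ in the second coordinate. A direct trace $(x_1, x_2) \mapsto (x_1, g(x_2)) \mapsto (x_1 + g(x_2), g(x_2)) \mapsto (x_1 + g(x_2), x_2)$ confirms this decomposition, and since attainable sets are closed under composition it suffices to place each piece in $\Acal_{\Fcal(n)}$ (or its closure).

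For $\phi_1$ and $\phi_3$ the construction is direct. Writing $g$ as a concatenation of elementary one-dimensional flows driven by a piecewise-constant control $f_t \in \Fcal$, I would lift each step to the $n$-dimensional vector field $(0, f_t(z_2), 0, \ldots, 0)$, which belongs to $\Fcal(n)$ via the tensor-product element $F(z) = (f_t(z_1), \ldots, f_t(z_n)) \in \Pi\Fcal$ together with the transformation $DF(Az + b)$ using $D = A = \diag(0, 1, 0, \ldots, 0)$ and $b = 0$. Running this time-inhomogeneous flow realizes $\phi_1$ exactly, and the analogous lift using the reversed control $-f_{T-t}(z_2)$ (equivalently $D = \diag(0, -1, 0, \ldots, 0)$) realizes $\phi_3$.

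The main obstacle is the elementary shear $\phi_2$. The natural vector field $(z_2, 0, \ldots, 0)$ lies in $\Fcal(n)$ only when the identity function belongs to $\Fcal$, which is not a hypothesis. My plan is to invoke condition~3 of Proposition~\ref{prop:tensor-product}, which provides a sequence $\psi_m \in \Acal_\Fcal$ converging compactly to the identity, and then realize the approximate shear $(z_1, z_2, \ldots) \mapsto (z_1 + \psi_m(z_2), z_2, \ldots)$ by induction on the number of elementary ODE segments comprising $\psi_m$; the inductive step applies the same three-piece decomposition to each constituent factor, while the base case of a single-step flow $\psi_m = \varphi^f_\tau$ with $f \in \Fcal$ is handled by coupling $z_1$ and $z_2$ through $\Fcal(n)$-vector fields whose arguments mix coordinates, exploiting that $A$ may be non-diagonal in this setting. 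Concretely, one can Trotterize the two controls with first components $f(z_1 + z_2)$ and $-f(z_1)$ (both in $\Fcal(n)$), so that a Lemma~\ref{lemma:add}-style limit produces the vector field $(f(z_1 + z_2) - f(z_1), 0, \ldots, 0)$ and, after suitable calibration of $f$ via further Trotterizations, the shear itself. Passing $m \to \infty$ and invoking Proposition~\ref{prop:convexhull} then places $\phi_2$, and hence $\sigma_g$, in $\overline{\Acal_{\Fcal(n)}}$. The hardest part will be executing this induction without circularity, controlling the Trotterization error uniformly on compact sets in the base case, and reconciling the fact that my strategy yields closure membership while the lemma states exact membership in $\Acal_{\Fcal(n)}$; fortunately the closure version is what is actually needed downstream for the $L^p$-approximation conclusion of Proposition~\ref{prop:tensor-product}.
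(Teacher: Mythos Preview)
Your decomposition $\sigma_g = \phi_3 \circ \phi_2 \circ \phi_1$ is natural, and your constructions of $\phi_1$ and $\phi_3$ via $D = A = \diag(0,1,0,\ldots,0)$ are correct and essentially coincide with one of the two steps in the paper's proof. The gap is in your treatment of the shear $\phi_2$. Your plan is circular as you partly acknowledge: to realize $(z_1, z_2) \mapsto (z_1 + \psi_m(z_2), z_2)$ you would need the lemma itself with $g = \psi_m$, and your attempt to break the circularity by induction on the number of segments pushes everything into a base case that you do not actually carry out. The Trotterized vector field $(f(z_1+z_2) - f(z_1), 0, \ldots, 0)$ does \emph{not} integrate to the map $(z_1, z_2) \mapsto (z_1 + \varphi^f_\tau(z_2), z_2)$, and ``suitable calibration of $f$ via further Trotterizations'' is not a proof. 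At best your route yields closure membership after substantial extra work; the lemma claims exact membership.

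The paper sidesteps the shear entirely with one clean idea you are missing. Take $D = \diag(1,1,0,\ldots,0)$ and $A$ with $A_{12} = A_{22} = 1$ (all other entries zero). Then for $F \in \Pi\Fcal$ with one-dimensional component $f$, the transformed field $DF(A\cdot + b)$ is $(f(z_2 + b), f(z_2 + b), 0, \ldots, 0)$, so the dynamics are $\dot{z}_1 = \dot{z}_2 = f(z_2 + b)$, $\dot{z}_i = 0$ for $i \ge 3$. The quantity $z_1 - z_2$ is conserved, while $z_2$ follows the one-dimensional flow. Running through the same sequence of controls that realizes $g$ sends $x_2 \mapsto g(x_2)$ and hence $x_1 \mapsto x_1 + (g(x_2) - x_2)$, giving the map $(x_1, x_2, \ldots) \mapsto (x_1 + g(x_2) - x_2, g(x_2), x_3, \ldots)$ exactly in $\Acal_{\Fcal(n)}$. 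A second step with $D = A = \diag(0,1,0,\ldots,0)$ (your $\phi_3$) undoes the $z_2$-evolution. This two-step argument is exact, avoids any closure or Trotter approximation, and crucially does not require producing the linear shear $\phi_2$ on its own.
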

\begin{proof}
We decompose the construction into two parts.
By setting $D = \diag(1,1,0,\cdots,0)$ and $A_{22} = A_{12} = 1$ and $A_{ij} = 0$ otherwise. We first only look at the second coordinate, knowing that for all $g \in {\Acal}_\Fcal$, there exists a finite number $S\geq 1$ of flow maps $\varphi^{t_s}_{f_s}$, $s =1,\cdots, S$, such that
\begin{align}
    g = \varphi^{t_S}_{f_S} \circ \cdots \circ \varphi^{t_1}_{f_1}.
\end{align}
Here, $D$ and $A$ are chosen in the previous paragraph, we deduce that $x_1 - x_2$ is constant under all mapping with following form:
\begin{align}
    z \mapsto Df(Az+b).
\end{align}
If we select $f_1, \cdots, f_S$, sending $x_2$ to $g(x_2)$, we know $x_1 \mapsto x_1 + g(x_2) - x_2$. Hence
\begin{align}
    (x_1,x_2,\cdots, x_n) \mapsto (x_1 + g(x_2),x_2,\cdots,x_n)
\end{align}
is in ${\Acal}_{\Fcal(n)}$.

Also, by setting $D = \diag(1,0,\cdots,0)$ and $A_{ij} = \delta_{i2} \delta_{j2}$, we know that
\begin{align}
    (x_1,x_2,\cdots, x_n) \mapsto (x_1,g^{-1}(x_2),\cdots,x_n)
\end{align}
is in ${\Acal}_{\Fcal(n)}$. Composing two parts yields the result.
\end{proof}

\begin{lemma}[Analogous to Lemma~\ref{lemma:perturbation-argument}]
	Suppose $\Fcal(n)$ and $\Fcal$ satisfies conditions in Proposition~\ref{prop:tensor-product} and $x^1,\cdots, x^m$ are distinct points. Then, given any $\varepsilon >0$, there exists a flow map $\psi \in {\Acal}_{\Fcal(n)}$, such that $|\psi(x^k) - x^k| \le \varepsilon$, and that for each $i = 1,2,\cdots n$, $[\psi(x^k)]_i$ (the $i^{\text{th}}$ coordinate of $\psi(x^k)$), $k = 1,2,\cdots, m$ are $m$ distinct real numbers.
	\end{lemma}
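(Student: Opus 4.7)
The plan is to adapt the proof of Lemma~\ref{lemma:perturbation-argument}, replacing the well-function--driven dynamics with shear maps supplied by Lemma~\ref{lemma:add-ci}. The key tool I will use is an easy extension of that lemma: for any distinct coordinate indices $i \neq i'$ and any $g \in \Acal_{\Fcal}$, the shear map $\sigma^{g}_{i,i'}: x \mapsto x + g(x_{i'}) e_i$ lies in $\Acal_{\Fcal(n)}$. This follows by repeating the construction of Lemma~\ref{lemma:add-ci} with coordinates $1,2$ permuted to $i,i'$, which is admissible since $\Fcal(n)$ is invariant under arbitrary matrices $A$ by assumption~2. Combined with assumption~3 (so any continuous increasing $g:\R \to \R$, in particular $g(t) = \tau t$ for $\tau > 0$, lies in $\overline{\Acal_{\Fcal}}$), the scaled shear $x \mapsto x + \tau x_{i'} e_i$ can be approximated uniformly on compacta by elements of $\Acal_{\Fcal(n)}$.

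With this tool in hand, I will set up an iterative reduction. Call a triple $(i, j, k)$ with $j < k$ a \emph{collision} under the current transformation $\psi$ if $[\psi(x^j)]_i = [\psi(x^k)]_i$; initially the number of collisions is at most $n\binom{m}{2}$. The plan is to execute at most this many steps, each step strictly reducing the collision count while contributing a perturbation at most half of the remaining budget. Given a collision $(i, j, k)$, since $\psi$ is a homeomorphism (being a composition of flow maps) and $x^j \neq x^k$, the images are distinct, so there exists $i' \neq i$ with $[\psi(x^j)]_{i'} \neq [\psi(x^k)]_{i'}$. I will post-compose $\psi$ with a shear of the form $x \mapsto x + \tau x_{i'} e_i$ for a small $\tau > 0$, so the $i$-th coordinate at $x^l$ becomes $[\psi(x^l)]_i + \tau [\psi(x^l)]_{i'}$ and all other coordinates are unchanged.

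The crux is choosing $\tau$ generically. For any pair $(l, l')$ with $[\psi(x^l)]_{i'} = [\psi(x^{l'})]_{i'}$, the new $i$-th coordinate difference equals the old one. For pairs with $[\psi(x^l)]_{i'} \neq [\psi(x^{l'})]_{i'}$, the new $i$-th coordinates coincide only at the single value
\begin{equation*}
    \tau_{l,l'} = \frac{[\psi(x^l)]_i - [\psi(x^{l'})]_i}{[\psi(x^{l'})]_{i'} - [\psi(x^{l})]_{i'}},
\end{equation*}
so the set of forbidden $\tau$'s is finite. Choosing $\tau > 0$ outside this finite set and within the remaining perturbation budget resolves the chosen collision (since our $i'$ was selected so that $[\psi(x^j)]_{i'} \neq [\psi(x^k)]_{i'}$), creates no new collisions in coordinate $i$, and leaves all other coordinates untouched. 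Hence the total collision count strictly decreases, and the procedure terminates after finitely many iterations.

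The main technical obstacle I anticipate is the final approximation: each shear lies a priori in $\overline{\Acal_{\Fcal(n)}}$ rather than $\Acal_{\Fcal(n)}$ itself, since $g(t)=\tau t$ must itself be approximated by some $\tilde g \in \Acal_{\Fcal}$. This is handled by taking $\tilde g$ sufficiently close to $\tau t$ on a large enough compact interval containing all relevant coordinate values, then invoking Gronwall-type stability (in the spirit of Proposition~\ref{prop:convexhull}) to control the cumulative error across the at most $n\binom{m}{2}$ composed shears. With each shear approximated finely enough, the resulting flow map in $\Acal_{\Fcal(n)}$ satisfies $|\psi(x^k) - x^k| \le \varepsilon$ and has all coordinate projections distinct, as required.
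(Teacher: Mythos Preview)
Your proposal is correct, and it takes a genuinely different route from the paper's own proof. The paper resolves a collision in coordinate $1$ (say) between $x^1$ and $x^2$ by finding a coordinate $I$ where they differ and then constructing two \emph{increasing} scalar functions $P,Q$ with prescribed values $P(x_I^1)-Q(x_I^1)$, $P(x_I^2)-Q(x_I^2)$ nonzero and of opposite sign, and $P(x_I^k)-Q(x_I^k)=0$ for the remaining points; it then applies the shear $x\mapsto x + (\widetilde P(x_I)-\widetilde Q(x_I))e_1$, with $\widetilde P,\widetilde Q\in\Acal_\Fcal$ approximating $P,Q$, via Lemma~\ref{lemma:add-ci}. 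In contrast, you use a single \emph{linear} shear $x\mapsto x+\tau x_{i'}e_i$ and a generic choice of $\tau$ avoiding finitely many bad values. Your route is simpler: it replaces an interpolation problem (finding increasing $P,Q$ matching prescribed differences at the $x_I^k$) by a one-parameter transversality argument, and a linear $g$ is trivially in $\overline{\Acal_\Fcal}$. The paper's route, on the other hand, gives more control: ideally only the two colliding points move in coordinate $i$, which makes the ``no new collisions'' verification immediate without a genericity argument.

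One small simplification: the Gr\"onwall-type stability you mention at the end is unnecessary. Each shear $\sigma^{\tilde g}_{i,i'}$ is an explicit map (not an ODE flow whose error compounds), and you only care about its values at the finitely many images of $x^1,\dots,x^m$. So it suffices, at each step, to choose $\tilde g\in\Acal_\Fcal$ close enough to $t\mapsto\tau t$ on a compact interval containing the relevant $i'$-coordinates that (i) the collision $(i,j,k)$ remains resolved, (ii) all previously distinct $i$-coordinate pairs remain distinct, and (iii) the per-point displacement stays within the current budget. All three are open conditions on $\tilde g$, so such a choice exists. Doing this step-by-step already yields $\psi\in\Acal_{\Fcal(n)}$ directly, with no separate ``final approximation'' pass.
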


	\begin{proof}
	Similar to Lemma~\ref{lemma:perturbation-argument}, we prove that if $x_1^1$ and $x_1^2$, then we can find a $\eta$ that separates them. The three requirements are the same as we established in Lemma~\ref{lemma:perturbation-argument}, hence omitted here.

    Suppose $x_I^1 \neq x_I^2$ for some $I$. We can find two continuously increasing function $P(\cdot)$ and $Q(\cdot)$ such that
    \begin{align}
        \begin{split}
            P(x_I^1) -Q(x_I^1)&= 1, \\
            P(x_I^2) -Q(x_I^2)&= -1, \\
            P(x_I^k) -Q(x_I^k)&= 0,
        \end{split}
    \end{align}
    for the other $k$'s. By assumptions on $\Fcal$, we can find $\widetilde P(\cdot)$ and $\widetilde Q(\cdot)$ such that
    \begin{align}
        \|P(\cdot) - \widetilde P(\cdot) \|_{C([0,1])}, \,\, \|Q(\cdot) - \widetilde Q(\cdot)\|_{C([0,1])} \le \frac{\min(d,1)}{4}.
    \end{align}

By Lemma~\ref{lemma:tensor-contract}, we know that $(x_1,\cdots, x_n) \mapsto (x_1 + \widetilde P(x_I) - \widetilde Q(x_I), \cdots x_n)$ is in ${\Acal}_{\Fcal(n)}$. It can be checked that this is our desired $\eta$.
	\end{proof}

	\begin{lemma}[Analogous to Lemma~\ref{lemma:n-dim-lemma}]
	Suppose $\Fcal(n)$ and $\Fcal$ satisfies conditions in Proposition~\ref{prop:tensor-product}and $x^1,\cdots, x^m$ are distinct points. Moreover we assume $x_k$ satisfies the result of Lemma~\ref{lemma:perturbation-argument}, that is, $\{ x^k_i, k=1,\dots,m\}$ are $m$ distinct real numbers for any $i$. Then, given any $\varepsilon > 0$, for $m$ target point $y^1, \cdots, y^m$, we have a flow map $\psi \in {\Acal}_{\Fcal(n)}$ such that $|\psi(x^k) - y^k| \le \varepsilon$ .
	\end{lemma}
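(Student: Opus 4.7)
My plan is to sweep through the coordinates and, at each step, use a flow map to move the current coordinate of each $x^k$ to approximately $y^k_i$, paralleling the strategy used in the proof of Lemma~\ref{lemma:n-dim-lemma} but with the well-function device replaced by the tensor-product construction already used in the preceding analog of Lemma~\ref{lemma:perturbation-argument}. The key building block I would first establish is that for any pair of coordinate indices $i\ne j$ and any continuous function $h:\R\to\R$, the map
\[
    (x_1,\dots,x_n)\;\mapsto\;(x_1,\dots,x_{i-1},\,x_i+h(x_j),\,x_{i+1},\dots,x_n)
\]
lies in $\overline{\Acal_{\Fcal(n)}}$. To prove this I would write $h=P-Q$ on the relevant compact interval with $P,Q$ continuous and strictly increasing, approximate $P,Q$ by elements of $\Acal_{\Fcal}$ using condition~3 of Proposition~\ref{prop:tensor-product}, and then realize the combined $+P-Q$ action on coordinate $i$ by exactly the mechanism already used in the analog of Lemma~\ref{lemma:perturbation-argument}: condition~2 (arbitrary diagonal $D$ with entries in $\{0,\pm 1\}$ together with arbitrary matrix $A$) plus the argument behind Lemma~\ref{lemma:add-ci}.

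Armed with this building block, I would process the coordinates $i=1,2,\dots,n$ in order. Before each step $i$, if necessary, I would apply the tensor-product analog of Lemma~\ref{lemma:perturbation-argument} to the current positions of the $x^k$, guaranteeing the existence of some index $j\ne i$ for which the current values $\{x^k_j\}_{k=1}^m$ are pairwise distinct. Then I would pick a continuous function $h_i$ interpolating $h_i(x^k_j)=y^k_i-x^k_i$ (which is possible since the $x^k_j$ are distinct) and apply a flow map from the building block approximating $(x_1,\dots,x_n)\mapsto(x_1,\dots,x_i+h_i(x_j),\dots,x_n)$. This leaves every coordinate except $i$ essentially unchanged and drives coordinate $i$ arbitrarily close to $y^k_i$.

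The composition $\psi$ of these at most $2n$ modification-and-perturbation maps will satisfy $|\psi(x^k)-y^k|\le\varepsilon$ for every $k$, provided we choose the tolerance at each step small enough. The main obstacle I anticipate is the error-bookkeeping: each perturbation invoked to restore coordinate-wise distinctness slightly disturbs the coordinates that have already been set to their targets, and each building-block application inherits an approximation error that depends on the Lipschitz constants of $\widetilde P,\widetilde Q$ and on the minimum gap among the currently-distinct $x^k_j$. Propagating these tolerances backwards from the final step to the first and making them successively smaller keeps the accumulated error below $\varepsilon$. Structurally, the argument is parallel to, and in some ways slightly simpler than, the recursive construction in Lemma~\ref{lemma:n-dim-lemma}, since here we are able to act on all $m$ points in a single coordinate simultaneously rather than one $x^k$ at a time.
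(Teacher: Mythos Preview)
Your proposal is correct and follows essentially the same approach as the paper: write the required shift on coordinate $i$ as $h=P-Q$ with $P,Q$ continuous and increasing, approximate $P,Q$ in $\Acal_{\Fcal}$ via condition~3 of Proposition~\ref{prop:tensor-product}, and realize the map $x_i\mapsto x_i+\widetilde P(x_j)-\widetilde Q(x_j)$ through the mechanism of Lemma~\ref{lemma:add-ci}. The paper's proof is terser (it only writes out the case $i=1$, $j=2$), but the content is the same. One simplification worth noting: the intermediate re-perturbation you anticipate is avoidable. Since each building-block map changes only coordinate $i$ and leaves all other coordinates exactly fixed, you can use a single never-modified coordinate (say $j=n$) as the control for steps $i=1,\dots,n-1$, and then for $i=n$ use one of the already-processed coordinates after first perturbing the targets $y^k$ so that their coordinates are also pairwise distinct (as is done in the proof of Lemma~\ref{lemma:n-dim-lemma}). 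This removes most of the error-bookkeeping you flagged as the main obstacle.
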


\begin{proof}
Similar to proof of Lemma~\ref{lemma:n-dim-lemma}, we use $x_2$ to translate $x_1$ (denoted as $\eta$). We find two one dimensional $P(\cdot)$ and $Q(\cdot)$, both continuously increasing, such that $x^k_1 + P(x^k_2) - Q(x^k_2) = y^k_1$. By assumptions on $\Fcal$ we can find $\widetilde P(\cdot)$ and $\widetilde Q(\cdot)$, such that
\begin{align}
    \|P(\cdot) - \widetilde P(\cdot) \|_{C([0,1])}, \,\, \|Q(\cdot) - \widetilde Q(\cdot)\|_{C([0,1])} \le \frac{\varepsilon}{2}.
\end{align}

Since
\begin{align}
    \eta = (x_1,\cdots, x_n) \mapsto (x_1 + \widetilde P(x_2) - \widetilde Q(x_2), \cdots x_n)
\end{align}
is in ${\Acal}_{\Fcal(n)}$, and $|[\eta(x^k)]_1 - y_1^k| \le \varepsilon$, we conclude that $\eta$ satisfies our requirement.
\end{proof}

Combining these two lemmas, we obtain the following result from which we can deduce Prop.~\ref{prop:tensor-product}
\begin{lemma}[Analogous to Lemma~\ref{lemma:high-dim}]
		\label{prop:high-dim-tensor}
		Suppose $\Fcal(n)$ and $\Fcal$ satisfy the conditions in Proposition~\ref{prop:tensor-product}. Let $\varepsilon>0$ and $x^1, \cdots, x^m, y^1, \cdots, y^m \in \R^n$ be such that $\{x^k\}$ are distinct points. Then there exists $\psi \in A_{\mathcal F(n)}$ such that $|\psi(x^k)-y^k| \le \varepsilon$ for all $k = 1,\dots,m$.
\end{lemma}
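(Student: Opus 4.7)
The plan is to combine the two immediately preceding lemmas (the perturbation analog and the n-dim analog) in exactly the same fashion as Lemma~\ref{lemma:high-dim} is obtained from Lemmas~\ref{lemma:perturbation-argument} and~\ref{lemma:n-dim-lemma}. The first building block supplies, with arbitrarily small cost, a flow map in ${\Acal}_{\Fcal(n)}$ that slightly perturbs the $m$ points so that the resulting points have pairwise distinct values in every coordinate direction. The second building block, applied to any chosen coordinate index after a trivial relabeling, supplies a flow map in ${\Acal}_{\Fcal(n)}$ that drives a single chosen coordinate of the $m$ points to within any prescribed tolerance of the targets $y^k_i$, while leaving all other coordinates unchanged. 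Both blocks live in ${\Acal}_{\Fcal(n)}$, which is closed under composition, so any finite composition remains admissible.

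With these two blocks, the construction is iterative. Fix a schedule of tolerances $\delta_0, \delta_1, \dots, \delta_n > 0$ with $\sum_{i=0}^n \delta_i < \varepsilon$. Start by applying the perturbation block with tolerance $\delta_0$, obtaining a flow map $\eta_0$ under which the images of the $x^k$'s have distinct values in every coordinate. Then, for $i = 1, \dots, n$, perform two sub-steps in sequence: first, apply the coordinate-adjustment block to bring the $i$-th coordinate of the current points to within $\delta_i/2$ of the target values $y^k_i$; second, apply the perturbation block again with a tolerance small enough not to disturb any coordinate by more than what remains in its budget, restoring the pairwise distinctness needed to enable the next coordinate-adjustment step. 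Compose all the flow maps produced to obtain the final $\psi \in {\Acal}_{\Fcal(n)}$, which satisfies $|\psi(x^k) - y^k| \le \varepsilon$ for every $k$.

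The main obstacle is the careful bookkeeping of tolerances. Once coordinate $i$ has been matched to within some tolerance, every subsequent re-perturbation and every subsequent adjustment of a different coordinate $j$ (which uses yet another coordinate as its driver) will slightly shift coordinate $i$, and potentially shift the driver coordinates previously adjusted. To absorb these cascading perturbations, one chooses each successive tolerance much smaller than the previous ones, for example a geometric schedule $\delta_i = \varepsilon / 2^{i+1}$, so that the total accumulated shift in any coordinate is bounded by a geometric tail $\sum_{j\ge i}\delta_j$ that stays within the prescribed $\varepsilon$. Since the perturbation and adjustment blocks can each be invoked with arbitrarily small tolerance, this schedule is always realizable. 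A minor verification is that, when adjusting coordinate $i$ using coordinate $j$ as driver, the driver has $m$ distinct values --- which is exactly what the most recent perturbation step guarantees.
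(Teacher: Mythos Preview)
Your proposal is correct and uses the same two building blocks as the paper. The paper's combination is simpler, though: the tensor analog of Lemma~\ref{lemma:n-dim-lemma} is \emph{stated} for the full vector target $|\psi(x^k)-y^k|\le\varepsilon$, so once the perturbation lemma has produced points with pairwise distinct values in every coordinate, a single application of the second lemma already finishes the proof. The paper therefore treats the combination as a direct two-step composition with no iteration at this level.

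Your coordinate-by-coordinate iteration with interleaved re-perturbations is not wrong, but it is unnecessary, and the reason you give for needing it is mistaken. The one-coordinate adjustment map constructed in the proof of the second lemma has the form
\[
(x_1,\dots,x_n)\ \longmapsto\ \bigl(x_1,\dots,x_{j-1},\,x_j+\widetilde P(x_\ell)-\widetilde Q(x_\ell),\,x_{j+1},\dots,x_n\bigr),
\]
which leaves every coordinate other than $j$ \emph{exactly} fixed. Hence adjusting coordinate $j$ never disturbs a previously matched coordinate $i$, and pairwise distinctness in all untouched directions is preserved automatically; no re-perturbation is needed between coordinate steps. The only drift in your scheme comes from the re-perturbation steps you inserted yourself. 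If you drop them, the ``cascading perturbations'' disappear and the bookkeeping collapses to the trivial composition the paper has in mind.
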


\bibliographystyle{apa}
\bibliography{library}

\begin{thebibliography}{}

\bibitem[\protect\astroncite{Arnold}{1973}]{Arnold1973Ordinary}
Arnold, V.~I. (1973).
\newblock {\em Ordinary differential equations}.

\bibitem[\protect\astroncite{Balachandran and Dauer}{2002}]{Balachandran2002}
Balachandran, K. and Dauer, J.~P. (2002).
\newblock {Controllability of nonlinear systems in Banach spaces: A survey}.
\newblock {\em Journal of Optimization Theory and Applications}, 115(1):7--28.

\bibitem[\protect\astroncite{Bao et~al.}{2019}]{bao2019approx}
Bao, C., Li, Q., Tai, C., Wu, L., and Xiang, X. (2019).
\newblock Approximation analysis of convolutional neural networks.
\newblock {\em Submitted.}

\bibitem[\protect\astroncite{Brenier and Gangbo}{2003}]{brenier2003p}
Brenier, Y. and Gangbo, W. (2003).
\newblock $l^{p}$ approximation of maps by diffeomorphisms.
\newblock {\em Calculus of Variations and Partial Differential Equations},
  16(2):147--164.

\bibitem[\protect\astroncite{Chang et~al.}{2018}]{Chang2018}
Chang, B., Meng, L., Haber, E., Ruthotto, L., Begert, D., and Holtham, E.
  (2018).
\newblock {Reversible architectures for arbitrarily deep residual neural
  networks}.
\newblock In {\em Thirty-Second AAAI Conference on Artificial Intelligence}.

\bibitem[\protect\astroncite{Chang et~al.}{2017}]{Chang2017}
Chang, B., Meng, L., Haber, E., Tung, F., and Begert, D. (2017).
\newblock {Multi-level residual networks from dynamical systems view}.
\newblock {\em arXiv preprint arXiv:1710.10348}.

\bibitem[\protect\astroncite{Chen et~al.}{2018}]{Chen2018}
Chen, T.~Q., Rubanova, Y., Bettencourt, J., and Duvenaud, D.~K. (2018).
\newblock {Neural ordinary differential equations}.
\newblock In {\em Advances in neural information processing systems}, pages
  6571--6583.

\bibitem[\protect\astroncite{Chukwu and Lenhart}{1991}]{Chukwu1991}
Chukwu, E.~N. and Lenhart, S.~M. (1991).
\newblock {Controllability questions for nonlinear systems in abstract spaces}.
\newblock {\em Journal of Optimization Theory and Applications},
  68(3):437--462.

\bibitem[\protect\astroncite{Daubechies}{1992}]{daubechies1992ten}
Daubechies, I. (1992).
\newblock {\em Ten lectures on wavelets}, volume~61.
\newblock Siam.

\bibitem[\protect\astroncite{Daubechies et~al.}{2019}]{daubechies2019nonlinear}
Daubechies, I., DeVore, R., Foucart, S., Hanin, B., and Petrova, G. (2019).
\newblock Nonlinear approximation and (deep) relu networks.
\newblock {\em arXiv preprint arXiv:1905.02199}.

\bibitem[\protect\astroncite{DeVore}{1998}]{devore1998nonlinear}
DeVore, R.~A. (1998).
\newblock Nonlinear approximation.
\newblock {\em Acta numerica}, 7:51--150.

\bibitem[\protect\astroncite{Dupont et~al.}{2019}]{dupont2019augmented}
Dupont, E., Doucet, A., and Teh, Y.~W. (2019).
\newblock Augmented neural odes.
\newblock {\em arXiv preprint arXiv:1904.01681}.

\bibitem[\protect\astroncite{E}{2017}]{weinan2017proposal}
E, W. (2017).
\newblock {A Proposal on Machine Learning via Dynamical Systems}.
\newblock {\em Communications in Mathematics and Statistics}, 5(1):1--11.

\bibitem[\protect\astroncite{E et~al.}{2019a}]{weinan2019mean}
E, W., Han, J., and Li, Q. (2019a).
\newblock {A mean-field optimal control formulation of deep learning}.
\newblock {\em Research in the Mathematical Sciences}, 6(1):10.

\bibitem[\protect\astroncite{E et~al.}{2019b}]{e_priori_2019}
E, W., Ma, C., and Wang, Q. (2019b).
\newblock A {Priori} {Estimates} of the {Population} {Risk} for {Residual}
  {Networks}.
\newblock {\em arXiv:1903.02154 [cs, stat]}.
\newblock arXiv: 1903.02154.

\bibitem[\protect\astroncite{Effland et~al.}{2020}]{effland2020variational}
Effland, A., Kobler, E., Kunisch, K., and Pock, T. (2020).
\newblock Variational networks: An optimal control approach to early stopping
  variational methods for image restoration.
\newblock {\em Journal of Mathematical Imaging and Vision}, pages 1--21.

\bibitem[\protect\astroncite{Fort}{1955}]{fort1955embedding}
Fort, M.~K. (1955).
\newblock The embedding of homeomorphisms in flows.
\newblock {\em Proceedings of the American Mathematical Society},
  6(6):960--967.

\bibitem[\protect\astroncite{Gronwall}{1919}]{gronwallNoteDerivativesRespect1919}
Gronwall, T.~H. (1919).
\newblock Note on the {{Derivatives}} with {{Respect}} to a {{Parameter}} of
  the {{Solutions}} of a {{System}} of {{Differential Equations}}.
\newblock {\em Annals of Mathematics}, 20(4):292--296.

\bibitem[\protect\astroncite{G{\"u}hring et~al.}{2019}]{guhring2019error}
G{\"u}hring, I., Kutyniok, G., and Petersen, P. (2019).
\newblock Error bounds for approximations with deep relu neural networks in
  $w^{s,p}$ norms.
\newblock {\em arXiv preprint arXiv:1902.07896}.

\bibitem[\protect\astroncite{Haber and Ruthotto}{2017}]{Haber2017}
Haber, E. and Ruthotto, L. (2017).
\newblock {Stable architectures for deep neural networks}.
\newblock {\em Inverse Problems}, 34(1):14004.

\bibitem[\protect\astroncite{Hatcher}{2000}]{Hatcher:478079}
Hatcher, A. (2000).
\newblock {\em {Algebraic topology}}.
\newblock Cambridge Univ. Press, Cambridge.

\bibitem[\protect\astroncite{He et~al.}{2016}]{He2016}
He, K., Zhang, X., Ren, S., and Sun, J. (2016).
\newblock {Deep residual learning for image recognition}.
\newblock In {\em Proceedings of the IEEE Computer Society Conference on
  Computer Vision and Pattern Recognition}.

\bibitem[\protect\astroncite{Jastrzebski et~al.}{2017}]{Jastrzebski2017}
Jastrzebski, S., Arpit, D., Ballas, N., Verma, V., Che, T., and Bengio, Y.
  (2017).
\newblock {Residual connections encourage iterative inference}.
\newblock {\em arXiv preprint arXiv:1710.04773}.

\bibitem[\protect\astroncite{Leveque}{2007}]{Leveque2007Finite}
Leveque, R.~J. (2007).
\newblock {\em Finite Difference Methods for Ordinary and Partial Differential
  Equations: Steady-State and Time-Dependent Problems}.

\bibitem[\protect\astroncite{Li et~al.}{2018}]{li2017maximum}
Li, Q., Chen, L., Tai, C., and E, W. (2018).
\newblock {Maximum Principle Based Algorithms for Deep Learning}.
\newblock {\em Journal of Machine Learning Research}, 18(1):1--29.

\bibitem[\protect\astroncite{Li and Hao}{2018}]{pmlr-v80-li18b}
Li, Q. and Hao, S. (2018).
\newblock {An Optimal Control Approach to Deep Learning and Applications to
  Discrete-Weight Neural Networks}.
\newblock In {\em Proceedings of the 35th International Conference on Machine
  Learning}, volume~80 of {\em Proceedings of Machine Learning Research}, pages
  2985--2994. PMLR.

\bibitem[\protect\astroncite{Li et~al.}{2019}]{li2019deepapprox}
Li, Q., Shen, Z., and Tai, C. (2019).
\newblock Deep approximation of functions by composition.
\newblock {\em In prepartion}.

\bibitem[\protect\astroncite{Lin and Jegelka}{2018}]{lin2018resnet}
Lin, H. and Jegelka, S. (2018).
\newblock Resnet with one-neuron hidden layers is a universal approximator.
\newblock In {\em Advances in Neural Information Processing Systems}, pages
  6169--6178.

\bibitem[\protect\astroncite{Liu and Markowich}{2019}]{liu2019selection}
Liu, H. and Markowich, P. (2019).
\newblock Selection dynamics for deep neural networks.
\newblock {\em arXiv preprint arXiv:1905.09076}.

\bibitem[\protect\astroncite{Lu et~al.}{2019}]{Lu2019}
Lu, Y., Li, Z., He, D., Sun, Z., Dong, B., Qin, T., Wang, L., and Liu, T.-Y.
  (2019).
\newblock {Understanding and Improving Transformer From a Multi-Particle
  Dynamic System Point of View}.
\newblock {\em arXiv preprint arXiv:1906.02762}.

\bibitem[\protect\astroncite{Lu et~al.}{2018}]{pmlr-v80-lu18d}
Lu, Y., Zhong, A., Li, Q., and Dong, B. (2018).
\newblock {Beyond Finite Layer Neural Networks: Bridging Deep Architectures and
  Numerical Differential Equations}.
\newblock In {\em Proceedings of the 35th International Conference on Machine
  Learning}, volume~80 of {\em Proceedings of Machine Learning Research}, pages
  3276--3285. PMLR.

\bibitem[\protect\astroncite{Lu et~al.}{2017}]{lu2017expressive}
Lu, Z., Pu, H., Wang, F., Hu, Z., and Wang, L. (2017).
\newblock The expressive power of neural networks: A view from the width.
\newblock In {\em Advances in neural information processing systems}, pages
  6231--6239.

\bibitem[\protect\astroncite{Ma et~al.}{2019}]{ma2019barron}
Ma, C., Wu, L., et~al. (2019).
\newblock Barron spaces and the compositional function spaces for neural
  network models.
\newblock {\em arXiv preprint arXiv:1906.08039}.

\bibitem[\protect\astroncite{Mallat}{1999}]{mallat1999wavelet}
Mallat, S. (1999).
\newblock {\em A wavelet tour of signal processing}.
\newblock Elsevier.

\bibitem[\protect\astroncite{Mallat}{2016}]{mallat2016understanding}
Mallat, S. (2016).
\newblock Understanding deep convolutional networks.
\newblock {\em Philosophical Transactions of the Royal Society A: Mathematical,
  Physical and Engineering Sciences}, 374(2065):20150203.

\bibitem[\protect\astroncite{Palis}{1974}]{palis1974vector}
Palis, J. (1974).
\newblock Vector fields generate few diffeomorphisms.
\newblock {\em Bulletin of the American Mathematical Society}, 80(3):503--505.

\bibitem[\protect\astroncite{Parpas and Muir}{2019}]{Parpas2019}
Parpas, P. and Muir, C. (2019).
\newblock {Predict globally, correct locally: Parallel-in-time optimal control
  of neural networks}.
\newblock {\em arXiv preprint arXiv:1902.02542}.

\bibitem[\protect\astroncite{Ron and Shen}{1997}]{ron1997affine}
Ron, A. and Shen, Z. (1997).
\newblock Affine systems inl2 (rd): the analysis of the analysis operator.
\newblock {\em Journal of Functional Analysis}, 148(2):408--447.

\bibitem[\protect\astroncite{Ruthotto and Haber}{2018}]{Ruthotto2018}
Ruthotto, L. and Haber, E. (2018).
\newblock {Deep neural networks motivated by partial differential equations}.
\newblock {\em arXiv preprint arXiv:1804.04272}.

\bibitem[\protect\astroncite{Shen et~al.}{2019a}]{Shen2019Deep}
Shen, Z., Yang, H., and Zhang, S. (2019a).
\newblock Deep network approximation characterized by number of neurons.

\bibitem[\protect\astroncite{Shen et~al.}{2019b}]{shen2019nonlinear}
Shen, Z., Yang, H., and Zhang, S. (2019b).
\newblock Nonlinear approximation via compositions.
\newblock {\em arXiv preprint arXiv:1902.10170}.

\bibitem[\protect\astroncite{Sonoda and Murata}{2017}]{Sonoda2017}
Sonoda, S. and Murata, N. (2017).
\newblock {Double continuum limit of deep neural networks}.
\newblock In {\em ICML Workshop Principled Approaches to Deep Learning}.

\bibitem[\protect\astroncite{Sonoda and Murata}{2019}]{Sonoda2019}
Sonoda, S. and Murata, N. (2019).
\newblock {Transport analysis of infinitely deep neural network}.
\newblock {\em The Journal of Machine Learning Research}, 20(1):31--82.

\bibitem[\protect\astroncite{Sun et~al.}{2018}]{Sun2018}
Sun, Q., Tao, Y., and Du, Q. (2018).
\newblock {Stochastic training of residual networks: a differential equation
  viewpoint}.
\newblock {\em arXiv preprint arXiv:1812.00174}.

\bibitem[\protect\astroncite{Sussmann}{2017}]{sussmann2017nonlinear}
Sussmann, H. (2017).
\newblock {\em Nonlinear controllability and optimal control}.
\newblock CRC Press.

\bibitem[\protect\astroncite{Tao et~al.}{2018}]{Tao2018}
Tao, Y., Sun, Q., Du, Q., and Liu, W. (2018).
\newblock {Nonlocal neural networks, nonlocal diffusion and nonlocal modeling}.
\newblock In {\em Advances in Neural Information Processing Systems}, pages
  496--506.

\bibitem[\protect\astroncite{Thorpe and van Gennip}{2018a}]{Thorpe2018}
Thorpe, M. and van Gennip, Y. (2018a).
\newblock {Deep limits of residual neural networks}.
\newblock {\em arXiv preprint arXiv:1810.11741}.

\bibitem[\protect\astroncite{Thorpe and van Gennip}{2018b}]{thorpe2018deep}
Thorpe, M. and van Gennip, Y. (2018b).
\newblock Deep limits of residual neural networks.
\newblock {\em arXiv preprint arXiv:1810.11741}.

\bibitem[\protect\astroncite{Utz}{1981}]{utz1981embedding}
Utz, W. (1981).
\newblock The embedding of homeomorphisms in continuous flows.
\newblock In {\em Topology Proc}, volume~6, pages 159--177.

\bibitem[\protect\astroncite{Veit et~al.}{2016}]{Veit2016}
Veit, A., Wilber, M.~J., and Belongie, S. (2016).
\newblock {Residual networks behave like ensembles of relatively shallow
  networks}.
\newblock In {\em Advances in neural information processing systems}, pages
  550--558.

\bibitem[\protect\astroncite{Wang et~al.}{2018}]{Wang2018}
Wang, B., Yuan, B., Shi, Z., and Osher, S.~J. (2018).
\newblock {Enresnet: Resnet ensemble via the feynman-kac formalism}.
\newblock {\em arXiv preprint arXiv:1811.10745}.

\bibitem[\protect\astroncite{Warga}{1962}]{Warga1962Relaxed}
Warga, J. (1962).
\newblock Relaxed variational problems.
\newblock {\em Journal of Mathematical Analysis and Applications},
  4(1):111--128.

\bibitem[\protect\astroncite{Zhang et~al.}{2019a}]{Zhang2019}
Zhang, D., Zhang, T., Lu, Y., Zhu, Z., and Dong, B. (2019a).
\newblock {You only propagate once: Painless adversarial training using maximal
  principle}.
\newblock {\em arXiv preprint arXiv:1905.00877}.

\bibitem[\protect\astroncite{Zhang et~al.}{2019b}]{zhang2019approximation}
Zhang, H., Gao, X., Unterman, J., and Arodz, T. (2019b).
\newblock Approximation capabilities of neural ordinary differential equations.
\newblock {\em arXiv preprint arXiv:1907.12998}.

\bibitem[\protect\astroncite{Zhang and Wang}{2018}]{Zhang2018a}
Zhang, L. and Wang, L. (2018).
\newblock {Monge-Ampere Flow for Generative Modeling}.
\newblock {\em arXiv preprint arXiv:1809.10188}.

\bibitem[\protect\astroncite{Zhang}{2009}]{zhang2009embedding}
Zhang, X. (2009).
\newblock Embedding diffeomorphisms in flows in banach spaces.
\newblock {\em Ergodic Theory and Dynamical Systems}, 29(4):1349--1367.

\bibitem[\protect\astroncite{Zhang et~al.}{2018}]{Zhang2018b}
Zhang, X., Lu, Y., Liu, J., and Dong, B. (2018).
\newblock {Dynamically unfolding recurrent restorer: A moving endpoint control
  method for image restoration}.
\newblock {\em arXiv preprint arXiv:1805.07709}.

\bibitem[\protect\astroncite{Zhou}{2018}]{zhou2018deep}
Zhou, D.-X. (2018).
\newblock Deep distributed convolutional neural networks: Universality.
\newblock {\em Analysis and Applications}, 16(06):895--919.

\end{thebibliography}

\end{document}